\documentclass{article}


\usepackage[numbers]{natbib}
\usepackage[final, nonatbib]{neurips_2019}
\bibliographystyle{plainnat}
\setcitestyle{numbers}



\usepackage[utf8]{inputenc} 
\usepackage[T1]{fontenc}    
\usepackage{hyperref}       
\usepackage{url}            
\usepackage{booktabs}       
\usepackage{amsfonts}       
\usepackage{nicefrac}       
\usepackage{microtype}      


\usepackage{indentfirst}
\usepackage{algorithm}
\usepackage{algorithmic}
\usepackage{comment}
\usepackage{amsmath, amssymb, amsthm}
\usepackage{dsfont}
\usepackage{bigints}
\usepackage{graphicx}

\usepackage[utf8]{inputenc}
\usepackage{subcaption}
\usepackage[font=footnotesize,labelfont=bf]{caption}
\usepackage{upgreek}

\usepackage{xfrac}


\usepackage{color}
\usepackage{xcolor}
\definecolor{Red}{rgb}{1.00, 0.00, 0.00}
\definecolor{Blue}{rgb}{0.00, 0.00, 1.00}

\newtheorem{lemma}{Lemma}
\newtheorem{theorem}{Theorem}
\newtheorem{proposition}{Proposition}
\newtheorem{corollary}{Corollary}

\newtheorem{definition}{Definition}
\newtheorem*{remark*}{Remark}
\newtheorem*{proposition*}{Proposition}
\newtheorem*{corollary*}{Corollary}
\newtheorem*{theorem*}{Theorem}

\newcommand{\E}{\mathbb E}

\renewcommand{\P}{\mathbb P}
\newcommand{\Q}{\mathbb Q}
\newcommand{\R}{\mathbb R}
\newcommand{\algref}{\ref}
\newcommand{\floor}[1]{\left\lfloor #1 \right\rfloor }
\newcommand{\ceil}[1]{\left\lceil #1 \right\rceil}
\newcommand{\what}{\widehat}
\newcommand{\ifrac}[2]{#1/#2}
\renewcommand{\epsilon}{\varepsilon}
\newcommand{\hati}{\what{\imath}}

\newcommand{\Disc}{\textbf{Disc}}

\newcommand{\dint}{\, \text{d}}

\newcommand{\loc}{\text{loc}}

\newcommand{\algname}[1]{{#1}}

\newcommand{\rmin}{r_{\text{min}}}
\newcommand{\rcv}{r_{\text{CV}}}

\newcommand{\abs}[1]{\left\lvert #1 \right\rvert}
\renewcommand{\leq}{\leqslant}
\renewcommand{\geq}{\geqslant}
\newcommand{\Rt}{\overline R_T}

\DeclareMathOperator{\argmax}{\arg \max}
\DeclareMathOperator{\KL}{KL}

\title{Polynomial Cost of Adaptation for $\mathcal X$-Armed Bandits}

%

\author{%
   Hédi Hadiji \\
  Laboratoire de Mathématiques d'Orsay\\
  Université Paris-Sud, Orsay, France \\
  \texttt{hedi.hadiji@math.u-psud.fr} \\
}

\begin{document}

\maketitle

\begin{abstract}
	In the context of stochastic continuum-armed bandits, we present an algorithm that adapts to the unknown smoothness of the objective function. We exhibit and compute a \emph{polynomial cost of adaptation} to the Hölder regularity for regret minimization. To do this, we first reconsider the recent lower bound of \citet{locatelli2018adaptivity}, and define and characterize admissible rate functions. Our new algorithm matches any of these minimal rate functions. We provide a finite-time analysis and a thorough discussion about asymptotic optimality.
\end{abstract}


\section{Introduction}

Multi-armed bandits are a well-known sequential learning problem. When the number of available decisions is large, some assumptions on the environment have to be made. In a vast line of work (see the literature discussion in Section~\ref{subsec:relworks}), these assumptions show up as nonparametric regularity conditions on the mean-payoff function. If this function is Hölder continuous with constant $L$ and exponent $\alpha$, and if the values of $L$ and $\alpha$ are given to the player, then natural strategies can ensure that the regret is upper bounded by
\begin{equation}\label{eq:1}
	L^{1 / (2\alpha + 1)} T^{(\alpha + 1)/(2\alpha+ 1)}\, .
\end{equation}
Of course, assuming that the player knows $\alpha$ and $L$ is often not realistic. Thus the need for \emph{adaptive} methods, that are agnostic with respect to the true regularity of the mean-payoff function. Unfortunately, \citet{locatelli2018adaptivity} recently showed that full adaptation is impossible, and that no algorithm can enjoy the same minimax guarantees as when the regularity is given to the player. We persevere and address the question:

{\centering
\emph{What can the player achieve when the true regularity is completely unknown? }\par
}

\paragraph{A polynomial cost of adaptation}
In statistics, minimax adaptation for nonparametric function estimation is a deep and active research domain. In many contexts, sharp adaptation is possible; often, an additional logarithmic factor in the error has to be paid when the regularity is unknown: this is known as the \emph{cost of adaptation}. See e.g., \citet{lepskii1991Problem}, \citet{birge1995Estimation}, \citet{massart2007Concentration} for adaptive methods, and \citet{cai2012Minimax} for a detailed survey of the topic. Under some more exotic assumptions ---see e.g., Example 3 of \citet{cai2005Adaptive} --- adapting is significantly harder: there may be a \emph{polynomial cost of adaptation}. 

In this paper, we show that in the sequential setting of multi-armed bandits, the necessary exploration forces a similar phenomenon, and we exhibit this polynomial cost of adaptation. To do so, we revisit the lower bounds of \citet{locatelli2018adaptivity}, and design a new algorithm that matches these lower bounds.

As a representative example of our results, our algorithm can achieve, without the knowledge of $\alpha$ and $L$, an unimprovable (up to logarithmic factors) regret bound of order 
\begin{equation}\label{eq:2}
L^{1/(1+ \alpha)} T^{(\alpha + 2)/(2\alpha + 2)} \, .
\end{equation}

\subsection{Related work}\label{subsec:relworks}

\paragraph{Continuum-armed bandits} 
Continuum-armed bandits, with nonparametric regularity assumptions, were introduced by \citet{agrawal1995ContinuumArmed}. \citet{kleinberg2004Nearly} established the minimax rates in the Hölder setting and introduced the CAB1 algorithm. \citet{auer2007improved} studied the problem with additional regularity assumptions under which the minimax rates are improved. Via different roads, \citet{bubeck2011Xarmed} and \citet{kleinberg2013Bandits} explored further generalizations of these types of regularity, namely the zooming dimension and the near-optimality dimension. \citet{bull2015Adaptivetreed} exhibited an algorithm that essentially adapts to some cases when the near-optimality dimension is zero.

In all these articles, the mean-payoff function needs to satisfy simultaneously two sets of regularity conditions. The first type is a usual Hölder condition, which ensures that the function does not vary too much around (one of) its maxima. The second type is a ``margin condition'' that lower bounds the number of very suboptimal arms; in the literature these are defined in many technically different ways. Adapting to the margin conditions is often possible when the Hölder regularity is known. However, all these algorithms require some prior knowledge about the Hölder regularity.

In this paper, we focus on the problem of adapting to Hölder regularity. Accordingly, we call \emph{adaptive} the algorithms that assume no knowledge of the Hölder exponent nor of the Lipschitz constant.

\paragraph{Adaptation for cumulative regret}
\citet{bubeck2011Lipschitza} introduced the problem of adaptation, and adapted to the Lipschitz constant under extra requirements. An important step was made in \citet{locatelli2018adaptivity}, where it is shown that adaptation at the classical minimax rates is impossible. In the same paper, the authors exhibited some conditions under which full adaptation is achievable, e.g., with knowledge of the value of the maximum, or when the near-optimality dimension is zero.

\paragraph{Other settings} 
For simple regret, the objections against adaptation do not hold, as the objective does not penalize exploration. Adaptation up to polylog factors is done with various (meta-)algorithms. \citet{locatelli2018adaptivity} sketch out an aggregation approach inspired by Lepski's method, while \citet{valko2013Stochastic}, \citet{grill2015Blackbox}, \citet{shang2019General} describe cross-validation methods thanks to which they adapt to the near-optimality dimension with unknown smoothness. As it turns out, this last approach yields clean results with our smoothness assumptions; we write the details in Appendix~\ref{app:simpleregret}.

Recently, \citet{krishnamurthy2019Contextual} studied continuum-armed contextual bandits and use a sophisticated aggregation scheme to derive an algorithm that adapts to the Lipschitz constant when $L \geq 1$.

\subsection{Contributions and outline}

In this paper, we fully compute the cost of adaptation for bandits with Hölder regularity. In Section~\ref{sec:setup} we discuss the adaptive (and nonadaptive) lower bounds. We take an asymptotic stance in order to precisely define the objective of adaptation. Doing so, we uncover a family of noncomparable lower bounds for adaptive algorithms (Theorem~\ref{thm:admrates}), and define the corresponding notion of optimality: admissibility.

Section~\ref{sec:alg} contains our main contribution: an admissible adaptive algorithm. We first recall the \algname{CAB1} algorithm, which is nonadaptive minimax, and use it as a building block for our new algorithm (Subsection~\ref{subsec:cab1.1}). This algorithm works in a regime-based fashion. Between successive regimes of doubling lengths, we reset the algorithm and use a new discretization with fewer arms. In order to carry information between the different stages, we use \algname{CAB1} in a clever way: besides partitioning the arm space, we add summaries of previous regimes by allowing the algorithm to play according to the empirical distributions of past plays. This is formally described in Subsection~\ref{subsec:and}. 

A salient difference with all previous approaches is that we zoom out by using fewer and fewer arms. To our knowledge, this is unique, as all other algorithms for bandits zoom in in a way that crucially depends on the regularity parameters. Another important feature of our analysis is that we adapt both to the Hölder exponent $\alpha$ and to the Lipschitz constant  $L$. On a technical level, this is thanks to the fact that we do not explicitly choose a grid of regularity parameters, which means that we implicitly handle all values $(L, \alpha)$ simultaneously.

We first give a regret bound in the known horizon case (Subsection~\ref{subsec:and}), then we provide an anytime version and we show that they match the lower bounds of adaptation (Subsection~\ref{subsec:discussion}). Finally Section~\ref{sec:proof} provides the proof of our main regret bound.

\section{Setup, preliminary discussion}\label{sec:setup}

\subsection{Notation and known results}\label{subsec:setting}

Let us reintroduce briefly the standard bandit terminology. We consider the arm space $\mathcal X = [0,1]$. The environment sets a reward function $f : \mathcal X \to [0,1]$. At each time step $t$, the player chooses an arm $X_t \in \mathcal X$, and the environment then displays a reward $Y_t$ such that $\E[Y_t \mid X_t] = f(X_t)$, independently from the past. We assume that the variables $Y_t - f(X_t)$ are ($1/4$)-subgaussian conditionnally on $X_t$; this is satisfied if the payoffs are bounded in $[0,1]$ by Hoeffding's lemma.

The objective of the player is to find a strategy that minimizes her \emph{expected cumulative (pseudo-)regret.} If $M(f)$ denotes the maximum value of $f$, the regret at time $T$ is defined as
\begin{equation}
	\overline R_T = T M(f) - \E \! \left[ \sum_{t= 1}^{T}Y_t \right] = TM(f) - \E \!\left[\sum_{t= 1}^{T} f(X_t)  \right] \, .
\end{equation}

In this paper, we assume that the function $f$ satisfies a Hölder assumption around one of its maxima:
\begin{definition}For $\alpha>0$ and $L > 0$, we denote by $\mathcal H(L, \alpha)$ the set of functions that satisfy 
\begin{equation}\label{eq:holdercondition}
\exists \,x^\star \in  [0,1] \;\; \text{s.t.} \;\;  f(x^\star) = M(f) \;\; \text{and} \;\; \forall \, x \in [0,1]\,  \quad   \abs{f(x^\star) - f(x)} \leq L\abs{x^\star - x}^\alpha \, .
\end{equation}
\end{definition}

We are interested in minimax rates of regret when the mean-payoff function $f$ belongs to these Hölder-type classes, i.e., the quantity
$
	 \displaystyle \inf_{\text{\tiny algorithms}} \sup_{f \in \mathcal H (L, \alpha)} \Rt  \, .
$

\paragraph{MOSS}Throughout this paper, we exploit discretization arguments and use a minimax optimal algorithm for finite-armed bandits: \algname{MOSS}, from \citet{audibert2009Minimax}. When run for $T$ rounds on a $K$-armed bandit problem with $(1/4)$-subgaussian noise, and when $T \geq K$, its regret is upper-bounded by $18 \sqrt{KT}$ (the improved constant is from \citet{garivier2018KLUCBswitch}).

\paragraph{Non-adaptive minimax rates} When the regularity is given to the player, for any $\alpha, L$ and $T$:
\begin{equation}
0.001 \, L^{1 / (2\alpha + 1)} T^{(\alpha + 1)/(2\alpha+ 1)} \leq \inf_{\text{\tiny algorithms}} \sup_{f \in \mathcal H (L, \alpha)} \Rt  \leq 28 \, L^{1 / (2\alpha + 1)} T^{(\alpha + 1)/(2\alpha+ 1)}\, .
\end{equation}
This is well-known since \citet{kleinberg2004Nearly}. For completeness, we recall how to derive the upper bound in Section~\ref{subsec:cab1.1}, and the lower bound in Section~\ref{subsec:lowerbounds}.

\subsection{Lower bounds: adaptation \emph{at usual rates} is not possible}\label{subsec:lowerbounds}

\citet{locatelli2018adaptivity} prove a version of the following theorem; see our reshuffled and slightly improved proof in Appendix~\ref{app:lowbounds}.
\begin{theorem*}[Variation on Th.3 from \cite{locatelli2018adaptivity}]
	Let $B > 0$ be a positive number. Let $ \alpha, \gamma >0$  and $L , \ell > 0 $ be regularity parameters that satify $\alpha \leq \gamma$ and $L \geq \ell$.
	
	Assume moreover that	 $2^{-3} \, 12^\alpha  B^{-1} \leq L \leq \ell^{1 + \alpha} \,  T^{\alpha / 2} \, 2^{(1+\alpha)(8 - 2 \gamma)} $. If an algorithm is such that
	$
	\sup_{f \in \mathcal H(\ell, \gamma)} \Rt \leq B \, , 
	$
	then the regret of this algorithm is lower bounded on $\mathcal H(L, \alpha)$: 
	\begin{equation}\label{eq:adaptivelow}
	\sup_{f \in \mathcal H(L, \alpha)} \Rt \geq 2^{-10} \,  T L^{1 / (\alpha + 1)} B^{- \alpha /(\alpha + 1)} \, .
	\end{equation}
\end{theorem*}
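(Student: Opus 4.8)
The plan is to use a standard two-point (or rather two-class) argument à la Le Cam, combining the assumed upper bound on the "smooth" class $\mathcal H(\ell,\gamma)$ with a needle-in-a-haystack construction on the "rough" class $\mathcal H(L,\alpha)$. First I would fix a length scale $h$ and partition $[0,1]$ into roughly $1/h$ intervals. Using bump functions of width $h$ and height $\sim L h^\alpha$, I build a finite family of alternatives $f_1,\dots,f_N \in \mathcal H(L,\alpha)$, where $f_i$ has its unique maximum in the $i$-th interval, and a baseline/null instance $f_0$ that should lie in the \emph{smooth} class $\mathcal H(\ell,\gamma)$ — this is exactly where the hypotheses $\alpha\le\gamma$, $L\ge\ell$ and the two-sided sandwich on $L$ get used: the constant function (or a suitably flat function) is in $\mathcal H(\ell,\gamma)$, and the perturbations are small enough that the problem is genuinely hard to distinguish within a budget controlled by $B$.

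The core mechanism: on instance $f_0$ the assumed guarantee $\sup_{\mathcal H(\ell,\gamma)}\Rt \le B$ forces the algorithm, \emph{on average over the random play on $f_0$}, to pull arms far from any given interval $i$ most of the time — quantitatively, the expected number $\E_{f_0}[T_i]$ of pulls in interval $i$ summed over $i$ is at most $T$, but more importantly the regret bound on $f_0$ limits how much mass can concentrate where a bump would hurt. Then a change-of-measure / Pinsker argument (bounding $\KL$ between the law of the observations under $f_0$ and under $f_i$ by $\E_{f_0}[T_i]\cdot$(squared gap)$\,\lesssim \E_{f_0}[T_i]\,L^2 h^{2\alpha}$) shows that on at least one $f_i$ the algorithm cannot identify the right region quickly, so it incurs regret $\gtrsim T \cdot L h^\alpha$ on the remaining horizon on that $f_i$. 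Balancing the two effects — the budget $B$ restricting exploration on $f_0$ against the per-step loss $\sim L h^\alpha$ on the bad $f_i$ — and optimizing over the free scale $h$ yields $h$ of order $(B/(TL))^{1/(\alpha+1)}$ up to constants, giving the claimed $T L^{1/(\alpha+1)} B^{-\alpha/(\alpha+1)}$; the absolute constant $2^{-10}$ comes out of the explicit bump-function constants and the Pinsker/KL bookkeeping.

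Concretely, the steps in order would be: (1) define the bumps and verify $f_i\in\mathcal H(L,\alpha)$ and $f_0\in\mathcal H(\ell,\gamma)$ using the parameter inequalities; (2) write $\Rt(f_i)$ from below in terms of $T - \E_{f_i}[T_i]$ times the suboptimality gap away from interval $i$; (3) relate $\E_{f_i}[T_i]$ to $\E_{f_0}[T_i]$ via $|\E_{f_i}[T_i]-\E_{f_0}[T_i]| \le T\sqrt{\tfrac12 \KL}$ and the decomposition $\KL \le c\,\E_{f_0}[T_i]\, (Lh^\alpha)^2$; (4) use $\sum_i \E_{f_0}[T_i]\le T$ (or the sharper consequence of the $B$-bound) to find an index $i^\star$ with $\E_{f_0}[T_{i^\star}]$ small, so that $\E_{f_{i^\star}}[T_{i^\star}]$ is also small, hence $\Rt(f_{i^\star})$ is large; (5) optimize $h$. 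The main obstacle I expect is step (3)–(4): making the change of measure and the averaging argument quantitatively tight enough — in particular feeding the hypothesis $\sup_{\mathcal H(\ell,\gamma)}\Rt\le B$ (rather than merely $\le T$) into the right place — so that the exponents $1/(\alpha+1)$ and $-\alpha/(\alpha+1)$, and not something weaker, come out; the bump constructions and the verification of membership in the Hölder classes are routine by comparison.
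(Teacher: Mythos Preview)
Your overall architecture is right and matches the paper's: a needle-in-a-haystack family $f_1,\dots,f_K$ with bumps of height $\Delta\sim Lh^\alpha$, a null $f_0$ in $\mathcal H(\ell,\gamma)$, a Pinsker-based change of measure, and averaging over $i$. But there is a genuine gap exactly where you flag it, and it is not merely a matter of bookkeeping.

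If $f_0$ is constant (or ``suitably flat''), then $\Rt(f_0)=0$ regardless of where the algorithm plays, so the hypothesis $\sup_{\mathcal H(\ell,\gamma)}\Rt\le B$ carries \emph{no} information about the location of the pulls under $f_0$. The only control you then have is $\sum_i \E_{f_0}[T_i]\le T$, and this yields only the non-adaptive minimax rate $L^{1/(2\alpha+1)}T^{(\alpha+1)/(2\alpha+1)}$, not the adaptive one. The missing idea --- and this is the crux of the paper's construction --- is that $f_0$ must itself have a peak: specifically, a bump of height $\Delta/2$ and higher regularity $(\ell,\gamma)$ located in a region $H_0$ disjoint from the needle intervals $H_1,\dots,H_K$, and this same bump is also present in every $f_i$. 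Then playing in any $H_i$ (for $i\ge 1$) under $f_0$ incurs instantaneous regret at least $\Delta/2$, so the assumption $\Rt(f_0)\le B$ gives
\[
\sum_{i=1}^K \E_{f_0}[N_{H_i}]\;\le\;\frac{2B}{\Delta}\,,
\]
which is the sharp input you were looking for in step~(4). Averaging the KL bound then gives $\tfrac{1}{K}\sum_i \KL\lesssim \Delta^2\cdot \tfrac{B}{K\Delta}=\tfrac{\Delta B}{K}$, and the constraint $\Delta B/K\lesssim 1$ leads to $h\sim (LB)^{-1/(\alpha+1)}$, hence $T\Delta\sim TL^{1/(\alpha+1)}B^{-\alpha/(\alpha+1)}$ as claimed.

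Note that your stated optimal scale $h\sim(B/(TL))^{1/(\alpha+1)}$ is incorrect --- plugging it into $T\cdot Lh^\alpha$ gives $L^{1/(\alpha+1)}T^{1/(\alpha+1)}B^{\alpha/(\alpha+1)}$, not the target --- which is a downstream symptom of not yet having the right $f_0$. Once the two-peak $f_0$ is in place, steps (1)--(5) go through essentially as you describe; the parameter conditions on $L$ in the statement are precisely what guarantee $K\ge 2$ and $\phi_0\in\mathcal H(\ell,\gamma)$.
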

\begin{remark*}[Bibliographical note]
	\citet{locatelli2018adaptivity} consider a more general setting where additional margin conditions are exploited. In our setting, we slightly improve their result by dealing with the dependence on the Lipschitz constant, and by removing a requirement on $B$.
	
	In a different context, \citet{krishnamurthy2019Contextual} show a variation of this bound where the Lipschitz constant is considered, but only in the case where $\alpha = \gamma =1$, for $\ell = 1$ and $L \geq 1$.
\end{remark*}

As explained in \citet{locatelli2018adaptivity} this forbids adaptation at the usual minimax rates over two regularity classes; we recall how in the paragraph that follows Theorem~\ref{thm:admrates}. However this is not the end of the story, as one naturally wonders what is the best the player can do. 

To further investigate this question, we discuss it asymptotically by considering the rates at which the minimax regret goes to infinity, therefore focusing on the dependence on $T$.  Our main results are completely nonasymptotic, yet we feel the asymptotic analysis of optimality is clearer.

\begin{definition}
	Let $\theta : [0, 1] \to [0, 1]$ denote a nonincreasing function. We say an algorithm \emph{achieves adaptive rates $\theta$} if
	\begin{equation*}
	\forall \, \epsilon > 0\, , \; \forall \, \alpha, \, L\,  > 0 \;, \quad \limsup_{T \to \infty} \frac{\sup_{f \in \mathcal H(L, \alpha)} \Rt}{T^{\theta(\alpha)+\epsilon} }
	< + \infty \, .
	\end{equation*}
\end{definition}
We include the $\epsilon$ in the definition in order to neglect the potential logarithmic factors.

As rate functions are not always comparable for pointwise order, the good notion of optimality is the standard statistical notion of \emph{admissibility} (akin to ``Pareto optimality'' for game-theorists).
\begin{definition}
	A rate function is said to be \emph{admissible} if it is achieved by some algorithm, and if no other algorithm achieves stricly smaller rates for pointwise order. An algorithm is admissible if it achieves an admissible rate function.
\end{definition}
We recall that a function $\theta'$ is stricly smaller than $\theta$ for pointwise order if $\theta'(\alpha) \leq \theta(\alpha)$ for all $\alpha$ and $\theta'(\alpha_0) < \theta(\alpha_0)$ for at least one value of $\alpha_0$.

It turns out we can fully characterize the admissible rate functions by inspecting the lower bounds~\eqref{eq:adaptivelow}. 
\begin{theorem}\label{thm:admrates}
	The admissible rate functions are exactly the family
	\begin{equation}\label{eq:thetam}
	\theta_m :\alpha \mapsto \max\left( m \, , \; 1 - m \frac{\alpha}{\alpha + 1}\right)  \, ,\quad  m \in [1 / 2, 1] \, .
	\end{equation}
\end{theorem}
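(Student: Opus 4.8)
The plan is to read off from the adaptive lower bound~\eqref{eq:adaptivelow} the exact constraints a rate function must satisfy to be achievable, to check that the family $\{\theta_m\}$ is precisely the set of pointwise-minimal functions meeting those constraints, and to invoke the algorithm of Section~\ref{sec:alg} for the matching upper bounds.

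\textbf{Step 1: necessary conditions.} I would first show that any rate function $\theta$ achieved by an algorithm is (a) nonincreasing --- which is built into the definition, but also forced by the inclusion $\mathcal H(L,\alpha)\subseteq\mathcal H(L,\alpha')$ for $\alpha'\leq\alpha$ (immediate from $\abs{x^\star-x}^{\alpha}\leq\abs{x^\star-x}^{\alpha'}$ on $[0,1]$), hence $\sup_{\mathcal H(L,\alpha)}\Rt\leq\sup_{\mathcal H(L,\alpha')}\Rt$ --- and (b) satisfies
\begin{equation*}
\theta(\alpha)\;\geq\;1-\theta(\gamma)\,\frac{\alpha}{\alpha+1}\,,\qquad 0<\alpha\leq\gamma\,.
\end{equation*}
For (b), fix $\alpha\leq\gamma$ and take $\ell=L$. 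Achieving $\theta$ means that for every $\epsilon>0$, $\sup_{\mathcal H(\ell,\gamma)}\Rt\leq c_\epsilon T^{\theta(\gamma)+\epsilon}$ for all large $T$; substituting $B=c_\epsilon T^{\theta(\gamma)+\epsilon}$ into the lower bound Theorem --- whose side conditions $2^{-3}12^\alpha B^{-1}\leq L\leq\ell^{1+\alpha}T^{\alpha/2}2^{(1+\alpha)(8-2\gamma)}$ hold for all $T$ large enough, since the left side vanishes and the right side diverges --- yields, through~\eqref{eq:adaptivelow}, $\sup_{\mathcal H(L,\alpha)}\Rt\geq 2^{-10}c_{\epsilon}^{-\alpha/(\alpha+1)}L^{1/(\alpha+1)}T^{1-(\theta(\gamma)+\epsilon)\alpha/(\alpha+1)}$. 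Since the same supremum is also $\leq c'T^{\theta(\alpha)+\epsilon'}$, comparing the exponents forces $1-(\theta(\gamma)+\epsilon)\tfrac{\alpha}{\alpha+1}\leq\theta(\alpha)+\epsilon'$; letting $\epsilon,\epsilon'\to0$ gives (b). The choice $\gamma=\alpha$ in (b) returns $\theta(\alpha)\geq\frac{\alpha+1}{2\alpha+1}$, which together with (a) forces $\theta\geq\tfrac12$ everywhere.

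\textbf{Step 2: the minimal solutions are the $\theta_m$.} Let $\mathcal C$ be the set of nonincreasing $\theta$ valued in $[\tfrac12,1]$ that satisfy (b). A direct check gives $\theta_m\in\mathcal C$: it is nonincreasing because $\alpha\mapsto\tfrac{\alpha}{\alpha+1}$ increases, and (b) holds because $\theta_m(\gamma)\geq m$ implies $1-\theta_m(\gamma)\tfrac{\alpha}{\alpha+1}\leq1-m\tfrac{\alpha}{\alpha+1}\leq\theta_m(\alpha)$. Conversely, for any $\theta\in\mathcal C$ set $m:=\inf_\gamma\theta(\gamma)=\lim_{\gamma\to\infty}\theta(\gamma)\in[\tfrac12,1]$; letting $\gamma\to\infty$ in (b) gives $\theta(\alpha)\geq1-m\tfrac{\alpha}{\alpha+1}$, and combined with $\theta(\alpha)\geq m$ this is exactly $\theta\geq\theta_m$ pointwise. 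Hence every pointwise-minimal element of $\mathcal C$ is one of the $\theta_m$; and each $\theta_m$ is itself minimal in $\mathcal C$, since if $\theta'\in\mathcal C$ with $\theta'\leq\theta_m$ then $m':=\inf\theta'\leq m$, so $\theta_{m'}\leq\theta'\leq\theta_m$, and on the decreasing branch $\{\alpha:\theta_m(\alpha)=1-m\tfrac{\alpha}{\alpha+1}\}$ the inequality $\theta_{m'}\leq\theta_m$ reads $1-m'\tfrac{\alpha}{\alpha+1}\leq1-m\tfrac{\alpha}{\alpha+1}$, which forces $m'=m$ and hence $\theta'=\theta_m$. (The endpoint $m=\tfrac12$ is the one saturating $\theta(\alpha)\geq\frac{\alpha+1}{2\alpha+1}$ as $\alpha\to\infty$; the endpoint $m=1$ is the degenerate rate $\theta_1\equiv1$.)

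\textbf{Step 3: conclusion and main obstacle.} Step~1 gives ``achievable $\Rightarrow\theta\in\mathcal C$''. Conversely, every $\theta\in\mathcal C$ dominates $\theta_{\inf\theta}$, which is achievable by the algorithm of Section~\ref{sec:alg} (its guarantees, of the form~\eqref{eq:2}, realize every $\theta_m$, $m\in[\tfrac12,1]$), so $\theta$ is achievable as well. Thus the achievable rate functions are exactly $\mathcal C$, and a rate function is admissible iff it is pointwise-minimal in $\mathcal C$ --- by Step~2 this is exactly the family $\{\theta_m\}_{m\in[1/2,1]}$. The part I expect to be the real work is Step~2: upgrading the single scalar inequality~\eqref{eq:adaptivelow} into a statement about the partial order --- verifying that (a)--(b) cut out precisely the up-set generated by the $\theta_m$, that the $\theta_m$ form an antichain (for $1/2\leq m'<m<1$ the curves cross, which is what makes admissibility genuinely multi-valued and produces a whole \emph{curve} of optimal tradeoffs rather than one optimal rate), and that nothing degenerates at the endpoints. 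Reducing the asymptotic claim to the finite-time Theorem, and the monotonicity bookkeeping, are routine in comparison.
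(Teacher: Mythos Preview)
Your proposal is correct and follows essentially the same route as the paper: derive the functional inequality $\theta(\alpha)\geq 1-\theta(\gamma)\tfrac{\alpha}{\alpha+1}$ from the adaptive lower bound (this is the paper's Proposition~\ref{prop:rateslow}), send $\gamma\to\infty$ to obtain $\theta\geq\theta_{m}$ with $m=\inf_\gamma\theta(\gamma)$, and invoke Corollary~\ref{cor:medzorates} for achievability. You are in fact more explicit than the paper on one point: the paper's proof shows only that every admissible $\theta$ equals some $\theta_m$, whereas you additionally verify that each $\theta_m$ is itself minimal (the antichain argument in Step~2), which is needed for the ``exactly'' in the statement and which the paper leaves implicit.
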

This theorem contains two assertions. The lower bound side states that no smaller rate function may be achieved by any algorithm. This side is derived from an asymptotic rewording of lower bound~\eqref{eq:adaptivelow}, see Proposition~\ref{prop:rateslow} stated below (proofs are in Appendix~\ref{app:proofs}). The second statement is that the $\theta_m$'s are indeed achieved by an algorithm, which is the subject of Section~\ref{subsec:and}.

Figure~\ref{fig:curves} illustrates how these admissible rates compare to each other, and to the usual minimax rates.
\begin{figure}[H]
	\center
	\hspace{-1cm}
	\includegraphics[width=10cm]{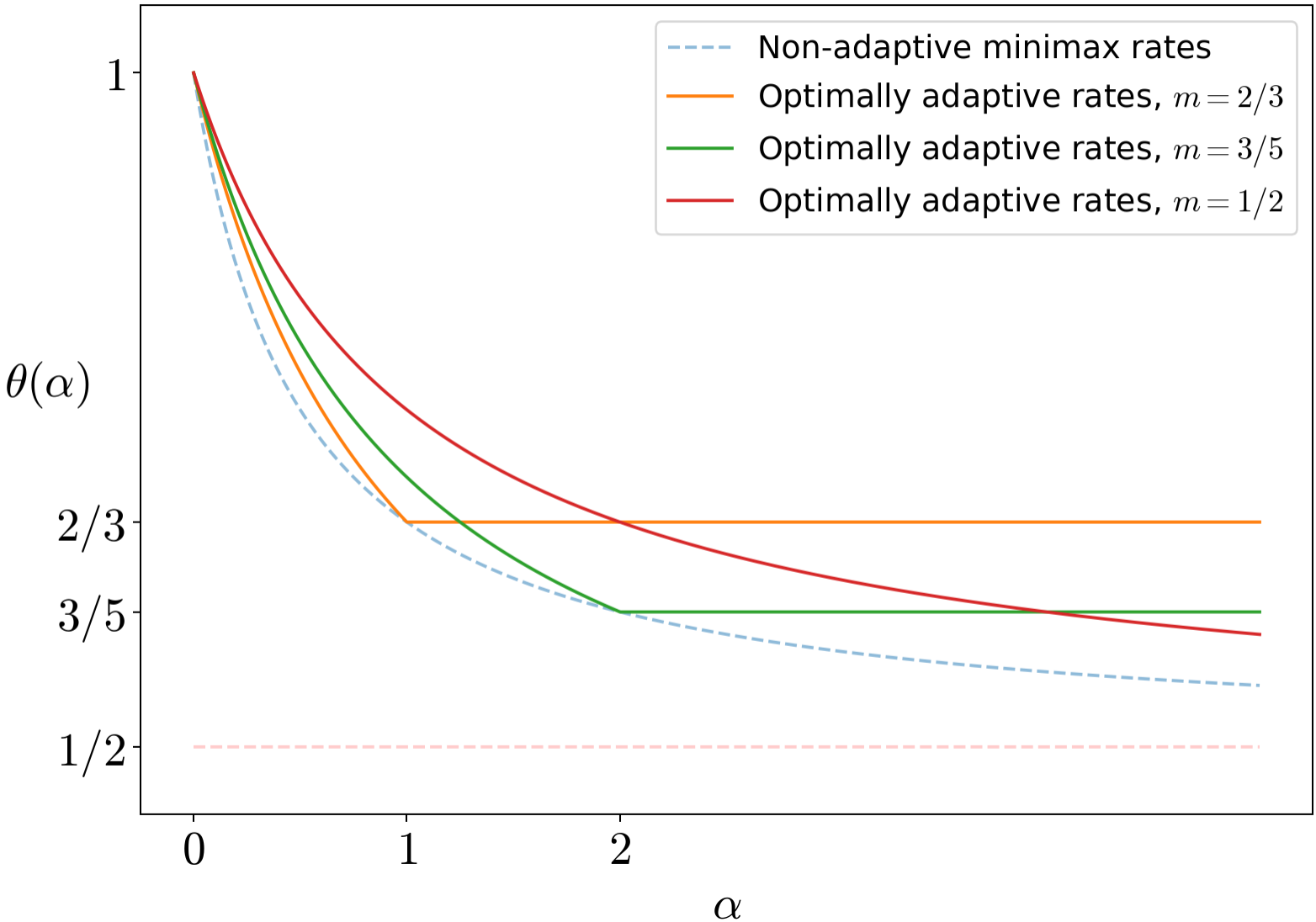}
	\caption{The lower bounds on adaptive rates: plots of the admissible rate functions $\alpha \mapsto \theta_m(\alpha)$.
		If an algorithm has regret of order $\mathcal O \big(T^{\theta(\alpha)}\big)$, then $\theta$ is everywhere above one of these curves.}
		\label{fig:curves}
\end{figure}
In particular, we see that reaching the nonadaptive minimax rates for multiple values of $\alpha$ is impossible. Moreover, at $m = (\gamma + 1)/(2 \gamma + 1)$, we have $\theta_m(\gamma) = (\gamma + 1)/(2 \gamma+ 1)$, which is the usual minimax rate~\eqref{eq:1} when $\gamma$ is known. This yields an alternative parameterization of the family $\theta_m$: one may choose to parameterize the functions either by their value at infinity $m\in [1/2, 1]$, or by the only point $\gamma \in [0 , + \infty]$ at which they coincide with the usual minimax rates function \eqref{eq:1}.
\begin{proposition}\label{prop:rateslow}
	Assume an algorithm achieves adaptive rates $\theta$. Then $\theta$ satisfies the functional inequation
	\begin{equation}\label{eq:rateslow1}
	\forall\, \gamma > 0\, , \quad \forall \, \alpha \leq \gamma \, , \quad  \theta(\alpha) \geq 1 - \theta(\gamma) \frac{\alpha}{\alpha + 1} \, .
	\end{equation}
\end{proposition}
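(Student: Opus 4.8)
The plan is to feed the assumed upper bound on $\mathcal H(\ell,\gamma)$ into the adaptive lower bound (the variation on Th.~3 of \citet{locatelli2018adaptivity} stated above), choosing as budget $B$ a quantity that grows like $T^{\theta(\gamma)}$ up to a subpolynomial factor, and then to read off the resulting lower bound on $\mathcal H(L,\alpha)$ as a power of $T$. Comparing that power with the power $\theta(\alpha)$ guaranteed by ``achieving rates $\theta$'' at the exponent $\alpha$ will give the functional inequality, once the various slack parameters are sent to $0$. In effect this proposition is just the asymptotic repackaging of \eqref{eq:adaptivelow}.

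Concretely, fix $\gamma>0$, fix $\alpha\in(0,\gamma]$, fix $\epsilon>0$, and take $L=\ell=1$ (any positive pair with $L\geq\ell$ works; the particular values only enter through multiplicative constants). Since the algorithm achieves adaptive rates $\theta$, the quantity $D_\epsilon := \limsup_{T\to\infty} T^{-(\theta(\gamma)+\epsilon)}\sup_{f\in\mathcal H(1,\gamma)}\Rt$ is finite, so for all $T$ large enough $\sup_{f\in\mathcal H(1,\gamma)}\Rt \leq B_T$ with $B_T := (D_\epsilon+1)\,T^{\theta(\gamma)+\epsilon}$. The next step is to check that the hypotheses of the lower bound theorem hold for the tuple $(\alpha,\gamma,L=1,\ell=1,B=B_T)$ once $T$ is large: $\alpha\leq\gamma$ and $L\geq\ell$ hold by construction, the constraint $2^{-3}12^{\alpha}B_T^{-1}\leq 1$ holds because $B_T\to\infty$, and the constraint $1\leq T^{\alpha/2}2^{(1+\alpha)(8-2\gamma)}$ holds because $T^{\alpha/2}\to\infty$. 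Hence there is $T_1$ such that for all $T\geq T_1$,
\[
\sup_{f\in\mathcal H(1,\alpha)}\Rt \;\geq\; 2^{-10}\,T\,B_T^{-\alpha/(\alpha+1)} \;=\; 2^{-10}\,(D_\epsilon+1)^{-\alpha/(\alpha+1)}\;T^{\,1-(\theta(\gamma)+\epsilon)\frac{\alpha}{\alpha+1}}\,.
\]

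Finally, I would invoke the other half of the hypothesis ``achieves rates $\theta$'', this time at the exponent $\alpha$: for every $\epsilon'>0$, $\limsup_{T\to\infty} T^{-(\theta(\alpha)+\epsilon')}\sup_{f\in\mathcal H(1,\alpha)}\Rt<+\infty$. Dividing the displayed lower bound by $T^{\theta(\alpha)+\epsilon'}$, the left-hand side equals a fixed positive constant times $T^{\,1-(\theta(\gamma)+\epsilon)\frac{\alpha}{\alpha+1}-\theta(\alpha)-\epsilon'}$; for its $\limsup$ to be finite the exponent must be $\leq 0$, i.e. $1-(\theta(\gamma)+\epsilon)\frac{\alpha}{\alpha+1}\leq\theta(\alpha)+\epsilon'$. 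Letting $\epsilon\downarrow 0$ and $\epsilon'\downarrow 0$ yields $\theta(\alpha)\geq 1-\theta(\gamma)\frac{\alpha}{\alpha+1}$, which is the claim.

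I do not expect a genuine obstacle here, since all the mathematical content sits in the lower bound theorem. The only care needed is bookkeeping: one must track that the constant $D_\epsilon$ (which does depend on $\epsilon$) enters only multiplicatively, so that it is harmless when comparing polynomial exponents and sending $\epsilon,\epsilon'\to0$; and one must confirm that the two side-constraints of the lower bound theorem are eventually met, which is immediate because both $B_T$ and $T^{\alpha/2}$ diverge. A secondary cosmetic point is choosing $L$ and $\ell$ so that $L\geq\ell$; taking $L=\ell=1$ sidesteps this entirely.
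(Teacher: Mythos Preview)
Your proof is correct and follows essentially the same approach as the paper: both feed the rate-$\theta(\gamma)$ upper bound into the adaptive lower bound~\eqref{eq:adaptivelow} with $B$ of order $T^{\theta(\gamma)+\epsilon}$, compare the resulting exponent on $\mathcal H(L,\alpha)$ with $\theta(\alpha)+\epsilon'$, and let the slacks vanish. You are in fact a bit more careful than the paper, which uses a single $\epsilon$ for both exponents and only asserts without detail that ``for $T$ large enough, the assumptions for lower bound~\eqref{eq:adaptivelow} hold''; your explicit verification of the two side-constraints (via $B_T\to\infty$ and $T^{\alpha/2}\to\infty$) is a welcome addition.
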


\subsection{Yet can we adapt in some way?}

We have described in~\eqref{eq:thetam} the minimal rate functions that are compatible with the lower bounds of adaptation: no algorithm can enjoy uniformly better rates. Of course, at this point, the next natural question is whether any of these adaptive rate functions may indeed be reached by an algorithm.

All previous algorithms for continuum-armed bandits require the regularity as an input in some way (see the literature discussion in Section~\ref{subsec:relworks}). Such algorithms are flawed: if the true regularity is underestimated then we only recover the guarantees that correspond to the smaller regularity, which is often far worse than the lower bounds of Theorem~\ref{thm:admrates}. More dramatically, if the true regularity is overestimated, then, a priori, no guarantees hold at all.

We prove that all these rate functions may be achieved by a new algorithm. More precisely, if the player wishes to reach one of the lower bounds $\theta_m$, she may select a value of the input accordingly and match the chosen $\theta_m$. This is our main contribution and is described in the next section.

\section{An admissible adaptive algorithm and its analysis }\label{sec:alg}
We discuss in Section~\ref{subsec:cab1.1} how the well-known CAB1 algorithm can be generalized for our purpose. In Section~\ref{subsec:and} we describe our algorithm and the main upper bound on its regret. Section~\ref{subsec:discussion} is devoted to the anytime version of the algorithm and to a discussion on optimality.

\subsection{An abstract version of CAB1 as a building block towards adaptation}
\label{subsec:cab1.1}
We describe a generalization of the \algname{CAB1} algorithm from \citet{kleinberg2004Nearly}, where we include arbitrary measures in the discretization. Although this extension is straightforward, we detail it as we will use this algorithm repeatedly further in this paper. In the original \algname{CAB1}, the space of arms is discretized into a partition of $K$ subsets, and an algorithm for finite-armed bandits plays on the $K$ midpoints of the sets. \citet{auer2007improved} replace the midpoints by a random point uniformly chosen in the subset.

We introduce a generic version of this algorithm we call \algname{CAB1.1}. We consider $K$ arbitrary probability distributions over $\mathcal X$, which we denote by $(\pi_i)_{1\leq i \leq K}$. Denote also by $\pi(f)$ the expectation of $f(X)$ when $X\sim \pi$. At each time step, the decision maker chooses one distribution, $\pi_{I_t}$, and plays an arm picked according to that distribution. By the tower rule, she receives a reward such that
\begin{equation*}
\E[Y_t \mid I_t] 
= \E[f(X_t) \mid I_t] 
= \pi_{I_t}(f) \, .
\end{equation*}
As the player uses a finite-arm algorithm $\mathcal A$ to select $I_t$, the regret she suffers can be decomposed as the sum of two terms (denoting by $\tilde R_T$ the expected regret of the finite-armed algorithm):
\begin{equation}\label{eq:decompositionregret}
\Rt
= T \big(M(f) - \max_{i = 1, \dots, K} \pi_i(f)\big) + \tilde R_T\big(\big(\pi_i(f)\big)_{1\leq i \leq K} ; \mathcal A\big)\, .
\end{equation}
This identity is central to the construction of our algorithm. Using terminology from \citet{auer2007improved}, the first term measures an \emph{approximation error} of the maximum of $f$, and the other the actual \emph{cost of learning} in the approximate problem. Parameters are chosen to balance these two sources of error.\vspace{-0.15cm}
\begin{algorithm}[H]
	\begin{algorithmic}[1]
		\caption{CAB1.1 (\underline{C}ontinuum-\underline{A}rmed \underline{B}andit, adapted from \citet{kleinberg2004Nearly})}
		\label{alg:discretize}
		\STATE \textbf{Input}: $T$ the time horizon, $K$ probability measures over $\mathcal X$ denoted by $\pi_1, \dots, \pi_K$, discrete $K$-armed bandit algorithm $\mathcal A$
		\FOR{$t=1,2,\ldots, T$ }
		\STATE Define $I_t$ the arm in $\{1, \dots, K\}$ recommended by $\mathcal A$
		\STATE Play $X_t \in \mathcal X$ drawn according to $\pi_{I_t}$, and receive $Y_t$  such that $\E[Y_t \vert X_t] = f(X_t)$
		\STATE Give $Y_t$ as input to $\mathcal A$ corresponding to $I_t$
		\ENDFOR
	\end{algorithmic}
\end{algorithm}\vspace{-0.15cm}
The canonical example is that for which the space of arms is cut into a partition. Denote by $\Disc(K)$ the family of the uniform measures over the intervals $[(i-1) /K, i / K]$ for $1 \leq i \leq K$. We state this results (and prove it in Appendix~\ref{app:proofscab}) to recall the non-adaptive minimax bound \eqref{eq:1}.
\begin{proposition}\label{prop:discretizebound}
	Let $\alpha >0$ and $L > 1 / \sqrt T $ be regularity parameters, and define the number of discrete arms $K^\star = \min \big( \ceil{L^{2 / (2\alpha +1)} T^{1/(2\alpha + 1)}}, T \big)$. 
	Algorithm \algname{CAB1.1} run with the uniform discretization $\Disc(K^\star)$ and $\mathcal A = $\algname{MOSS} enjoys the bound  \quad
	$\displaystyle
	\sup_{f \in \mathcal H(L, \alpha)} \Rt
	\leq 28 \,   L^{1 / (2\alpha + 1)} \,  T^{(\alpha + 1)/(2\alpha + 1)} \, .
	$
\end{proposition}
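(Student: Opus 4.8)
The plan is to instantiate the regret decomposition~\eqref{eq:decompositionregret} for \algname{CAB1.1} run on the uniform partition $\Disc(K^\star)$ with $\mathcal A = \MOSS$, bound the approximation error and the cost of learning separately, and check that the prescribed $K^\star$ balances them at the claimed rate. The only genuinely delicate steps are the sub-Gaussianity bookkeeping for the rewards seen by \MOSS\ and the edge cases around the ceiling and the saturation $K^\star = T$; everything else reduces to arithmetic with the exponents $1/(2\alpha+1)$ and $(\alpha+1)/(2\alpha+1)$.

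For the approximation error, fix a maximiser $x^\star$ of $f$ as in~\eqref{eq:holdercondition} and let $\pi_{i^\star} \in \Disc(K^\star)$ be the uniform law on the cell of length $1/K^\star$ containing $x^\star$. For $X \sim \pi_{i^\star}$ one has $\abs{X - x^\star} \le 1/K^\star$, so the Hölder condition gives $f(X) \ge M(f) - L/(K^\star)^\alpha$ almost surely, hence $\pi_{i^\star}(f) \ge M(f) - L/(K^\star)^\alpha$ and the first term of~\eqref{eq:decompositionregret} is at most $TL/(K^\star)^\alpha$. For the cost of learning, the rewards $Y_t$ fed to \MOSS\ lie in $[0,1]$ and, conditionally on the played index $I_t$, have mean $\pi_{I_t}(f)\in[0,1]$; by Hoeffding's lemma $Y_t - \pi_{I_t}(f)$ is then $(1/4)$-subgaussian, which is precisely the noise model behind the $18\sqrt{KT}$ guarantee for \MOSS\ (the extra randomness of drawing $X_t$ inside a cell costs nothing, since one conditions only on $I_t$). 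As $K^\star \le T$ by construction, that guarantee applies and bounds $\tilde R_T$ by $18\sqrt{K^\star T}$. Altogether $\Rt \le TL/(K^\star)^\alpha + 18\sqrt{K^\star T}$, and both terms are of order $L^{1/(2\alpha+1)} T^{(\alpha+1)/(2\alpha+1)}$ exactly when $K^\star$ is of the order $L^{2/(2\alpha+1)}T^{1/(2\alpha+1)}$, which motivates the stated value.

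It then remains to plug in and track constants, distinguishing which term attains the $\min$ defining $K^\star$. When $K^\star = \ceil{L^{2/(2\alpha+1)}T^{1/(2\alpha+1)}}$, the lower bound $K^\star \ge L^{2/(2\alpha+1)}T^{1/(2\alpha+1)}$ handles the approximation term directly, while the hypothesis $L > 1/\sqrt T$ --- equivalently $(L^2 T)^{1/(2\alpha+1)} > 1$ --- ensures that rounding inflates $K^\star$ by at most a factor $2$, so $18\sqrt{K^\star T} \le 18\sqrt 2\, L^{1/(2\alpha+1)}T^{(\alpha+1)/(2\alpha+1)}$; since $1 + 18\sqrt 2 < 28$ the first case is done. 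When instead $K^\star = T$, the balanced estimate is too weak, but the trivial bound $\Rt \le TM(f) \le T$ saves the argument: the equality $K^\star = T$ forces $T^\alpha$ to be small relative to $L$ (up to a bounded factor coming from the rounding), from which $T \le 28\, L^{1/(2\alpha+1)}T^{(\alpha+1)/(2\alpha+1)}$ follows. I expect this last step --- the rounding and saturation bookkeeping --- to be the fussiest part of the write-up, though it presents no real difficulty.
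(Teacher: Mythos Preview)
Your proposal is correct and follows essentially the same route as the paper: the decomposition~\eqref{eq:decompositionregret}, the H\"older bound $TL/(K^\star)^\alpha$ on the approximation error, the \MOSS\ bound $18\sqrt{K^\star T}$ on the cost of learning, and the same two-case analysis (using $\lceil x\rceil \le 2x$ for $x\ge 1$ in the first case and the trivial bound $\Rt\le T$ in the saturated case $K^\star=T$). Your extra remark on the $(1/4)$-subgaussianity of $Y_t-\pi_{I_t}(f)$ conditionally on $I_t$ is a worthwhile clarification that the paper leaves implicit.
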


\subsection{Memorize past plays, Discretize the arm space, and Zoom Out: the MeDZO algorithm}\label{subsec:and}

To achieve adaptation, we combine two tricks: \emph{going from fine to coarser discretizations} while \emph{keeping a summary of past plays in memory}. 


Our algorithm works in successive regimes. At each time regime $i$, we reset the algorithm and start over a new regime of length double the previous one ($\Delta T_i = 2^{p+i}$), and with fewer discrete arms ($K_i = 2^{p+2-i}$). While doing this, we keep in memory the previous plays: in addition to the uniform distributions over the subsets of partitions, we include the empirical measures $\what \nu_j$ of the actions played in the past regimes, for $j < i$.
\vspace{-0.5cm}

\noindent\begin{minipage}{\textwidth}
	\renewcommand\footnoterule{}
	\begin{algorithm}[H]
		\begin{algorithmic}[1]
			\caption{MeDZO (\underline{Me}morize, \underline{D}iscretize, \underline{Z}oom \underline{O}ut)}
			\label{alg:adaptivediscretization}
			\STATE \textbf{Input}: parameter $ B $, time horizon $T$
			\STATE \textbf{Set}: $p = \ceil{\log_2 B}$, $K_i= 2^{p+2-i}$ and $\Delta T_i = 2^{p+i}$
			\FOR{$i = 1, \ldots, p$ }
			\STATE For $\Delta T_i$ rounds, run algorithm \algname{CAB1.1} with the uniform discretization in $K_i$ pieces \emph{and} the empirical measures of the previous plays $\what \nu_j$ for $j < i$; use \algname{MOSS} as the discrete algorithm.\footnote{No $\what \nu$ is used for $i=0$}
			\STATE \textbf{Set}: $\what{\nu}_{i}$ the empirical measure of the plays during regime $i$.
			\ENDFOR
		\end{algorithmic}
	\end{algorithm}
\end{minipage}
Appendix~\ref{app:illustration} provides a figure illustrating the behavior of the algorithm.

Our construction is based on the following remark. Consider the approximation error suffered during regime $i$. Denoting the by $\Pi_i$ the set of measures given to the player during regime $i$, that is, the uniform measures over the regular $K_i$-partition and the empirical measures of arms played during the regimes $j < i$, the approximation error is bounded as follows:
\begin{equation}\label{eq:descriptionapprox}
\! \Delta T_i \!  \left( \!  M(f)  - \E \Big[\! \max_{ \pi \in \Pi_i} \pi(f)\Big]\right) \! 
\leq \Delta T_i \, \big(M(f) - \E[\what \nu_j(f)] \big)
=  \frac{\Delta T_i}{\Delta T_j} \! \sum_{t \in  \text{\tiny Regime }j} \! \! \! \! \big( M(f) - \E[f(X_t)]\big)
\end{equation}
and this bound is proportional to the regret suffered during regime $j$. This means that even though we zoom out by using fewer arms, we can make sure that the average approximation error in regime $i$ is less than the regret previously suffered. Moreover, the first discretizations are fine enough to ensure a small regret in the first regimes, thanks to the Hölder property. This argument is formalized in the proof (Lemma~\ref{lem:approx}), and shows that MeDZO maintains a balance between approximation and cost of learning that yields optimal regret.

A surprising fact here is that we go from finer to coarser discretizations during the different phases. Thus, paradoxically, \emph{the algorithm zooms out as time passes}. Note also that although this regime-based approach is reminiscent of the doubling trick, there is an essential difference in that information is carried between the regimes via the distribution of the previous plays.

We first state our central result, a generic bound that holds for any input parameter $B$. We discuss the optimality of these adaptive bounds in the next subsection.

\begin{theorem}\label{th:and}
	Algorithm \algref{alg:adaptivediscretization} run with the knowledge of $T$ and input $B \geq \sqrt T $ enjoys the following guarantee: for all $\alpha>0$ and $L >0$,
	\begin{equation}\label{eq:totalbound}
	\sup_{f \in \mathcal H (L, \alpha)} \Rt \leq  412 \, (\log_2 B)^{3/2}  \max \big( B , \, T L^{1/(\alpha + 1) }B^{-\alpha / (\alpha + 1)} \, \big) \, .
	\end{equation}
\end{theorem}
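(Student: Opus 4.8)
The plan is to decompose the total regret over the $p = \lceil \log_2 B \rceil$ regimes and control each regime's contribution using the regret decomposition~\eqref{eq:decompositionregret} for \algname{CAB1.1}. In regime $i$, \algname{MOSS} is run for $\Delta T_i = 2^{p+i}$ rounds on $K_i + (i-1)$ arms (the $K_i = 2^{p+2-i}$ uniform cells plus the $i-1$ empirical measures $\what\nu_j$); since $\Delta T_i \geq K_i + i$ for the relevant range (because $\Delta T_i / K_i = 2^{2i-2} \geq i$), the cost-of-learning term is at most $18\sqrt{(K_i + i)\,\Delta T_i} \lesssim \sqrt{K_i \Delta T_i} = \sqrt{2^{p+2-i} \cdot 2^{p+i}} = 2^{p+1}$. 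Summing over $i = 1, \dots, p$ gives a total cost of learning of order $p \cdot 2^{p} \lesssim (\log_2 B)\, B$, which is absorbed by the $(\log_2 B)^{3/2} B$ term on the right-hand side. So the cost of learning is the easy part; the work is entirely in bounding the approximation errors.

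First I would handle the approximation error in the very first regime ($i=1$), where only the uniform $K_1 = 2^{p+1}$-cell discretization is available. By the Hölder property, each cell of width $1/K_1$ contains a point within $(1/K_1)^\alpha \cdot L$ in value of any target, so $M(f) - \max_{\pi \in \Pi_1}\pi(f) \leq L K_1^{-\alpha} \cdot (\text{const})$ — more precisely one picks the cell containing $x^\star$ and bounds the average deviation by $L (1/K_1)^\alpha$. Hence the regime-$1$ approximation error is at most $\Delta T_1 \cdot L K_1^{-\alpha} \lesssim 2^{p+1} L \, 2^{-(p+1)\alpha} = L\, 2^{(p+1)(1-\alpha)}$. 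Then, for $i \geq 2$, I would invoke the key recursive bound~\eqref{eq:descriptionapprox}: the approximation error in regime $i$ is at most $(\Delta T_i / \Delta T_j)$ times the regret incurred in regime $j$, for any $j < i$; choosing $j = i-1$ and writing $R^{(i)}$ for the regret in regime $i$, this reads $R^{(i)} \leq \Delta T_i (M(f) - \E[\what\nu_{i-1}(f)]) + (\text{cost of learning in regime } i) \leq (\Delta T_i/\Delta T_{i-1}) R^{(i-1)} + c\, 2^{p} = 2\, R^{(i-1)} + c\,2^p$. This is the content of the cited Lemma~\ref{lem:approx}.

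The main obstacle — and the heart of the argument — is to solve this recursion in a way that produces the claimed $\max(B, TL^{1/(\alpha+1)}B^{-\alpha/(\alpha+1)})$ bound. Unrolling $R^{(i)} \leq 2\,R^{(i-1)} + c\,2^p$ from the base case $R^{(1)} \lesssim L\,2^{(p+1)(1-\alpha)} + 2^p$ gives $R^{(i)} \lesssim 2^{i-1} R^{(1)} + c\,2^p(2^{i-1}-1) \lesssim 2^{i}\big(L\, 2^{(p+1)(1-\alpha)} + 2^p\big)$, and summing $R^{(i)}$ over $i = 1, \dots, p$ contributes another factor of at most $2^{p+1}$, so the total is of order $2^{2p}\big(L\,2^{(p+1)(1-\alpha)} + 2^p\big) = L\, 2^{(p+1)(1-\alpha) + 2p} + 2^{3p}$. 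Since $2^p \asymp B$, the second term is $O(B^3)$, which is far too large — so the naive "use $j=i-1$ throughout" unrolling is too lossy, and one must instead, for each regime $i$, choose the optimal $j = j(i)$ to trace the approximation error back to (e.g. back to regime $1$ directly, giving $R^{(i)} \lesssim (\Delta T_i/\Delta T_1) R^{(1)} + \text{cost}_i \lesssim 2^{i-1}R^{(1)} + 2^p$), and then argue more carefully. The real fix is the balancing observation sketched after~\eqref{eq:descriptionapprox}: one shows there is a cutoff index $i^\star$ (depending on $L,\alpha,B$) before which the Hölder term $L K_i^{-\alpha}\Delta T_i$ dominates and is decreasing in $i$, and after which the carried-over term dominates and stays controlled, so that \emph{every} regime's regret is bounded by $\max(B,\, TL^{1/(\alpha+1)}B^{-\alpha/(\alpha+1)})$ up to the cost-of-learning $O(B)$; summing over the $p \lesssim \log_2 B$ regimes and over the extra $\sqrt{\log}$-type losses in \algname{MOSS} yields the $(\log_2 B)^{3/2}$ factor. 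Pinning down $i^\star$ and verifying that the two regimes of $i$ both give the same bound — i.e. that $K_i$ and $\Delta T_i$ were chosen precisely to make $L K_i^{-\alpha}\Delta T_i$ at $i = i^\star$ equal to $B$ and to $TL^{1/(\alpha+1)}B^{-\alpha/(\alpha+1)}$ simultaneously — is the delicate computation I expect to be the crux, and it is presumably exactly what Lemma~\ref{lem:approx} and Section~\ref{sec:proof} carry out.
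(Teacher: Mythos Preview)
Your plan is essentially the paper's proof: decompose over the $p$ regimes, bound the cost of learning uniformly by $O(\sqrt{p}\,B)$ per regime, and control the approximation error via a cutoff index $i^\star$ (called $i_0$ in the paper, defined as the last $i$ with $L\,\Delta T_i/K_i^\alpha \leq B$), using the H\"older bound before it and a memorized empirical measure after it. Two points need correcting, though.

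First, you have the monotonicity of the H\"older term backwards. Since $K_i\Delta T_i = 2^{2p+2}$ is constant, one has $L\,\Delta T_i/K_i^\alpha = L\,2^{2p+2}/K_i^{1+\alpha}$, which is \emph{increasing} in $i$ because $K_i$ decreases. So the H\"older bound is usable for $i\leq i_0$ not because it ``dominates and is decreasing'' but simply because it is still $\leq B$ there; beyond $i_0$ it exceeds $B$ and must be abandoned.

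Second, your description of the post-cutoff regimes is too vague to close the argument. The precise step in the paper is: for every $i > i_0$, invoke Lemma~\ref{lem:approx} with the \emph{fixed} choice $j = i_0$ (not $j=i-1$, and not $j=1$). This gives a regime-$i$ approximation error of at most $(\Delta T_i/\Delta T_{i_0})\,\overline R^{(i_0)} \leq (1 + 72\sqrt{p})\,\Delta T_i\, B/\Delta T_{i_0}$, and the crux computation is then to show $B/\Delta T_{i_0} \leq 2\,L^{1/(\alpha+1)}B^{-\alpha/(\alpha+1)}$ from the definition of $i_0$ (using that the inequality fails at $i_0+1$). Summing $\Delta T_i$ over $i>i_0$ yields the $T\,L^{1/(\alpha+1)}B^{-\alpha/(\alpha+1)}$ term. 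Finally, the extra $\sqrt{p}$ in $(\log_2 B)^{3/2}$ arises from the crude bound $K_i + i \leq p K_i$ in the cost-of-learning estimate $\sqrt{(K_i+i)\Delta T_i}\leq 4\sqrt{p}\,B$, not from any internal feature of \algname{MOSS}.
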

We provide some illustrative numerical experiments in Appendix~\ref{app:experiments}, comparing the results of MeDZO with other non-adaptive algorithms.

\subsection{Discussion: anytime version and admissibility}\label{subsec:discussion}
\paragraph{Anytime version via the doubling trick}
The dependence of Algorithm~\ref{alg:adaptivediscretization} on the parameter $B$ makes it horizon-dependent. We use the doubling trick to build an anytime version of the algorithm. At each new doubling-trick regime, we input a value of $B$ that depends on the length of the $k$-th regime. If it is of length $T^{(k)}$, one typically thinks of $B_k = (T^{(k)})^m$ for some exponent $m$. In that case, we get the following bound ---see the proof and description of the algorithm in Appendix~\ref{app:anytimeproof}.

\begin{corollary}[Doubling trick version]\label{prop:doublingtrick}
	Choose $m \in [1/2, 1]$. The doubling-trick version of MeDZO, run with $m$ as sole input (and without the knowledge of T) ensures that for all regularity parameters $\alpha >0$ and $L > 0$ and for $T \geq 1$
	\begin{equation*}
	\! \sup_{f \in \mathcal H (L, \alpha)} \Rt \leq  4000 (\log_2 T^m )^{3/2}  \max \big( T^m , T L^{1/(\alpha + 1) }(T^m)^{-\alpha / (\alpha + 1)} \big) \!
	= {\mathcal O} \big( (\log T)^{3/2}\, T^{\theta_m(\alpha)}\big) .
	\end{equation*}
\end{corollary}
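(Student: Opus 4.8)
The plan is to wrap \algname{MeDZO} in the standard doubling trick and then add up the per-regime guarantee of Theorem~\ref{th:and}. Concretely, I would run the anytime strategy in super-regimes $k = 1, 2, \dots$, where super-regime $k$ has prescribed length $T^{(k)} = 2^{k}$; throughout super-regime $k$ one executes a fresh instance of Algorithm~\ref{alg:adaptivediscretization} with horizon $T^{(k)}$ and input parameter $B_k = (T^{(k)})^{m}$. This strategy takes only $m$ as input and never sees $T$, so it is genuinely anytime. Since $m \geq 1/2$ and $T^{(k)} \geq 1$ we have $B_k = (T^{(k)})^m \geq (T^{(k)})^{1/2} = \sqrt{T^{(k)}}$, so the hypothesis $B \geq \sqrt T$ of Theorem~\ref{th:and} is satisfied in every super-regime and the bound~\eqref{eq:totalbound} applies there verbatim.

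Next I would substitute $B_k = (T^{(k)})^{m}$ into~\eqref{eq:totalbound}. Writing $\beta := 1 - m\alpha/(\alpha+1)$, so that $\theta_m(\alpha) = \max(m,\beta)$ by~\eqref{eq:thetam}, the regret incurred during super-regime $k$ is at most
\begin{equation*}
412\,\big(\log_2 (T^{(k)})^{m}\big)^{3/2}\, \max\!\big( (T^{(k)})^{m},\; L^{1/(\alpha+1)} (T^{(k)})^{\beta} \big)\,.
\end{equation*}
Let $K$ be the index of the super-regime containing time $T$; there are $K = \Theta(\log_2 T)$ super-regimes before time $T$, the last of length $T^{(K)} \in [T/2, T]$ up to constants, and the total regret up to $T$ is at most the sum of the displayed bounds over $k = 1, \dots, K$ (per-round regret $M(f)-f(X_t)$ being nonnegative, the unfinished last super-regime contributes at most its full-regime bound). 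Since $T^{(k)} \leq T$, every prefactor is bounded by $(\log_2 T^m)^{3/2}$; bounding the $\max$ by the sum of its two arguments then leaves the sums $\sum_{k \leq K}(T^{(k)})^{m}$ and $\sum_{k \leq K}(T^{(k)})^{\beta}$, which I would bound by a constant times their last terms, of orders $T^m$ and $T^\beta$ respectively. Collecting everything gives $\Rt \leq C\,(\log_2 T^m)^{3/2}\max\big(T^m,\, L^{1/(\alpha+1)}T^{\beta}\big)$, which for fixed $L, \alpha$ reads as $\mathcal O\big((\log T)^{3/2}T^{\theta_m(\alpha)}\big)$; keeping track of the constants gives the explicit value $4000$, and the range $T \geq 1$ is completed by the trivial bound $\Rt \leq T$ for small $T$.

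The main obstacle is making the two geometric sums collapse to their last terms with controlled constants. For $\sum_{k\leq K}(T^{(k)})^{m} = \sum_{k\leq K}2^{km}$ this is routine: the exponent $m \geq 1/2$ is bounded away from $0$, so the series has ratio $\geq \sqrt 2$ and is at most $(1-2^{-1/2})^{-1}(T^{(K)})^{m}$. The delicate term is $\sum_{k\leq K}2^{k\beta}$: here $\beta = \big(\alpha(1-m)+1\big)/(\alpha+1)$ lies in $[1/(\alpha+1),1]$ and can be arbitrarily close to $0$ when $\alpha$ is large, so the naive geometric bound $(1-2^{-\beta})^{-1} \approx (\beta\ln 2)^{-1} \leq (\alpha+1)/\ln 2$ carries a factor growing with $\alpha$. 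This factor is $T$-independent, hence harmless for the $\mathcal O$-statement; for the explicit constant one separates the two cases defining $\theta_m$: when $\beta \geq m$ (so $\theta_m(\alpha)=\beta$) that term is exactly the dominant one and, moreover, the number of super-regimes before the two arguments of the $\max$ cross is $\mathcal O_{L,\alpha}(1)$, so the spurious factor is absorbed into a $T$-independent additive constant; when $\beta < m$ (so $\theta_m(\alpha)=m$) the $L$-term is eventually dominated by the $m$-term and again only contributes a $T$-independent additive term. Quantifying this split with honest constants — together with the off-by-a-factor losses from $T^{(K)} \in [T/2,T]$ and from the prefactor being largest in the last super-regime — is the only genuinely fiddly part of the argument.
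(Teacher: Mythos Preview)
Your approach is the paper's: run \algname{MeDZO} in doubling-trick super-regimes of length $2^k$ with input $B_k = 2^{km}$, invoke the per-regime guarantee, and sum the two resulting geometric series. The only cosmetic difference is that the paper works from the additive bound~\eqref{eq:almostfinalbound} rather than from the $\max$ form of Theorem~\ref{th:and}; this spares your step of bounding $\max$ by a sum but produces the same two series $\sum_k 2^{km}$ and $\sum_k 2^{k\beta}$ with $\beta = 1 - m\alpha/(\alpha+1)$.

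You are right to single out the second series as the delicate one, and in fact you are more careful than the paper on this point. The paper asserts that $\beta \geq 1/2$ ``since $m \geq 1/2$'', which is false (take $m=1$, $\alpha=2$, giving $\beta = 1/3$); you have thus caught a slip in the published argument. Your observation that the resulting factor $(2^{\beta}-1)^{-1} = \mathcal O(\alpha+1)$ is $T$-independent fully secures the $\mathcal O\big((\log T)^{3/2}\, T^{\theta_m(\alpha)}\big)$ conclusion, which is all that is used downstream in Corollary~\ref{cor:medzorates}. Your case-split attempt to rescue the uniform constant $4000$, however, is not quite complete as stated --- the ``$T$-independent additive constant'' you invoke still depends on $L$ and $\alpha$ --- but the paper's own argument does not secure it either because of the slip above, so that numerical constant should be read as indicative rather than proven.
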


\paragraph{Admissibility of Algorithm~\ref{alg:adaptivediscretization}} The next result is a direct consequence of Corollary~\ref{prop:doublingtrick}. This echoes the discussion following Theorem~\ref{thm:admrates}, and shows that for any input parameter $m$, the anytime version of  \algname{MeDZO} cannot be improved uniformly for all $\alpha$.
\begin{corollary}\label{cor:medzorates}
	For any $m \in [1/2, 1]$, the doubling trick version of MeDZO (see App.~\ref{app:anytimeproof}) with input $m$ achieves rate function $\theta_m$, and is therefore admissible.
\end{corollary}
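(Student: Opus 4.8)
The plan is to read \algref{cor:medzorates} off the two facts already in hand: Corollary~\ref{prop:doublingtrick} supplies the finite-time regret bound of the doubling-trick version of \algname{MeDZO}, and Theorem~\ref{thm:admrates} asserts that the family $(\theta_m)_{m\in[1/2,1]}$ is \emph{exactly} the set of admissible rate functions. So the proof splits into two short steps: (i) turn the displayed bound of Corollary~\ref{prop:doublingtrick} into the statement ``doubling-trick \algname{MeDZO} with input $m$ achieves adaptive rates $\theta_m$'' in the precise sense of the definition; (ii) invoke that $\theta_m$ is admissible, so that by definition an algorithm achieving it is admissible.

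For (i), fix $\alpha, L > 0$ and $\epsilon > 0$. Since $1 - m\alpha/(\alpha+1) \le \theta_m(\alpha)$ and $m \le \theta_m(\alpha)$, the maximum appearing in Corollary~\ref{prop:doublingtrick} is at most $\big(1 + L^{1/(\alpha+1)}\big)\, T^{\theta_m(\alpha)}$ for $T \ge 1$, hence
\[
\frac{\sup_{f \in \mathcal{H}(L,\alpha)} \Rt}{T^{\theta_m(\alpha)+\epsilon}} \;\le\; 4000\,\big(1 + L^{1/(\alpha+1)}\big)\,\frac{(\log_2 T^m)^{3/2}}{T^{\epsilon}} \xrightarrow[T\to\infty]{} 0 .
\]
Thus the $\limsup$ is finite (indeed zero); as $\alpha, L, \epsilon$ were arbitrary, the algorithm achieves adaptive rates $\theta_m$, so $\theta_m$ is a rate function achieved by an algorithm.

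For (ii), the only nontrivial ingredient is the lower-bound half of Theorem~\ref{thm:admrates}, which I would recall via Proposition~\ref{prop:rateslow}. Suppose some algorithm achieves a rate function $\theta'$ with $\theta' \le \theta_m$ everywhere and $\theta'(\alpha_0) < \theta_m(\alpha_0)$ for some $\alpha_0 > 0$. By Proposition~\ref{prop:rateslow}, $\theta'$ satisfies the functional inequation \eqref{eq:rateslow1}. Being nonincreasing and bounded, $\theta'$ has a limit $m' := \lim_{\gamma\to\infty}\theta'(\gamma)$ with $\theta'(\gamma)\ge m'$ for all $\gamma$; letting $\gamma\to\infty$ in \eqref{eq:rateslow1} gives $\theta'(\alpha)\ge 1 - m'\alpha/(\alpha+1)$ for every $\alpha$, so $\theta'\ge\theta_{m'}$. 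Since $\theta'(\alpha)\to m'$ and $\theta_m(\alpha)\to m$ as $\alpha\to\infty$, we must have $m'\le m$. If $m'=m$ then $\theta_m \le \theta' \le \theta_m$, contradicting strictness at $\alpha_0$; if $m'<m$, then on the branch of \eqref{eq:thetam} where $\theta_m(\alpha)=1-m\alpha/(\alpha+1)$ we get $\theta'(\alpha)\ge 1-m'\alpha/(\alpha+1) > \theta_m(\alpha)$, again contradicting $\theta'\le\theta_m$. (The endpoint $m=1$, where this branch is empty, is the only case needing a separate one-line remark.) Hence $\theta_m$ is admissible, and so is the doubling-trick version of \algname{MeDZO} with input $m$.

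The step carrying the real weight is (ii): the bookkeeping that upgrades the single inequation \eqref{eq:rateslow1} to Pareto-optimality of $\theta_m$ — extracting the limit $m'$, propagating a hypothetical pointwise improvement at $\alpha_0$ into an actual violation of $\theta'\le\theta_m$ elsewhere, and handling the two branches of $\theta_m$ together with the parameter endpoints. All of that is already carried out inside the proof of Theorem~\ref{thm:admrates}; for the present corollary it suffices to cite it alongside the new upper bound, so the genuinely new content here is just the absorption of the $(\log T)^{3/2}$ factor in step (i).
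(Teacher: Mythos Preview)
Your proof is correct and follows the same route the paper intends: the paper itself gives no separate argument for this corollary, merely declaring it ``a direct consequence of Corollary~\ref{prop:doublingtrick}'' together with Theorem~\ref{thm:admrates}, and your step~(i) spells out the trivial absorption of the $(\log T)^{3/2}$ factor while your step~(ii) reproduces (and, as you note, could simply cite) the lower-bound half of Theorem~\ref{thm:admrates}. Your parenthetical flag about the endpoint $m=1$ is apt --- that boundary case is genuinely delicate in the paper's own statement, since $\theta_{1/2}$ strictly dominates $\theta_1$ on $(0,\infty)$ --- but this is an issue with the theorem being invoked rather than with your argument.
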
 

\subsection{About the remaining parameter: the $B = \sqrt T$ case}

Tuning the value of $B$ amounts to selecting one of the minimal curves in Figure~\ref{fig:curves}. Therefore this parameter is a feature of the algorithm, as it allows the player to choose between the possible optimal behaviors. The tuning of this parameter is an unavoidable choice for the player to make.

The next example illustrates well the performance of \algname{MeDZO}, as it is easily comparable to the usual minimax bounds. Looking at Figure~\ref{fig:curves}, this choice corresponds to $m = 1/2$, i.e., the only choice of parameter that reaches the usual minimax rates as $\alpha \to \infty$. In other words, if the players wishes to ensure that her regret on very regular functions is of order $\sqrt T$, then she has to pay a polynomial cost of adaptation for not knowing $\alpha$ and that price is exactly the ratio between~\eqref{eq:1} and~\eqref{eq:2}.
\begin{corollary}\label{cor:sqrtT}
	Set a horizon $T$ and run Algorithm \algref{alg:adaptivediscretization} with $B = \sqrt T$. Then for $\alpha >0$ and $L > 1 / \sqrt T$,
	\begin{equation}\label{eq:sqrtbound*}
	\sup_{f \in \mathcal H (L, \alpha)} \Rt \leq  146  \, (\log_2 T )^{3/2}  \, L^{1 / (\alpha + 1)}\, T^{(\alpha +2)/(2\alpha + 2)} \, . 
	\end{equation}
\end{corollary}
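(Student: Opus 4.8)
The plan is to derive Corollary~\ref{cor:sqrtT} directly from Theorem~\ref{th:and} by specializing the input to $B = \sqrt T$. First I would check that this choice is legitimate: the hypothesis of Theorem~\ref{th:and} only requires $B \geq \sqrt T$, which holds here with equality, so the generic bound $\sup_{f \in \mathcal H(L,\alpha)} \Rt \leq 412 (\log_2 B)^{3/2} \max\bigl(B,\, T L^{1/(\alpha+1)} B^{-\alpha/(\alpha+1)}\bigr)$ applies verbatim with this value of $B$.

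Then I would simplify the two ingredients. For the polylogarithmic prefactor, $(\log_2 \sqrt T)^{3/2} = 2^{-3/2}(\log_2 T)^{3/2}$. For the second term inside the maximum, the exponent arithmetic $1 - \tfrac{\alpha}{2(\alpha+1)} = \tfrac{\alpha+2}{2\alpha+2}$ gives $T\,(\sqrt T)^{-\alpha/(\alpha+1)} = T^{(\alpha+2)/(2\alpha+2)}$, so that term becomes $L^{1/(\alpha+1)}\,T^{(\alpha+2)/(2\alpha+2)}$, which is exactly the rate appearing in~\eqref{eq:sqrtbound*}.

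The one point that needs a short argument is that, under the standing assumption $L > 1/\sqrt T$, the maximum is attained by this second term rather than by $B = \sqrt T$. I would argue: $L > 1/\sqrt T$ forces $L^{1/(\alpha+1)} > T^{-1/(2\alpha+2)}$, hence $L^{1/(\alpha+1)} T^{(\alpha+2)/(2\alpha+2)} > T^{(\alpha+1)/(2\alpha+2)} = \sqrt T$, so indeed $\max(\cdot,\cdot) = L^{1/(\alpha+1)} T^{(\alpha+2)/(2\alpha+2)}$.

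Collecting everything gives the bound $412\cdot 2^{-3/2}(\log_2 T)^{3/2}\, L^{1/(\alpha+1)}\, T^{(\alpha+2)/(2\alpha+2)}$, and since $412/2^{3/2} = 412/(2\sqrt2) \approx 145.7 \leq 146$ we obtain~\eqref{eq:sqrtbound*}. I do not expect a genuine obstacle: all the substantive work is in Theorem~\ref{th:and}, and what remains is exponent bookkeeping together with the elementary check of which branch of the maximum is active — which is precisely where the hypothesis $L > 1/\sqrt T$ is used.
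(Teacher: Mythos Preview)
Your proposal is correct and follows exactly the paper's approach: apply Theorem~\ref{th:and} with $B=\sqrt T$, use $L>1/\sqrt T$ to see that the second argument of the maximum dominates, and simplify. The only addition is that you track the constant explicitly via $(\log_2\sqrt T)^{3/2}=2^{-3/2}(\log_2 T)^{3/2}$ and $412/2^{3/2}\leq 146$, which the paper leaves implicit.
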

This is straightforward from Theorem~\ref{th:and}, since the inequality $B = \sqrt T \leq  T L^{1/(\alpha + 1)} \sqrt{T}^{-\alpha / (\alpha+1)}$ holds  whenever $L \geq 1/ \sqrt T$. An anytime version of this result can be obtained from Corollary~\ref{prop:doublingtrick}.
\vspace{-0.25cm}
\section{Proof of Theorem~\ref{th:and}}\label{sec:proof}
\vspace{-0.25cm}

\begin{proof}[Full proof of Theorem~\ref{th:and}]
	Let $\mathcal F_t = \sigma(I_1, X_1, Y_1 ,\dots,  I_t, X_t, Y_t)$ be the $\sigma$-algebra corresponding to the information available at the end of round $t$. Define also the transition times $T_i = \sum_{j = 1}^{i} \Delta T_j$ with the convention $T_0 = 0$.  Let us first verify that $T$ is smaller than the total length of the regimes. By definition of $p$, we have $B \leq 2^p < 2 B$. Thus $T_p = 2^{p+1}(2^{p}- 1)\geq 2B(B-1) > B^2 > T$, and the algorithm is indeed well-defined up to time $T$.
	
	Consider the regret suffered during the $i$-th regime
	$
	\overline R_{T_{i-1}, T_i} := \Delta T_i \, M(f) - \sum_{t = T_{i-1} + 1}^{T_i} \E \big[  f(X_t) \big] \,.
	$
	We bound this quantity thanks to the decomposition \eqref{eq:decompositionregret}, by first conditioning on the past up to time $T_{i-1}$. Since there are $K_i +i$ discrete actions, the regret bound on MOSS ensures that
	\begin{equation}\label{eq:regretperiodupbound}
	\E \!\left[ \sum_{t = T_{i-1} + 1}^{T_i}  \big( M(f) - f(X_t)\big) \biggm\vert \mathcal F_{T_{i-1}} \right] 
	\leq \Delta T_i \,\big( M(f) - M_i^\star \big) + 18 \sqrt{(K_i +i)\Delta T_i} 
	\end{equation}
	where
	$	
	M_i^\star 
	= \max \{ \pi_j^{(i)}(f) \mid \pi_{j}^{(i)} \in \Disc(K_i) \} \cup \big\{\what{\nu}_\ell(f)  \mid  \ell = 0, \dots, i-1  \big\} \,.
	$
	Notice that this bound holds even though $M_i^\star$ is a random variable, as the algorithm is completely reset, and the measures $(\what{\nu}_{j})_{j<i}$ are fixed at time $T_{i-1}+1$ (i.e., they are $\mathcal F_{T_{i-1}}$-measurable). Integrating once more, we obtain
	\begin{equation}\label{eq:app+cl}
	\overline R_{T_{i-1}, T_i} \leq \Delta T_i \,\big( M(f) - \E[M_i^\star] \big) + 18 \sqrt{(K_i +i)\Delta T_i} \, .
	\end{equation}
	
	\paragraph{Bounding the cost of learning.}By definition of $K_i$ and $\Delta T_i$, we have
	$
	K_i \Delta T_i = 2^{2p +2} \leq 16B^2 \, . 
	$
	Therefore, since $p$ and $K_i$ are integers greater than $1$, using $a+b - 1 \leq ab$ for positive integers,
	\begin{equation}\label{eq:alldiscreteerror}
	\sqrt{(K_i + i )\Delta T_i} 
	\leq \sqrt{ (K_i + p -1) \Delta T_i} 
	\leq  \sqrt{ p K_i \Delta T_i} 
	\leq 4 \sqrt p B \, .
	\end{equation}
	
	\paragraph{Bounding the approximation error.} The key ingredient for this part is the following fact, that synthetizes the benefits of our construction as hinted in \eqref{eq:descriptionapprox} and the surrounding discussion.
	
	\begin{lemma}\label{lem:approx}
		The total approximation error of MeDZO in regime $i$ is controlled by the Hölder bound on the grid of mesh size $1/K_i$, and by the regret suffered during the previous regimes,
		\begin{equation}\label{eq:boundapprox}
		\Delta T_i \,\big(M(f) - \E[M_i^\star] \big)  \leq \Delta T_i \,  \min \! \left( L\, \frac{1}{K_i^\alpha} , \min_{j < i}\,    \bigg(  \frac{\overline R_{T_{j-1}, T_j}}{\Delta T_j} \bigg)\right)
		\end{equation}
	\end{lemma}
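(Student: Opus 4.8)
The plan is to bound the quantity $M(f) - \E[M_i^\star]$ by comparing $M_i^\star$ against two different competitors among the measures available in regime $i$: one of the uniform measures in $\Disc(K_i)$, and each of the empirical measures $\what\nu_j$ with $j < i$. Since $M_i^\star$ is a maximum over all these measures, it dominates each one, so it suffices to produce a good bound for each competitor separately and then take the minimum.

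First I would handle the Hölder term. Let $x^\star$ be a maximizer of $f$ as in the Hölder condition~\eqref{eq:holdercondition}, and let $\pi_{j_0}^{(i)} \in \Disc(K_i)$ be the uniform measure over the partition interval of length $1/K_i$ that contains $x^\star$. Then every point $x$ in that interval satisfies $|x - x^\star| \leq 1/K_i$, so $f(x^\star) - f(x) \leq L |x^\star - x|^\alpha \leq L/K_i^\alpha$. Integrating against $\pi_{j_0}^{(i)}$ gives $M(f) - \pi_{j_0}^{(i)}(f) \leq L/K_i^\alpha$, and since $M_i^\star \geq \pi_{j_0}^{(i)}(f)$ (deterministically), we get $M(f) - \E[M_i^\star] \leq L/K_i^\alpha$. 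Multiplying by $\Delta T_i$ yields the first term inside the minimum.

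Next I would handle the memory term. Fix any $j < i$. By construction $\what\nu_j$ is among the measures $\Pi_i$ offered in regime $i$, so $M_i^\star \geq \what\nu_j(f)$ deterministically, hence $M(f) - \E[M_i^\star] \leq M(f) - \E[\what\nu_j(f)]$. Now $\what\nu_j$ is the empirical distribution of the arms $X_t$ played during regime $j$, i.e. $\what\nu_j(f) = \frac{1}{\Delta T_j}\sum_{t \in \text{Regime } j} f(X_t)$, so by linearity $\E[\what\nu_j(f)] = \frac{1}{\Delta T_j}\sum_{t \in \text{Regime } j}\E[f(X_t)]$. Therefore
\begin{equation*}
M(f) - \E[\what\nu_j(f)] = \frac{1}{\Delta T_j}\sum_{t \in \text{Regime } j}\big(M(f) - \E[f(X_t)]\big) = \frac{\overline R_{T_{j-1}, T_j}}{\Delta T_j}\,,
\end{equation*}
exactly the ratio appearing in~\eqref{eq:boundapprox}. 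Multiplying by $\Delta T_i$ and taking the minimum over $j < i$ and with the Hölder term finishes the proof. The one technical point to be careful about — already flagged in the main proof right after~\eqref{eq:regretperiodupbound} — is measurability: the bound $M_i^\star \geq \what\nu_j(f)$ is a pointwise (not just in-expectation) inequality, so passing to expectations is immediate; and $\what\nu_j$ is $\mathcal F_{T_{i-1}}$-measurable, which is what legitimizes treating it as a fixed distribution inside regime $i$. There is no real obstacle here; the content of the lemma is entirely in recognizing that the empirical-measure competitor converts a past regret into a current approximation error, which is the structural heart of the algorithm.
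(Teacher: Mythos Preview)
Your proposal is correct and follows essentially the same approach as the paper's own proof: bound $M(f)-M_i^\star$ pointwise by comparing against the uniform measure on the cell containing $x^\star$ (giving the H\"older term) and against each $\what\nu_j$ (giving the regret-ratio term), then take expectations and the minimum. The paper's proof is terser but identical in content; your added remarks on measurability and the pointwise nature of the inequalities are accurate and harmless.
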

	\begin{proof}
	This derives easily from the construction of the algorithm, i.e., from the definition of $M_i^\star$. 
	Considering an interval in the regular $K_i$-partition that contains a maximum of $f$, by the Hölder property, 
	$
	M(f) - M_i^\star \leq  \ifrac{L}{K_i^\alpha} \, .
	$
	For the second minimum, as described in Eq.~\eqref{eq:descriptionapprox}, for $j < i$,
	\begin{equation*}
	M(f) - M_i^\star 
	\leq M(f) - \what{\nu}_{j}(f) 
	= \frac{1}{\Delta T_{j}} \sum_{t = T_{j -1}+1}^{T_{j}} \big( M(f) - f(X_t) \big) \,.
	\end{equation*}
	Taking an expectation, $\overline R_{T_{j-1}, T_j}$ appears, and we conclude by taking the minimum over $j$.
	\end{proof}
	
	 Remember that since $K_i \Delta T_i = 2^{2p + 2} $, we have $L\, \ifrac{\Delta T_i}{K_i^\alpha} = L \ifrac{2^{2p+2}}{K_i^{1 + \alpha}} $. Therefore, the first bound on the approximation error in \eqref{eq:boundapprox} increases with $i$, as $K_i$ decreases with $i$. 
	Denote by $i_0$ the last time regime $i$ for which
	\begin{equation}\label{eq:i_0}
	L \frac{ \Delta T_{i_0}}{K_{i_0}^\alpha} \leq B \, .
	\end{equation}
	If this is never satisfied, i.e., not even for $i = 1$, then $L 2^{p+1} /2^{\alpha (p+1)} > B$ which yields, using $B \leq2^p \leq 2B$, that $4LB \geq 2^{\alpha + 1}B^\alpha B$ and then $L > B^\alpha / 2$. In that case, $L^{1 / (\alpha + 1)}B^{-\alpha/(\alpha + 1)} \geq 1$ and the total regret bound \eqref{eq:totalbound} is true as it is weaker than the trivial bound $R_T \leq T$.
	
	Hence we may assume that $i_0 \geq 1$ is well defined. By comparing $i$ to $i_0$, we now show the inequality
	\begin{equation}\label{eq:approxsum}
	\sum_{i = 1}^p \Delta T_i \big(M(f) - \E[M_i^\star]\big)  
	\leq\sum_{i = 1}^{i_0} B +  \sum_{i = i_0 + 1}^{p} 2(1 + 72 \sqrt p )\Delta T_{i} \,   L^{1 / (\alpha + 1)} B^{-\alpha /(\alpha + 1)} \, .
	\end{equation}	
	
	For all $i \leq i_0$ the approximation error is smaller than the first argument of the minimum in~\eqref{eq:boundapprox}, and this term is smaller than $B$. Therefore
	$
	\Delta T_i \big(M(f) - \E[M_i^\star]\big)   \leq B \, .
	$
	In particular, this together with \eqref{eq:app+cl} and \eqref{eq:alldiscreteerror} implies that the total regret suffered during regime $i_0$ is $\overline R_{T_{i_0-1}, T_{i_0 }} \leq  (1 + 72 \sqrt p) B$.
	
	For the later time regimes $i_0 <i \leq p$, we use the fact that preceding empirical measures were kept as discrete actions, and in particular the one of the $i_0$-th regime: \eqref{eq:boundapprox} instantiated with $j = i_0$ yields
	\begin{equation}\label{eq:approxbound}
	\Delta T_{i}\, \big(M(f) - \E[M_i^\star]\big) 
	\leq \Delta T_{i} \frac{\overline R_{T_{i_0-1}, T_{i_0}}}{\Delta T_{i_0}} 
	\leq \big(1 + 72 \sqrt p \big) \Delta T_i \frac{B}{\Delta T_{i_0}}  \, .
	\end{equation}
	
	Solving equations $L \Delta T_{i_0} / K_{i_0}^\alpha \approx B \approx 4 \sqrt{\Delta T_{i_0} K_{i_0}} $, we get
	$
	\ifrac{B}{\Delta T_{i_0}} \leq 2\,L^{1 / (\alpha + 1)} B^{- \alpha /(\alpha +1)} \, ,
	$
	(details are in Appendix~\ref{app:proofcalc}).
	Therefore for $i_0 < i \leq p $, using \eqref{eq:approxbound}, 
	\begin{equation*}
	\Delta T_i \big(M(f) - \E[M_i^\star]\big)  \leq 2 (1 + 72 \sqrt p) \, \Delta T_i \, \,   L^{1 / (\alpha + 1)} B^{-\alpha /(\alpha + 1)}\, ,
	\end{equation*}
	and we obtain \eqref{eq:approxsum} by summing over $i$.
	
	\paragraph{Conclusion}
	We conclude with some crude boundings. First, as $i_0 \leq p$ and the sum of the $\Delta T_i$'s is smaller than $T$, the total approximation error is less than
	$
		p B + 2 (1 + 72 \sqrt p ) T L^{1 / (\alpha + 1)} B^{- \alpha / (\alpha + 1)}.
	$
	Let us include the cost of learning, which is smaller than $72 p \sqrt p B$ and conclude, using $a + b \leq \max(a, b)$
	\begin{equation}\label{eq:almostfinalbound}
	\begin{split}
	\overline R_T 
	&\leq 2(1 + 72 \sqrt p )T  L^{1 / (\alpha + 1)} B^{-\alpha /(\alpha + 1)}+ p  B +  72 p^{3/2} B  \\
	& = 2(1 + 72 \sqrt p )T  L^{1 / (\alpha + 1)} B^{-\alpha /(\alpha + 1)} +  p(1 + 72 \sqrt{p} )B \\
	&\leq \Big(2(1 + 72 \sqrt p ) + p \big(1 + 72 \sqrt p \big) \Big)\max\big(B,\, T  L^{1 / (\alpha + 1)} B^{-\alpha /(\alpha + 1)}\, \big)
	\end{split}
	\end{equation}
	from which the desired bound follows, using $1 \leq p$, and $p \leq 2\log_2 B$ and $4(1+ 72 \sqrt 2) \leq 412$.
\end{proof}
\vspace{-0.27cm}

\section{Further considerations}\label{sec:further}
\vspace{-0.2cm}

\paragraph{Local regularity assumption}
Theorem~\ref{th:and} holds under a relaxed smoothness assumption, namely that the function satisfies the Hölder condition only in a small cell containing the maximum. By looking carefully at the proof, we observe that the condition is only required up to the $i_0$-th epoch (defined in \eqref{eq:i_0}), at which the size of the cells in the discretization is of order $1 / K_{i_0} \approx (LB)^{-1/(1 + \alpha)}$. 
Therefore we only need condition~\eqref{eq:holdercondition} to be satisfied for points $x$ in an interval of size $(LB)^{-1/(1 + \alpha)}$ around the maximum.

\paragraph{Higher dimension}
Our results can be generalized  to functions $[0, 1]^d \to [0,1]$ that are $\lVert \cdot \rVert_{\infty}$-Hölder. For MeDZO to be well-defined, take $K_i = 2^{d(p+2-i)}$ and $\Delta T_i = 2^{d(p+i)}$, with $p \approx (\log B) / d$.
The bounds are similar to their one-dimensional counterparts, up to replacing $\alpha$ by $\alpha / d$ in the exponents, but the constants are deteriorated by a factor that is exponential in $d$. The bound in Theorem~\ref{th:and} changes to $\max\big( B, L^{d /(\alpha + d)}T B^{-\alpha / (\alpha + d)} )$.


\section*{Acknowledgements}
We would like to thank Gilles Stoltz and Pascal Massart for their valuable comments and suggestions.


\bibliography{biblio_adaptation}
\medskip

\clearpage

\onecolumn
\begin{center}
  {\Large\bf Supplementary Material for \\
  	"Polynomial Cost of Adaptation for $\mathcal X$-Armed Bandits"}
\end{center}          

\renewcommand{\thesubsection}{\Alph{subsection}}
\setcounter{subsection}{0}

\subsection{Omitted proofs}\label{app:proofs}

\subsubsection{Proposition \ref{prop:discretizebound}: Regret bound for non-adaptive CAB1.1}
\label{app:proofscab}
This proof is a straightforward application of the Hölder bound and of the bound of MOSS, together with the approximation/cost of learning decomposition of the regret. Some extra care is needed to handle the boundary cases.
\begin{proof}[Proof of Proposition \ref{prop:discretizebound}]
	Choose $f \in \mathcal H(L, \alpha)$. Let us denote by $i^\star$ an integer such that there exists an optimal arm $x^\star$ in the interval $\big[(i^\star-1)/K^{\star}, i^\star / K^{\star} \big]$.  By the Hölder assumption
	\begin{equation*}
	\frac{1}{K^\star}\int_{(i^\star - 1)/K^\star}^{i^\star/K^\star} \big(f(x^\star) - f(x)\big) \dint x 
	\leq L \left(\frac{1}{K^\star}\right)^\alpha \, ,
	\end{equation*}
	and this upper bounds the approximation error of the discretization.
	Moreover, since $T \geq K^\star $, the cost of learning is smaller than $18 \sqrt{K^\star T}$. Thus by \eqref{eq:decompositionregret}
	\begin{equation*}
	\Rt
	\leq T L \left(\frac{1}{K^\star}\right)^\alpha + 18 \sqrt{K^\star T}\, .
	\end{equation*}
	$K^\star$ was chosen to minimize this quantity. We distinguish cases depending on the value of $K^\star$.
	
	If $1 < K^\star < T$, then $L^{2 / (2\alpha + 1)} T^{1 / (2\alpha + 1)} \leq K^\star \leq 2 L^{2 / (2\alpha + 1)} T^{1 / (2\alpha + 1)} $ (the bound $\ceil{x} \leq 2x$, which is valid when $x \geq 1$, is more practical to handle the multiplicative constants), we deduce the upper bound:
	\begin{equation*}
	\big(1  + 18 \sqrt 2\big) \, L^{1 / (2\alpha + 1)} T^{(\alpha+1) / (2\alpha + 1)}\, .
	\end{equation*}

	Since we assumed that $L > 1/\sqrt T$, we have always $K^\star > 1$. Therefore the last case to consider is if $K^\star = T$. Then
	$
	L^{2 / (2\alpha + 1)} T^{1 / (2\alpha + 1)} \geq T/2
	$
	and thus
	$
	L \geq 2^{-(2\alpha + 1)/2} \, T^\alpha  \,.
	$
	In this case
	$
	L^{1/(2\alpha +1 )} T^{(\alpha+ 1)/(2\alpha + 1)} \geq  (\sqrt 2 / 2) T
	$
	and the claimed bound is met since in that case, we have by a trivial bound $\Rt \leq T \leq \sqrt 2 \, L^{1/(2\alpha +1 )} T^{(\alpha+ 1)/(2\alpha + 1)} $.
\end{proof}

\subsubsection{Proposition \ref{prop:rateslow}: Lower bound on the adaptive rates}

\begin{proof}[Proof of Proposition \ref{prop:rateslow}]
	Choose $\alpha $, $\gamma$ such that $\alpha \leq \gamma$, and $\epsilon>0$. Set $L >0 $. There exist constants $c_1$ and $c_2$ (depending on $L, \alpha, \gamma$ and $\epsilon$) such that for $T$ large enough, 
	\begin{equation*}
	\sup_{f \in \mathcal H(L, \alpha)} \Rt \leq c_1 T^{\theta(\alpha) + \epsilon}
	\quad \text{ and } \quad
	\sup_{f \in \mathcal H(L, \gamma)} \Rt \leq c_2 T^{\theta(\gamma) + \epsilon} \, .
	\end{equation*}
	Moreover, for $T$ large enough, the assumptions for lower bound \eqref{eq:adaptivelow} hold. Hence applying the lower bound with $B = c_2 T^{\theta(\gamma) + \epsilon}$, for some constant $c_3$:
	\begin{equation*}
	c_1 T^{\theta(\alpha) + \epsilon} 
	\geq 0.0001\,  T\big(c_2 T^{\theta(\gamma) + \epsilon}\big)^{- \alpha / (\alpha + 1)}
	\geq 
	 c_3 \, T^{1 - \theta(\gamma ) \alpha / (\alpha + 1) - \epsilon \alpha / (\alpha + 1)}
	\end{equation*}
	Since the above inequality holds for any $T$ sufficiently large, this implies that for all $\epsilon > 0$
	\begin{equation*}
	\theta(\alpha) + \epsilon
	\geq 1 - \theta(\gamma)\frac{\alpha}{\alpha + 1} - \epsilon \frac{\alpha}{\alpha + 1},
	\end{equation*}
	which yields the desired result as $\epsilon \to 0$.
\end{proof}
\vspace{1cm}
\subsubsection{Theorem \ref{thm:admrates}: Admissible rate functions}
We prove here that all the admissible rate functions belong to the family $(\theta_m)$, by relying on Proposition \ref{prop:rateslow}. The proof is done through a careful inspection of the functional inequation defining the lower bound.
\begin{proof}[Proof of Theorem \ref{thm:admrates}]
	First of all, by Corollary~\ref{cor:medzorates}, the appropriately tuned MeDZO may achieve all the $\theta_m's$. Thus we are left to prove the lower bound side, i.e., that all the admissible rate functions belong to the family $\theta_m$.
	
	The best way to see this is to first notice that for $\theta$ nonincreasing and positive, the inequation in Proposition \ref{prop:rateslow} is equivalent to 
	\begin{equation}\label{eq:rateslow2}
	\forall\, \alpha > 0 \, ,   \quad  \theta(\alpha) \geq 1 - \theta(\infty) \frac{\alpha}{\alpha + 1} \, .
	\end{equation}
	Notice that taking $\gamma = + \infty$ is always valid in what follows, as $\theta$ is assumed to be nonincreasing and lower bounded by $1/2$.
	Now if $\theta$ satisfies \eqref{eq:rateslow1}, then it satisfies \eqref{eq:rateslow2} by taking $\gamma = + \infty$. For the converse, consider $\alpha \leq \gamma$, then $\theta(\gamma) \geq \theta(\infty)$, thus $1 - \theta(\infty) \alpha / (\alpha + 1) \geq 1 - \theta(\gamma) \alpha / (\alpha + 1)$. 
		
 	Now consider an admissible $\theta$. Since $\theta$ is achieved by some algorithm, by Proposition~\ref{prop:rateslow} and the remark above, it satisfies Eq.~\eqref{eq:rateslow2}. As $\theta$ is nonincreasing, and by Eq.~\eqref{eq:rateslow2}, we have $\theta(\alpha) \geq \theta(\infty)$ and $\theta(\alpha) \geq 1 - \theta(\infty)\alpha / (\alpha + 1)$.  In other words, $\theta \geq \theta_{m_{\theta}}$, where $m_\theta = \theta(\infty) \in [1/2, 1]$. By the admissibility of $\theta$, this implies that $\theta = \theta_{m_\theta}$.
\end{proof}

\subsubsection{Calculations in the proof of Theorem~\ref{th:and}} \label{app:proofcalc}
\begin{proof}[Details on \eqref{eq:approxsum}, in the proof of Theorem~\ref{th:and}]

	By definition of $i_0$, and since we assumed that $i_0 < p$
	\begin{equation*}
	B 
	\leq  L \, \frac{\Delta T_{i_0+1} }{K_{i_0 + 1}^\alpha}\, ,
	\end{equation*}
	i.e., using $ K_{i_0} \, \Delta  T_{i_0}   = 2^{2p+2}$, 
	\begin{equation*}
	B 
	\leq 2^{1 + \alpha} L \,  \frac{\Delta T_{i_0}}{K_{i_0}^\alpha}  
	= 2^{1 + \alpha}  L \, (\Delta T_{i_0})^{1 + \alpha} \, 2^{-(2 p  +2)\alpha} \, .
	\end{equation*}
	From this we deduce, using $2^{p} \geq  B$ for the second inequality,
	\begin{equation*}
	\big(\Delta T_{i_0}\big)^{(1+\alpha)} 
	\geq 2^{-1 - \alpha} B L^{-1} 2^{ (2p+2)\alpha }
	\geq 2^{-1 + \alpha} L^{-1} B^{2\alpha +1} \, .
	\end{equation*}
	Hence, using $2^{(\alpha  - 1) / (\alpha + 1)} \geq 1/2$, we obtain $ \Delta T_{i_0}	\geq (1/2) L^{-1 / (\alpha + 1)} B^{(2 \alpha  + 1)/ (\alpha + 1)}  $, thus
	$
		\ifrac{B}{\Delta T_{i_0}} \leq 2\,L^{1 / (\alpha + 1)} B^{- \alpha /(\alpha +1)} \, .
	$
\end{proof}
\subsection{Anytime-MeDZO and proof}\label{app:anytimeproof}
The doubling trick is the most standard way of converting non-anytime algorithms into anytime algorithms, when the regret bound is polynomial. It consists in taking fresh starts of the algorithm over a grid of dyadic times. The implementation of the trick is straightforward in our case.
	\begin{algorithm}[H]
		\begin{algorithmic}[1]
			\caption{Doubling trick MeDZO}
			\label{alg:adaptivediscretizationdoublingtrick}
			\STATE \textbf{Input}: parameter $m \in [1/2, 1]$;
			\FOR{ $i = 0, \,  \dots $ }
			\STATE Run MeDZO (Alg. \ref{alg:adaptivediscretization}) with input $B = 2^{im}$ for $2^i$ rounds
			\ENDFOR
		\end{algorithmic}
	\end{algorithm}
\begin{corollary*}[Doubling trick version]
	Choose $m \in [1/2, 1]$. The doubling-trick version of MeDZO, run with $m$ as sole input (and without the knowledge of T) ensures that for all regularity parameters $\alpha >0$ and $L > 0$ and for $T \geq 1$
	\begin{equation*}
	\! \!\sup_{f \in \mathcal H (L, \alpha)} \!\! \Rt 
	\leq  4000 (\log_2 T^m )^{3/2}  \max \big( T^m , T L^{1/(\alpha + 1) }(T^m)^{-\alpha / (\alpha + 1)} \big) \!
	= \! {\mathcal O} \big( (\log T)^{3/2}\, T^{\theta_m(\alpha)}\big) .
	\end{equation*}
\end{corollary*}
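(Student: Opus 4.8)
**The plan is to apply the doubling trick to Theorem~\ref{th:and} in the standard way, summing the per-regime bounds over the dyadic blocks.** Write $T^{(k)} = 2^k$ for the length of the $k$-th doubling block, and let $N = \lfloor \log_2 T \rfloor$ so that the blocks $k = 0, 1, \dots, N$ cover $[1, T]$ (the last block may be truncated, which only decreases the regret). Within block $k$, MeDZO is run with horizon $2^k$ and input $B_k = 2^{km} = (T^{(k)})^m$. Since $m \geq 1/2$, we have $B_k = 2^{km} \geq 2^{k/2} = \sqrt{T^{(k)}}$, so the hypothesis $B \geq \sqrt T$ of Theorem~\ref{th:and} is met and we may apply \eqref{eq:totalbound} in each block: the regret incurred in block $k$ is at most
\begin{equation*}
412 \, (\log_2 B_k)^{3/2} \max\!\big( B_k,\; T^{(k)} L^{1/(\alpha+1)} B_k^{-\alpha/(\alpha+1)} \big)
= 412 \, (km)^{3/2} \max\!\big( 2^{km},\; 2^k L^{1/(\alpha+1)} 2^{-km\alpha/(\alpha+1)} \big).
\end{equation*}

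**The key step is then to sum a geometric-type series.** Note that $2^{km} = 2^{k\theta_m(\alpha)\cdot(m/\theta_m(\alpha))} \le 2^{k\theta_m(\alpha)}$ when $\theta_m(\alpha)\ge m$, which holds by definition \eqref{eq:thetam}; similarly $2^k L^{1/(\alpha+1)} 2^{-km\alpha/(\alpha+1)} = L^{1/(\alpha+1)} 2^{k(1 - m\alpha/(\alpha+1))} \le L^{1/(\alpha+1)} 2^{k\theta_m(\alpha)}$. Hence each block's contribution is at most $412\,(km)^{3/2} (1 + L^{1/(\alpha+1)}) 2^{k\theta_m(\alpha)}$, and since $\theta_m(\alpha) \ge 1/2 > 0$ strictly, the sum $\sum_{k=0}^{N} (km)^{3/2} 2^{k\theta_m(\alpha)}$ is dominated by its last term up to an absolute geometric-series constant: $\sum_{k=0}^N 2^{k\theta_m(\alpha)} \le 2^{(N+1)\theta_m(\alpha)}/(2^{\theta_m(\alpha)}-1) \le C\, 2^{N\theta_m(\alpha)} \le C\, T^{\theta_m(\alpha)}$, with the polylog factor $(Nm)^{3/2} \le (\log_2 T^m)^{3/2}$ pulled out of the sum. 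Collecting constants (the $412$, the geometric-ratio constant $1/(2^{1/2}-1) < 3.5$, and slack for the $(km)^{3/2}$ versus $(Nm)^{3/2}$ comparison), one lands below $4000$; and rewriting $2^{N\theta_m(\alpha)} \le T^{\theta_m(\alpha)} = \max(T^m, T\,L^{1/(\alpha+1)}(T^m)^{-\alpha/(\alpha+1)})$ by the very definition of $\theta_m$ gives both displayed forms of the bound.

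**The main obstacle is purely bookkeeping of the constant $4000$**, rather than anything conceptual: one must be slightly careful that the factor $(km)^{3/2}$ inside the sum is not constant, so the sum is not a pure geometric series; the clean fix is to bound $(km)^{3/2} \le (Nm)^{3/2} = (\log_2 2^{Nm})^{3/2} \le (\log_2 T^m)^{3/2}$ uniformly over the $N+1$ blocks, pull it out, and then sum the genuine geometric part $\sum_k 2^{k\theta_m(\alpha)}$. A secondary point requiring a line of care is the boundary regime where $L^{1/(\alpha+1)}(T^m)^{-\alpha/(\alpha+1)} \ge 1$ (equivalently the max is achieved by the second term): there one checks that $2^k L^{1/(\alpha+1)}2^{-km\alpha/(\alpha+1)}$ still has exponent $1 - m\alpha/(\alpha+1) = \theta_m(\alpha)$ when $m\alpha/(\alpha+1) \le 1-m$, i.e. exactly on the branch where $\theta_m(\alpha) = 1 - m\alpha/(\alpha+1)$, and otherwise the first term dominates — so in all cases the per-block exponent is $\le \theta_m(\alpha)$ and the geometric summation goes through. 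Finally, the $\mathcal{O}((\log T)^{3/2} T^{\theta_m(\alpha)})$ restatement is immediate once the explicit bound is in hand.
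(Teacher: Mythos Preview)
Your overall plan (doubling trick plus summation of geometric series) matches the paper's, but the treatment of the $L$-dependence breaks. The step where you write
\[
2^{N\theta_m(\alpha)} \leq T^{\theta_m(\alpha)} = \max\!\big(T^m,\; T\,L^{1/(\alpha+1)}(T^m)^{-\alpha/(\alpha+1)}\big)
\]
is simply false: by definition $T^{\theta_m(\alpha)} = \max(T^m, T\,(T^m)^{-\alpha/(\alpha+1)})$, with no $L$ anywhere. So after your intermediate bound of the form $(1+L^{1/(\alpha+1)})\,T^{\theta_m(\alpha)}$ you cannot recover the stated $\max(T^m, T L^{1/(\alpha+1)}(T^m)^{-\alpha/(\alpha+1)})$. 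Concretely, take $L<1$ small and $\alpha$ such that $\theta_m(\alpha)=1-m\alpha/(\alpha+1)>m$: your bound is of order $T^{1-m\alpha/(\alpha+1)}$, while the target $\max$ is only $\max(T^m,\,L^{1/(\alpha+1)}T^{1-m\alpha/(\alpha+1)})$, which is strictly smaller. Symmetrically, for $L>1$ on the branch $\theta_m(\alpha)=m$ your bound picks up a spurious $L^{1/(\alpha+1)}$ factor in front of $T^m$. In short, collapsing both terms to the common exponent $\theta_m(\alpha)$ throws away exactly the information that makes the stated bound sharp in $L$.

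The fix is easy and is what the paper does: do \emph{not} pass to $\theta_m(\alpha)$. Bound $\max(a_k,b_k)\leq a_k+b_k$ (or start from~\eqref{eq:almostfinalbound}, which is already in additive form) and sum the two geometric series \emph{separately}:
\[
\sum_{k\leq N} 2^{km}\ \lesssim\ T^m
\qquad\text{and}\qquad
\sum_{k\leq N} L^{1/(\alpha+1)}\,2^{k(1-m\alpha/(\alpha+1))}\ \lesssim\ T\,L^{1/(\alpha+1)}(T^m)^{-\alpha/(\alpha+1)}\,,
\]
then recombine via $a+b\leq 2\max(a,b)$. Both ratios are at least $\sqrt{2}$ (for the first because $m\geq 1/2$; for the second the paper also asserts this), giving the $1/(\sqrt 2-1)$ geometric constant you mentioned. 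Your argument \emph{does} correctly yield the $\mathcal O\big((\log T)^{3/2}T^{\theta_m(\alpha)}\big)$ asymptotic, since $L$ is absorbed into the constant there; the gap is only for the explicit finite-time bound with the $\max$.
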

As the regret bound is not exactly of the form $cT^{\theta}$, we work with the polynomial version of the bound on the regret of MeDZO, equation~\eqref{eq:almostfinalbound}, for the doubling trick to be effective. Obviously the value of the constant in the bound is not our main focus, but we still write it explicitly as it shows that there is no hidden dependence on the various parameters.
\begin{proof}
	By \eqref{eq:almostfinalbound}, with $p_i = \ceil{\log_2 2^{im}} \leq  1 + \log_2 2^{im}$, in the $i$-th doubling trick regime, the cumulative regret is bounded by
	\begin{equation*}
		2(1 + 72 \sqrt{ 1+ \log_2 2^{im} })2^i  L^{1 / (\alpha + 1)} (2^{im})^{-\alpha /(\alpha + 1)} +  (1 + \log_2 2^{im}) \big(1 + 72 \sqrt{1 + \log_2 2^{im} } \big) 2^{im}
	\end{equation*}
	Now since
	\begin{equation*}
	\sum_{i = 0}^{\ceil{\log_2 T}} 2^i= 2^{\ceil{\log_2 T} + 1} - 1 \geq 2 T -1 \geq T \, ,
	\end{equation*}
	there are always less than $\ceil{\log_2 T}$ full regimes.
	Therefore, using $\log_2 2^{im} \leq \log_2 T^m$, and summing over the regimes, the first part of this sum is bounded by
	\begin{equation*}
	\begin{split}
		& 2 ( 1 + 72 \sqrt{2 \log_2 T^m})L^{1 / (\alpha + 1)}  \sum_{i = 0}^{\ceil {\log_2 T}} 2^{i(1 - m \alpha / (\alpha + 1))}\\
		& \leq 2 ( 1 + 72 \sqrt{2 \log_2 T^m})L^{1 / (\alpha + 1)} \frac{2^{(\ceil{\log_2 T}+1) (1 - m \alpha / (\alpha +1)) }}{ 2^{1 - m \alpha / (\alpha + 1)} - 1}  \\
		& \leq 2 (1 + 72\sqrt 2) \sqrt{\log_2 T^m} L^{1 /(\alpha + 1)} \frac{2^{2(1 - m \alpha / (\alpha + 1))}}{\sqrt 2 -1} T (T^{m})^{-\alpha / (\alpha + 1)} \\
		& \leq 2 (1 + 72\sqrt 2) \sqrt{\log_2 T^m} L^{1 /(\alpha + 1)} \frac{4}{\sqrt 2 -1} T (T^{m})^{-\alpha / (\alpha + 1)}
	\end{split}
	\end{equation*}
	where we used $2^{\ceil{\log_2 T}} \leq 2 T$; we also used the fact that since $m\geq 1/2$, we always have the inequality $1 - m\alpha /( \alpha + 1) \geq 1/ 2$ to bound the denominator. Similarly, the second part is bounded by
	\begin{equation*}
		2(1 + 72 \sqrt 2 ) (\log_2 T^m)^{3/2}  \sum_{i = 0}^{\ceil {\log_2 T}} 2^{im}
		\leq  2(1 + 72 \sqrt 2 ) (\log_2 T^m)^{3/2} \frac{4}{\sqrt 2 -1} T^m \, .
	\end{equation*}
	All in all, we obtain the same minimax guarantees as if we had known the time horizon in advance, but with an extra multiplicative factor of $4 / (\sqrt2 -1) \approx 9,66$.
\end{proof}

\subsection{Illustration}\label{app:illustration}
In this section we provide a figure to illustrate the behavior of MeDZO in a schematic example. 

MeDZO starts by playing on a fine discretization with a size of order $\sqrt T$, but for a short length of time, of order $\sqrt T$. At the end of the first epoch, it memorizes the empirical distribution of the arms played; then it runs a new instance of CAB1.1 with both the coarser discretization, and the memorized action. This process is repeated until the time horizon is reached.

The payoffs of the memorized actions increase until the size of the discretization reaches a critical value; after that they fluctuate. Therefore MeDZO manages to maintain a regret of order the approximation error at this critical discretization, multiplied by $T$. 
\begin{figure}[H]
	\center
	{\includegraphics[width=\linewidth]{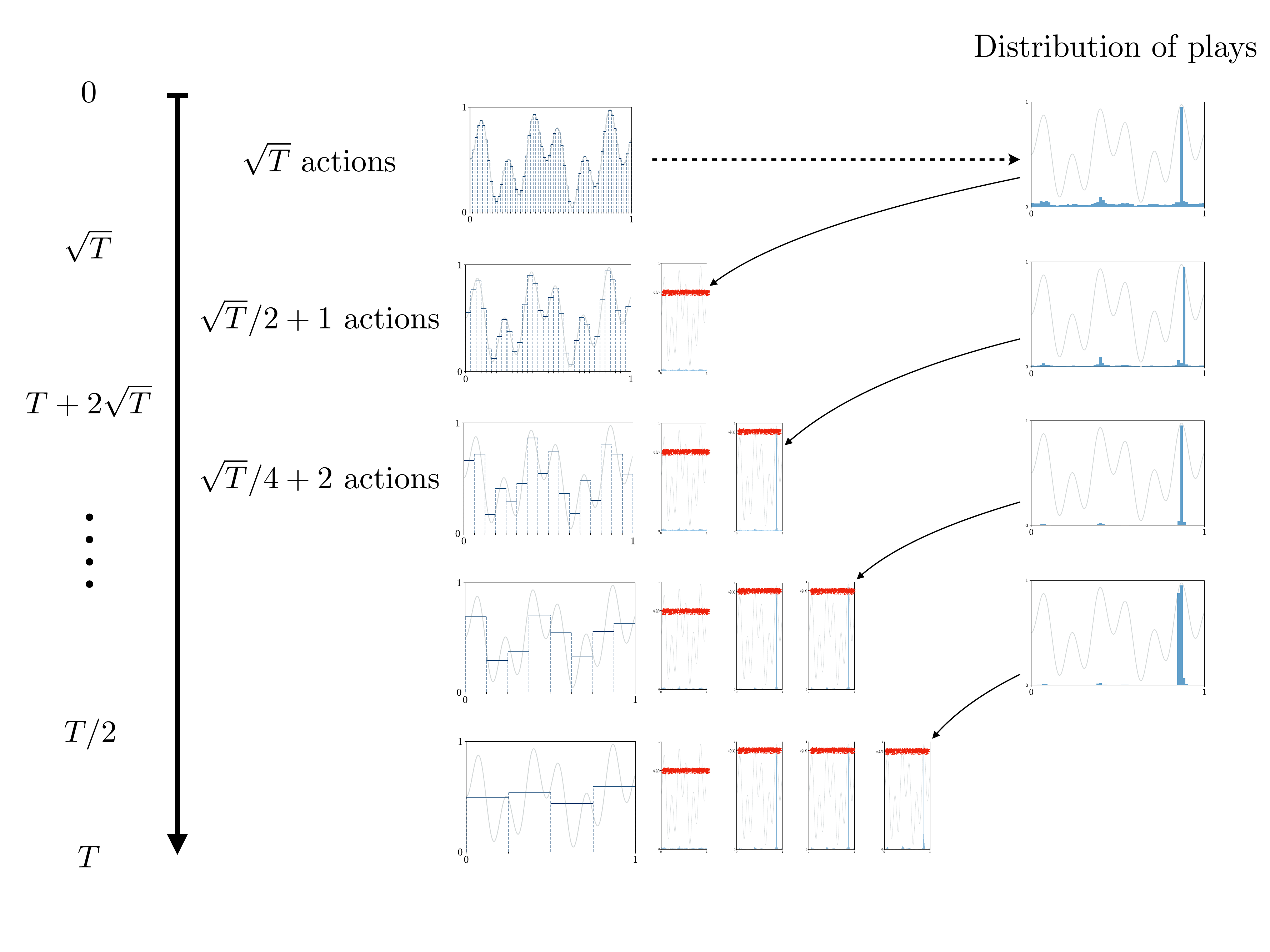}}
	\caption{Behavior of MeDZO on a schematic drawing. The expected payoffs of the memorized actions are displayed in red; those from the usual discretization are in blue. }
	\label{fig:illustration}
\end{figure}

\subsection{Numerical experiments}\label{app:experiments}
This section contains some numerical experiments comparing the regrets of algorithms that require the knowledge of the smoothness, against MeDZO.

We examine bandit problems defined by their mean-payoff functions and gaussian $\mathcal N(0; 1/4)$ noise. The functions considered are $f : x \mapsto (1/2)\sin(13x)\sin(27x) +0.5$ taken from \citet{bubeck2011Xarmed}, $g :x \mapsto  \max\big(3.6\, x(1-x), 1- 1/0.05 \abs{x-0.05}\big)$ adapted from \citet{coquelin2007bandit} and the Garland function $x \mapsto x(1-x) (4-\sqrt{\abs{\sin(60x)}}$, which we took from \citet{valko2013Stochastic}. The functions are plotted in Figure~\ref{fig:problem_plots}.
\begin{figure}[h]
	\centering
	\begin{subfigure}{.33\textwidth}
		\centering
		\includegraphics[width=\linewidth]{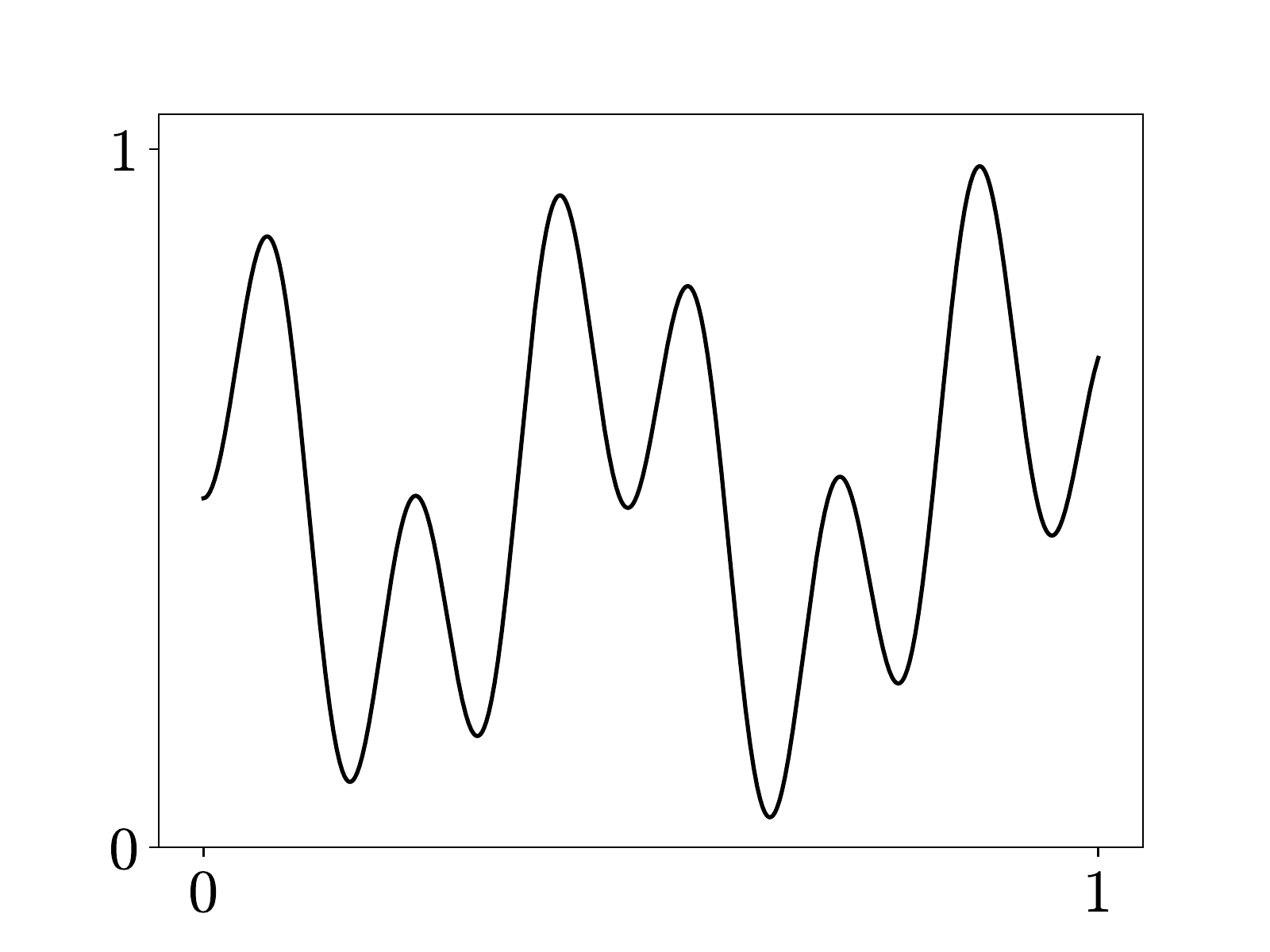}
		\caption{$f $}
		\label{fig:functionf}
	\end{subfigure}%
	\begin{subfigure}{.33\textwidth}
		\centering
		\includegraphics[width=\linewidth]{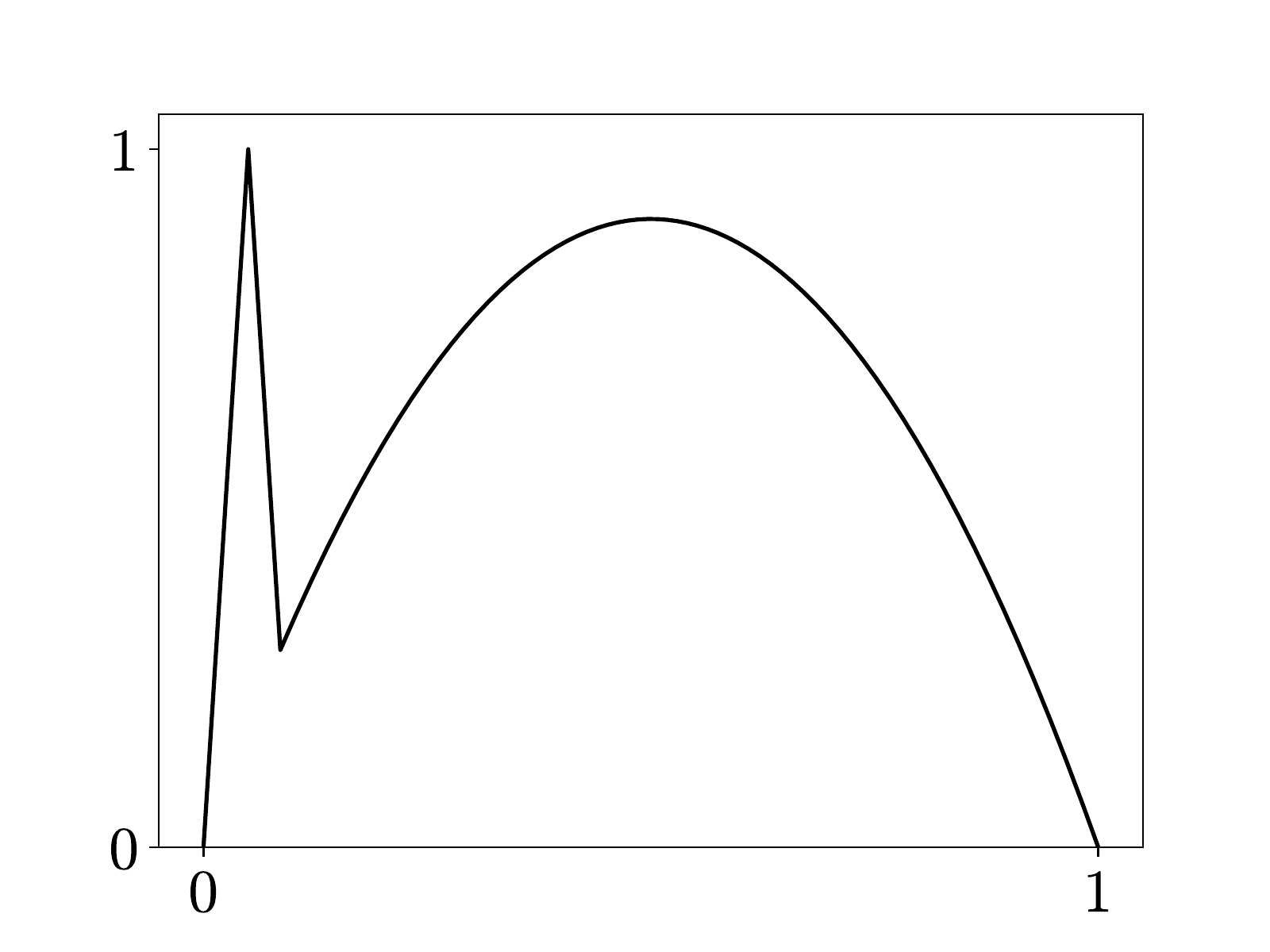}
		\caption{$g$ }
		\label{fig:functiong}
	\end{subfigure}
	\begin{subfigure}{.33\textwidth}
		\centering
		\includegraphics[width=\linewidth]{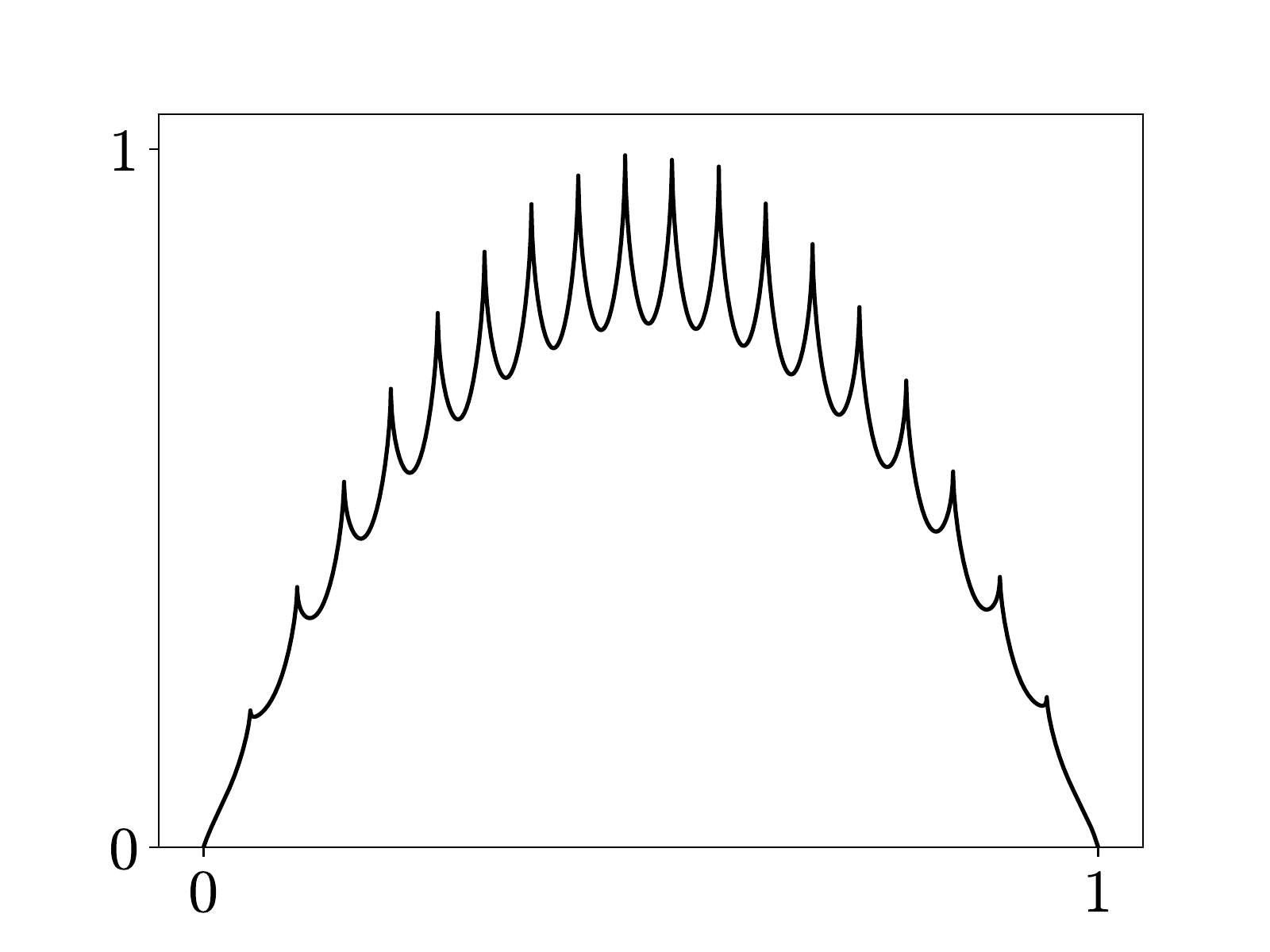}
		\caption{The Garland function}
		\label{fig:functiongar}
	\end{subfigure}
	\caption{Problems considered}
	\label{fig:problem_plots}
\end{figure}

The algorithms we compare are SR from \citet{locatelli2018adaptivity}, and CAB1 from \citet{kleinberg2004Nearly} with MOSS as the discrete algorithm. SR takes directly the smoothness $\alpha$ as an input, and assumes $L=1$. For CAB1, we compute the optimal discretization size for $L=1$ and varying $\alpha$. 

\begin{figure}[h]
	\centering
	\begin{subfigure}{.33\textwidth}
		\centering
		\includegraphics[width=\linewidth]{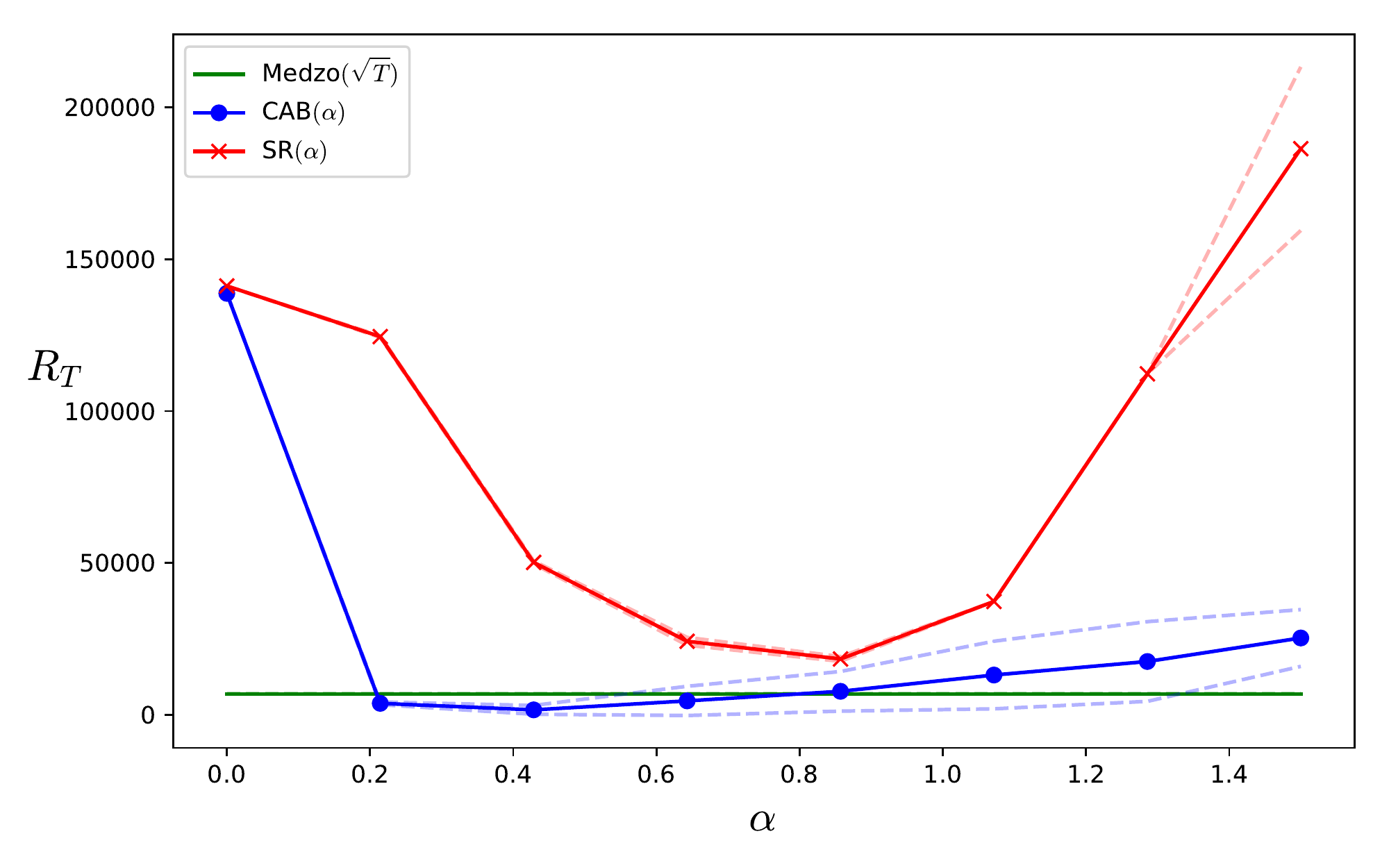}
		\caption{$f $}
		\label{fig:regretf}
	\end{subfigure}%
	\begin{subfigure}{.33\textwidth}
		\centering
		\includegraphics[width=\linewidth]{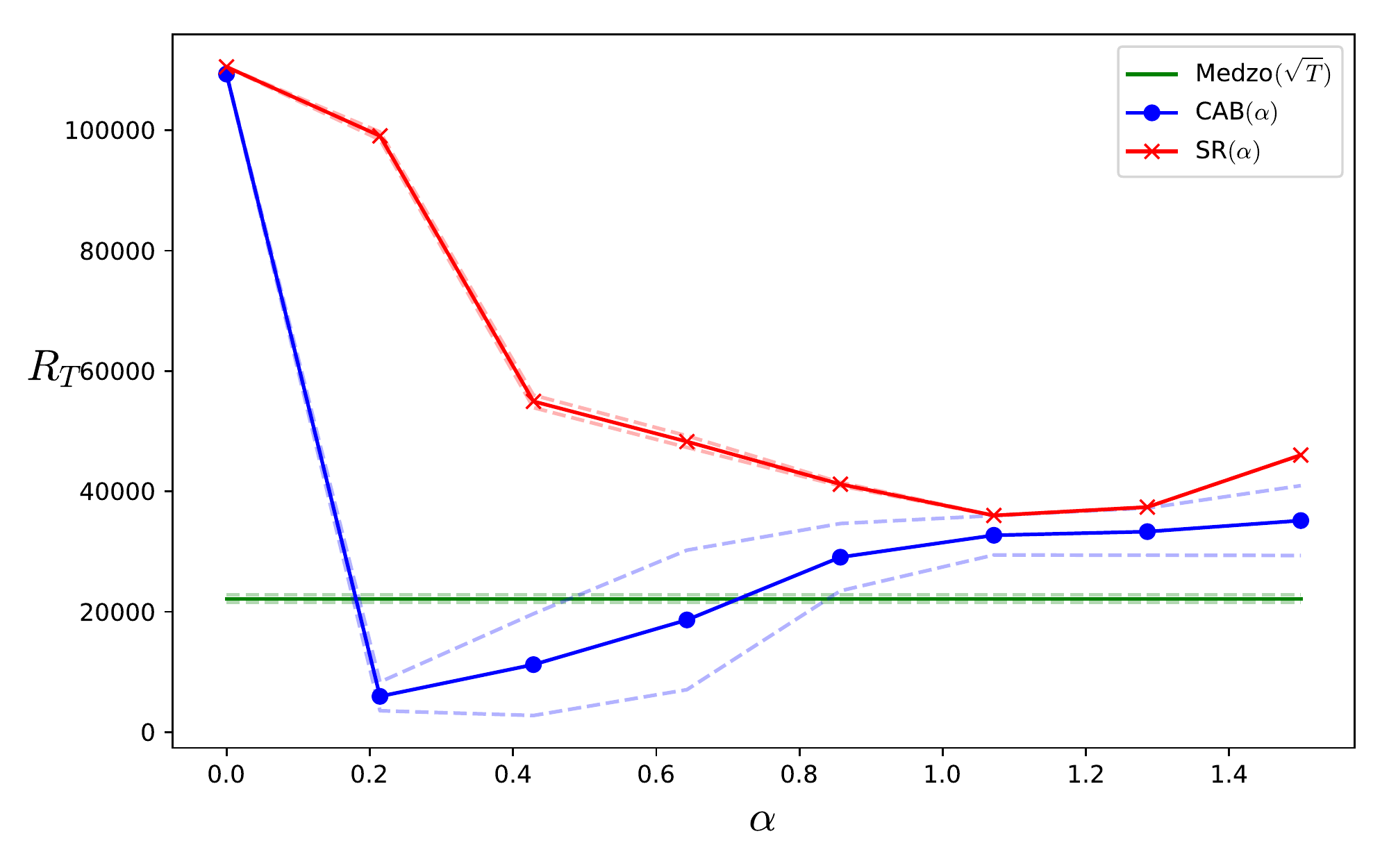}
		\caption{$g$ }
		\label{fig:regretg}
	\end{subfigure}
	\begin{subfigure}{.33\textwidth}
		\centering
		\includegraphics[width=\linewidth]{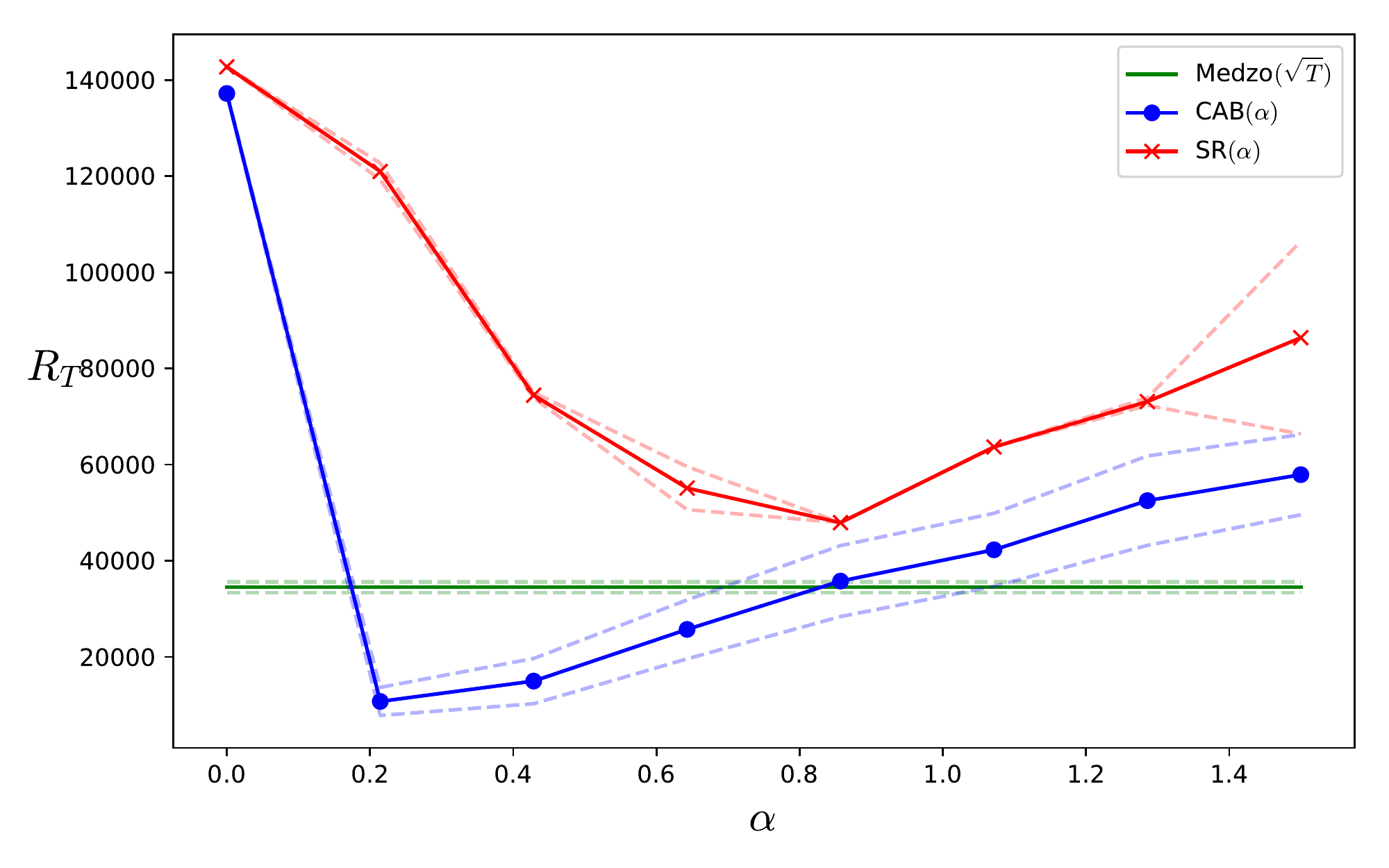}
		\caption{The Garland function}
		\label{fig:regretgar}
	\end{subfigure}
	\caption{Regrets of MeDZO, and of SR and CAB1 run with different values of the smoothness parameter.}
	\label{fig:regrets}
\end{figure}

In Figure \ref{fig:regrets} we plot the cumulative regret of the algorithms after a time horizon $T = 300000$, for varying values of the assumed smoothness. For each problem, MeDZO was run only once, as it does not need to know the smoothness. The regret was averaged over $N=75$ runs, and the dotted curves represent +/- one standard deviation. 

We recall that minimax guarantees are worst-case guarantees, therefore comparing algorithms on a single problem can only serve as an empirical illustration.

As expected, the regrets of both SR and CAB1 depend on some careful tuning of the input parameter, determined by the smoothness. The optimal tuning is unclear, and seems to vary on the algorithm.  MeDZO, on the other hand, obtains reasonable regret with no tuning. Surprisingly, CAB1 with overestimated smoothness seems to behave quite well, although the large variance sometimes makes it difficult to distinguish the results. Recall that MeDZO is the only algorithm with theoretical guarantees for high values of $\alpha$.

\subsection{About simple regret}\label{app:simpleregret}

In this section, we consider the case of \emph{simple regret}, which complements the discussion about adaptation to smoothness in sequential optimization procedures. We write out how to achieve adaptation at usual rates for simple regret under Hölder smoothness assumptions. We do not claim novelty here, as adaptive strategies have already been used for simple regret under more sophisticated regularity conditions (see, e.g.,  \citet{grill2015Blackbox}, \citet{shang2019General} and a sketched out procedure in \citet{locatelli2018adaptivity}); however, we feel the details deserve to be written out in this simpler setting.

Let us recall the definition of simple regret. In some cases, we may only require that the algorithm outputs a recommendation $\tilde X_T$ at the end of the $T$ rounds, with the aim of minimizing the simple regret, defined as
\begin{equation*}
\overline r_T = M(f) - \E \big[ f\big(\tilde X_T\big) \big] \, .
\end{equation*}
This setting is known under various names, e.g., pure exploration, global optimization or black-box optimization. As noted in \citet{bubeck2011Pure}, minimizing the simple regret is easier than minimizing the cumulative regret in the sense that if the decision-maker chooses a recommendation uniformly among the arms played $X_1, \dots, X_T$, then
\begin{equation}\label{eq:simplefromcum}
\overline r_T
= M(f) - \frac{1}{T} \sum_{t = 1}^{T} \E\big[ f\big(X_t\big)\big] 
= \frac{\Rt}{T} \, . 
\end{equation}

The minimax rates of simple regret over Hölder classes $\mathcal H(L, \alpha)$ are lower bounded by $\Omega(L^{1/(2\alpha +1)}T^{-\alpha / (2\alpha +1)})$, which are exactly the  rates for cumulative regret divided by $T$ (see \citet{locatelli2018adaptivity} for a proof of the lower bound). Consequently, at known regularity, any minimax optimal algorithm for cumulative regret automatically yields a minimax recommendation for simple regret via \eqref{eq:simplefromcum}.

When the smoothness is unknown, the situation turns out to be quite different. Adapting to the Hölder parameters can be done at only a (poly-)logarithmic cost for simple regret, contrasting with the polynomial cost of adaptation of cumulative regret. This can be achieved thanks to a very general and simple cross-validation scheme defined in \citet{shang2019General}, named General Parallel Optimization.

\renewcommand{\algorithmiccomment}[1]{#1}
\begin{algorithm}[H]
	\begin{algorithmic}[1]
		\caption{GPO (\underline{G}eneral \underline{P}arallel \underline{O}ptimization) for Hölder minimax adaptation}
		\label{alg:gpoholder}
		\STATE \textbf{Input}: time horizon $T\geq 8$
		\STATE \textbf{Set}: $p = \ceil{\log_2 T}$ and define $K_i = 2^{i}$ for $i = 1, \dots, p$   
		\FOR[\hfill $/ \! /$ Exploration]{$i=1, \ldots, p$ } 
		\STATE For $\floor{T/(2p)}$ rounds, run algorithm \algname{CAB1.1} with the discretization in $K_i$ pieces; use \algname{MOSS} as the discrete algorithm
		\STATE Define output recommendation $\tilde X^{(i)}$, uniformly chosen among the $\floor{ T / (2p)}$ arms played 
		\ENDFOR
		\FOR[\hfill $/ \! /$ Cross-validation]{$i=1, \ldots, p$ } 
		\STATE Play $\floor{T / (2p)}$ times each recommendation $\tilde X^{(i)}$ and compute the  average reward $\hat \mu^{(i)}$
		\ENDFOR

		\RETURN A recommendation $\tilde X_T = \tilde X^{(\hati  )} $ with $\hati   \in \argmax \hat \mu^{(i)}$
	\end{algorithmic}
\end{algorithm}
The next result shows that the player obtains the same simple regret bounds as when the smoothness is known (up to logarithmic factors). 
\begin{theorem}\label{thm:simple}
	\algname{GPO} with \algname{CAB1.1} as a sub-algorithm (Alg. \ref{alg:gpoholder}) achieves, given $T \geq 8$ and without the knowledge of $\alpha$ and $L$, for all $\alpha > 0$ and $L\geq 2^{\alpha + 1/2} \sqrt{\ceil{\log_2 T} /T}$ the bound
	\begin{equation*}
	\sup_{f \in \mathcal H (L, \alpha)} \overline r_T
	\leq \big(54 + \frac{\sqrt{\pi}}{2} \log_2 T \big)         L^{1 / (2\alpha +1)} \bigg(\frac{\ceil{\log_2 T}}{T}\bigg)^{\alpha / (2\alpha + 1)} \! \!  = \tilde {\mathcal O}\! \left(L^{1 / (2\alpha + 1)} T^{-\alpha/ (2\alpha + 1)} \right) \, . 
	\end{equation*}
\end{theorem}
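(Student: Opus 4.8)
The plan is to analyze GPO by noting that the algorithm runs, in parallel, $p = \ceil{\log_2 T}$ instances of \algname{CAB1.1}, each with a dyadic discretization size $K_i = 2^i$ and each run for $\floor{T/(2p)}$ rounds, then uses the second phase as an empirical selection (cross-validation) among the $p$ recommendations. First I would fix $\alpha > 0$ and $L$ as in the statement and let $i^\star$ be the index such that $K_{i^\star}$ is (up to a factor $2$) the minimax-optimal discretization size for $\mathcal H(L,\alpha)$ at horizon $\floor{T/(2p)}$; concretely $i^\star = \ceil{\log_2\big( L^{2/(2\alpha+1)} (T/2p)^{1/(2\alpha+1)}\big)}$, and the hypothesis $L \geq 2^{\alpha+1/2}\sqrt{\ceil{\log_2 T}/T}$ is exactly what guarantees $1 \leq i^\star \leq p$, so that this instance is actually among those run. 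For that instance, Proposition~\ref{prop:discretizebound} (applied with horizon $\floor{T/(2p)}$, which exceeds $K_{i^\star}$) together with the identity $\overline r_{\floor{T/(2p)}} = \overline R_{\floor{T/(2p)}}/\floor{T/(2p)}$ from \eqref{eq:simplefromcum} gives
\begin{equation*}
M(f) - \E\big[f(\tilde X^{(i^\star)})\big] \leq 28\, L^{1/(2\alpha+1)}\Big(\floor{T/(2p)}\Big)^{-\alpha/(2\alpha+1)} \leq 28\cdot 2^{\alpha/(2\alpha+1)}\, L^{1/(2\alpha+1)}(2p/T)^{\alpha/(2\alpha+1)} ,
\end{equation*}
using $\floor{T/(2p)} \geq T/(4p)$ for $T \geq 8$; this produces the leading constant (roughly $28\cdot 2^{1/2}\cdot 2^{1/2} \leq 54$ after absorbing the $2^{\alpha/(2\alpha+1)}$ and $2^{\alpha/(2\alpha+1)}$ factors into $\leq 2$).

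Next I would handle the cross-validation phase. Each recommendation $\tilde X^{(i)}$ is played $\floor{T/(2p)}$ times and $\hat\mu^{(i)}$ is the empirical mean of $(1/4)$-subgaussian rewards centered at $f(\tilde X^{(i)})$; a standard Gaussian-tail / subgaussian maximal inequality gives $\E\big[\max_i (f(\tilde X^{(i)}) - \hat\mu^{(i)})\big]$ and $\E\big[\max_i (\hat\mu^{(i)} - f(\tilde X^{(i)}))\big]$ each bounded by $\tfrac14\sqrt{2\log p \,/\,\floor{T/(2p)}}\cdot\sqrt{2} \asymp \sqrt{\log p\, p/T}$ — this is where the additive $\tfrac{\sqrt\pi}{2}\log_2 T$ term comes from (it is a lower-order term compared to the main rate, but must be carried to get an absolute bound). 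Since $\hati \in \argmax_i \hat\mu^{(i)}$, we have $\hat\mu^{(\hati)} \geq \hat\mu^{(i^\star)}$, so
\begin{equation*}
M(f) - \E\big[f(\tilde X^{(\hati)})\big] \leq \big(M(f) - \E\big[f(\tilde X^{(i^\star)})\big]\big) + \E\Big[\max_i\big(\hat\mu^{(i)} - f(\tilde X^{(i)})\big)\Big] + \E\Big[\max_i\big(f(\tilde X^{(i)}) - \hat\mu^{(i)}\big)\Big],
\end{equation*}
and the three terms are bounded by the display above, $\tfrac{\sqrt\pi}{2}\log_2 T\cdot(\text{something} \leq L^{1/(2\alpha+1)}(\log_2 T/T)^{\alpha/(2\alpha+1)})$ via the same lower bound on $L$, and likewise, respectively. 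Collecting the constants yields the claimed bound.

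The main obstacle is the cross-validation error term: one must argue that the deviation of the empirical means, maximized over $p$ arms, is dominated by (a constant times) $L^{1/(2\alpha+1)}(\log_2 T/T)^{\alpha/(2\alpha+1)}$ rather than just by $\sqrt{p/T}$ in isolation — this is precisely where the assumption $L \geq 2^{\alpha+1/2}\sqrt{\ceil{\log_2 T}/T}$ is used a second time, since it is equivalent to $\sqrt{\ceil{\log_2 T}/T} \leq L^{1/(2\alpha+1)}(\ceil{\log_2 T}/T)^{\alpha/(2\alpha+1)}/2^{\alpha+1/2}$ (one checks the exponents of $\log_2 T/T$ match only after raising to the power $2\alpha+1$). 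Beyond that, everything is bookkeeping: verifying $1\leq i^\star\leq p$, controlling floors by $T\geq 8$, and tracking the numerical constants to land at $54 + \tfrac{\sqrt\pi}{2}\log_2 T$.
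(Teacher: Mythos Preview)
Your approach is correct and follows the same skeleton as the paper's proof: decompose the simple regret into the regret of the best of the $p$ recommendations plus a cross-validation error, bound the first via the approximation/cost-of-learning trade-off at the ``right'' dyadic scale $K_{i^\star}$, and absorb the second into the main term using the lower bound on $L$.

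Two small discrepancies are worth flagging. First, for the cross-validation error the paper does \emph{not} use a maximal inequality: it bounds $\E\big[\hat\mu^{(\hati)} - f(\tilde X^{(\hati)})\big] \leq \sum_{j=1}^p \E\big[(\hat\mu^{(j)} - f(\tilde X^{(j)}))^+\big]$ and integrates the subgaussian tail of each summand, picking up a factor $p$ rather than $\sqrt{\log p}$; this $p$ is precisely where the $\tfrac{\sqrt\pi}{2}\log_2 T$ in the final constant comes from. Your maximal-inequality route is valid and in fact tighter (it would yield $O(\sqrt{\log\log T})$ instead of $\log_2 T$), so your attribution of the $\log_2 T$ factor to this step is inconsistent with the tool you invoke. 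Second, you only argue that the hypothesis on $L$ ensures $i^\star \geq 1$; you also need $i^\star \leq p$, which fails when $L$ is very large (roughly $L \geq T^\alpha\sqrt{p}$). The paper handles this via a case distinction (their Lemma~\ref{lem:minimizingsimplereg}) and notes that in that regime the claimed bound is weaker than the trivial $\overline r_T \leq 1$; you should add that one-line case. Finally, Proposition~\ref{prop:discretizebound} as stated uses the specific $K^\star$, not an arbitrary dyadic $K_{i^\star}$, so you cannot invoke it as a black box --- but the underlying decomposition~\eqref{eq:decompositionregret} applies directly, which is what the paper does.
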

The $\tilde {\mathcal O }$ notation hides the $\log T$ factors, and the assumption that $T \geq 8$ is needed to ensure that $ T / (2 p)  = T / (2 \ceil{\log_2 T})\geq 1$: otherwise the algorithm itself is ill-defined. 

\begin{proof}
	Let $f\in \mathcal H(L, \alpha)$ denote a mean-payoff function. Once again we decompose the error of the algorithm into two sources. The simple regret is the sum of the regret of the best recommendation among the $p$ received, $\rmin$, and of a cross-validation error, $\rcv$,
	\begin{equation}\label{eq:simpledecomp}
	M(f) - \E[f(\tilde X _T)]
	=  \underbrace{\min_{i = 1, \dots, p} \bigg( M(f) - \E\Big[f\big(\tilde X^{(i)} \big)\Big]  \bigg)}_{\rmin} + \underbrace{\max_{i = 1, \dots, p} \bigg( \E\Big[f\big(\tilde X^{(i)} \big)\Big]  - \E\! \left[ f\big(\tilde X_T\big) \right] \bigg)}_{\rcv} \, .
	\end{equation}
	We now show that
	$
	\rcv \leq p^{3/2}\sqrt {\pi /(4T)} \, ,
	$ by detailing an argument that is sketched in the proof of Thm. 3 in \citet{shang2019General}.
	Denote by $\hat \mu^{(i)}$ the empirical reward associated to recommendation $i$, and $\hati  = \argmax \hat \mu^{(i)}$, so that $\tilde X_T = \tilde X^{(\hati)}$. Then for any fixed $i$, by the tower rule,
	\begin{equation}\label{eq:towerrulemui}
		\E\big[ \hat{\mu}^{(i)}\big] 
		= \E \! \left[  \E\! \left[\hat{\mu}^{(i)} \,  \Big\vert \,  \tilde X^{(i)} \right] \right] 
		= \E \! \left[f \big(\tilde X^{(i)}\big) \right] \, . 
	\end{equation}
	Therefore, by the above remarks, and since $\hat \mu^{(i)} \leq \hat \mu^{(\hati)}$,
	\begin{equation*}
		\E\! \left[ f\big(\tilde{X}^{(i)}\big) \right] - \E\!\left[ f\big(\tilde X_T\big)\right]
		=  \E\! \left[\hat{\mu}^{(i)} - f\big(\tilde X^{(\hati)}\big)\right]
		\leq \E\! \left[\hat{\mu}^{(\hati)} - f\big(\tilde X^{(\hati)}\big)\right] \, .
	\end{equation*}
	We have to be careful here, as $\hati$ is a random index that depends on the random variables $\hat \mu^{(i)}$'s:  we cannot apply directly the tower rule as in \eqref{eq:towerrulemui}. To deal with this, let us use an integrated union bound. Denote by $( \, \cdot\, )^+$ the positive part function, then
	\begin{equation*}
		\E\! \left[\hat{\mu}^{(\hati)} - f\big(\tilde X^{(\hati)}\big)\right]
		\leq \E\! \left[ \left( \hat{\mu}^{(\hati)} - f\big(\tilde X^{(\hati)}\big) \right)^{+} \right]
		\leq \sum_{j = 1}^{p}\E\! \left[ \left(\hat{\mu}^{(j)} - f\big(\tilde X^{(j)}\big) \right)^+ \right] \, ,
	\end{equation*}
	and we are back to handling empirical means of i.i.d. random variables. For each $j$, the reward given $\tilde X^{(j)}$ is $(1/4)$-subgaussian. Therefore, as $\hat \mu^{(i)}$ is the empirical mean of $n = \floor{T / (2p)}$ plays of the same arm $\tilde X^{(j)}$, this mean $\hat \mu^{(i)}$ is $(1 /(4n))$-subgaussian conditionally on $\tilde X^{(j)}$ and thus for all $\epsilon	> 0$
	\begin{equation*}
		\P \! \left[ \hat{\mu}^{(j)} - f\big(\tilde X^{(j)}\big) \geq \epsilon \right] 
		\leq e^{-2 n \epsilon^2} \, . 
	\end{equation*}
	Hence by integrating over $\epsilon \in [0, + \infty)$, using Fubini's theorem, a change of variable $x = \sqrt{4n} \epsilon$ (and using the fact that $\floor{T / (2p)} \geq T / (4p) $ as $T / (2p) \geq 1$):
	\begin{equation*}
	\begin{split}
		\E\! \left[ \left(\hat{\mu}^{(j)} - f\big(\tilde X^{(j)}\big) \right)^+ \right]
		&= \int_{0}^{+\infty }\! \! \P \! \left[ \hat{\mu}^{(j)} - f\big(\tilde X^{(j)}\big) \geq \epsilon \right] \! \!   \dint \epsilon \\
		&\leq \int_{0}^{+\infty } e^{-2n \epsilon^2} \!  \dint \epsilon 
		= \frac{1}{\sqrt{4n}}  \int_{0}^{+\infty} e^{- x^2/2} \dint x\\
		&= \sqrt{\frac{\pi}{8n}} 
		= \sqrt{\frac{\pi}{8 \floor{T / 2p}}} 
		\leq \sqrt{\frac{ \pi p}{4 \, T}}
	\end{split}
	\end{equation*}
	Putting back the pieces together, we have shown that for any $i$,
	\begin{equation*}
			\E\! \left[ f\big(\tilde{X}^{(i)}\big) \right] - \E\!\left[ f\big(\tilde X_T\big)\right]
			\leq \sum_{j =1}^{p} \sqrt{\frac{\pi p}{4 \, T}}
			= p^{3/2} \sqrt {\frac{\pi}{4\, T}}\, .
	\end{equation*}
	We deduce the same bound for $\rcv$ by taking the maximum over $i$.
	
	Let us now  bound $\rmin$. By Eq.~\eqref{eq:decompositionregret}, using the fact that $\floor{T/(2p)} \geq T/(4p)$ as $T / (2p) \geq 1$, for all $i$
	\begin{equation*}
	M(f) - \E\Big[f\big(\tilde X^{(i)} \big)\Big]  
	\leq \frac{L}{K_i^\alpha} + 18 \sqrt{\frac{4pK_i}{T}} \, .
	\end{equation*}
	We summarize a few calculations in the next lemma. These calculations come from the minimization over the $K_i$'s of the previous bound, with a case disjunction arising from the boundary cases. 
	\begin{lemma}\label{lem:minimizingsimplereg}
		At least one of the three following inequalities holds :
		\begin{equation*}
		L < 2^{\alpha + 1/2}\sqrt{\frac{p}{T}}
		\quad 
		\text{or} 
		\quad
		L \geq T^\alpha \, \sqrt{p}
		\end{equation*}
		or
		\begin{equation*}
		\min_{i = 1, \dots, p} \Bigg( \frac{L}{K_i^\alpha} + 36 \sqrt{\frac{pK_i}{T}} \Bigg) 
		\leq 53L^{1 / (2\alpha + 1)} \left(\frac{p}{T} \right)^{\alpha / (2\alpha + 1)}\, .
		\end{equation*}
	\end{lemma}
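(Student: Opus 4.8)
The plan is to compare the discrete grid $\{K_i = 2^i : 1 \leq i \leq p\}$ with the real number $K^\star$ that balances the two terms of the bound $L K^{-\alpha} + 36\sqrt{pK/T}$. Concretely, I would set
\[
K^\star = L^{2/(2\alpha + 1)}\, (T/p)^{1/(2\alpha + 1)} \, ,
\]
and record the elementary identity that at this value both terms coincide, their common value being
\[
\frac{L}{(K^\star)^{\alpha}} \;=\; \sqrt{\frac{p\, K^\star}{T}} \;=\; L^{1/(2\alpha + 1)}\left(\frac{p}{T}\right)^{\alpha/(2\alpha + 1)} \;=\; \rho \, .
\]
The two escape inequalities in the statement are precisely the conditions under which $K^\star$ leaves the admissible window $[2, 2^p]$: raising to the power $2\alpha + 1$ shows that $K^\star < 2$ is equivalent to $L < 2^{\alpha + 1/2}\sqrt{p/T}$, and that $K^\star \geq T$ is equivalent to $L \geq T^{\alpha}\sqrt{p}$. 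Since $2^p = 2^{\ceil{\log_2 T}} \geq T$, whenever neither of these two inequalities holds we are in the regime $2 \leq K^\star < T \leq 2^p$, which is the only case that needs a real argument.

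In that regime I would pick $i$ to be the smallest index in $\{1, \dots, p\}$ with $K_i = 2^i \geq K^\star$. Such an $i$ exists and lies in $\{1, \dots, p\}$ because $1 < K^\star \leq 2^p$, and by minimality $2^{i-1} < K^\star$, so that $K^\star \leq K_i < 2K^\star$. Using that $K \mapsto LK^{-\alpha}$ is nonincreasing and $K \mapsto \sqrt{pK/T}$ is nondecreasing, this single choice of $i$ yields
\[
\frac{L}{K_i^{\alpha}} + 36\sqrt{\frac{p\, K_i}{T}} \;\leq\; \frac{L}{(K^\star)^{\alpha}} + 36\sqrt{\frac{2 p\, K^\star}{T}} \;=\; \bigl(1 + 36\sqrt{2}\bigr)\,\rho \;\leq\; 53\,\rho \, ,
\]
since $1 + 36\sqrt{2} < 53$. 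As the left-hand side is an upper bound for the minimum appearing in the lemma, the third inequality follows, completing the case analysis.

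The argument is essentially bookkeeping, and I do not expect a genuine obstacle; the only point requiring care is making sure the rounded index $i$ stays in the legal range $\{1, \dots, p\}$. This is exactly where the two boundary inequalities come from: the first covers the case where $L$ is so small that even the coarsest nontrivial grid $K = 2$ overshoots the optimal mesh, and the second covers the case where $L$ is so large that even the finest available grid $K = 2^p \geq T$ is still too coarse. In both degenerate regimes no discretization in the list is well-calibrated, and the overall simple-regret theorem (Theorem~\ref{thm:simple}) either excludes them by hypothesis or disposes of them by the trivial bound $\overline r_T \leq 1$.
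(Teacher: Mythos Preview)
Your proof is correct and follows essentially the same approach as the paper: both arguments locate the index at which the decreasing term $L/K_i^\alpha$ and the increasing term $\sqrt{pK_i/T}$ balance, treat the two boundary cases (balance point below $K_1=2$ or above $K_p\geq T$) as the two escape inequalities, and in the interior case bound the sum at the balanced index by a constant times $\rho = L^{1/(2\alpha+1)}(p/T)^{\alpha/(2\alpha+1)}$. The only cosmetic difference is that you compute the continuous optimum $K^\star$ explicitly and round up, whereas the paper defines the crossover index $i^\star$ directly via the inequality $L/K_{i^\star}^\alpha \leq \sqrt{pK_{i^\star}/T} \leq 2^{\alpha+1/2} L/K_{i^\star}^\alpha$; the resulting constants ($1+36\sqrt 2$ versus $37\sqrt 2$) both fit under $53$.
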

	Let us consider these three cases separately. The first one is forbidden by the assumption that $L \geq 2^{\alpha  +1 / 2}\sqrt {p/T}$. In the second case, the function is so irregular that  the claimed bound becomes worse than $\overline r_T \leq 56 \, p^{1/2 + \alpha/ (2\alpha + 1)}$, which is weaker than the trivial bound $\overline r_T \leq 1$.
	
	Finally, in the third case, we may assume that $L \geq 2^{\alpha + 1/2}\sqrt{p / T} \geq \sqrt{p / T}$. Then we have 
	\begin{equation*}
	L^{1 / (2\alpha + 1)}
	\geq \left(\frac{p}{T}\right)^{1 / (2(2\alpha + 1))}
	=\left( \frac{p}{T}\right)^{1/2} \left(\frac{p}{T}\right)^{-\alpha/(2\alpha + 1)}\, ,
	\end{equation*}
	and thus $\sqrt{p/T} \leq L^{1 / (2\alpha + 1)} (p/T)^{\alpha/(2\alpha + 1)}$. By injecting the bound of Lemma~\ref{lem:minimizingsimplereg} and the bound on $\rcv$ into \eqref{eq:simpledecomp}:
	\begin{equation*}
	\overline r_T 
	\leq 53 L^{1 / (2\alpha + 1)} \left(\frac{p}{T} \right)^{\alpha / (2\alpha + 1)} + p \sqrt{\frac{\pi}{4}} \sqrt{\frac{p}{T}} 
	\leq  (53 +p  \sqrt{\pi/4})  L^{1 / (2\alpha + 1)} \left(\frac{p}{T} \right)^{\alpha / (2\alpha + 1)}
	\end{equation*}
	and the stated bound holds, since 
	$53 + p \sqrt{\pi / 4} 
	\leq 53 + (\log_2 T + 1 )\sqrt{ \pi /4} 
	\leq 54 + \sqrt{\pi/4} \log_2 T$.
\end{proof}
\begin{proof}[Proof of Lemma \ref{lem:minimizingsimplereg}]
	We upper bound the minimum by comparing the two quantities
	\begin{equation*}
		\frac{L}{K_i^\alpha} \quad \text{v.s.} \quad \sqrt{\frac{p K_i}{T}} \, .
	\end{equation*}
	As the first term is decreasing with $i$, and the second term is increasing with $i$, two extreme cases have te be dealt with. If the first term is always smaller than the second, i.e., even for $i = 1$, then:
	\begin{equation*}
	\frac{L}{2^\alpha} < \sqrt{\frac{p\,  2}{T}} \, .
	\end{equation*}
	This is the first case in the statement of the lemma. Otherwise, the first term might always be greater than the second one, i.e., even for $i =p$ and
	\begin{equation*}
	\frac{L}{2^{\alpha  p}} \geq \sqrt{\frac{p 2^p}{T}}
	\end{equation*}
	which is equivalent to
	\begin{equation*}
	L^2 \geq p \frac{2^{p (2\alpha + 1)}}{T}\, ,
	\end{equation*}
	hence, since $2^p \geq T$,
	\begin{equation*}
	L^2 \geq p T^{2\alpha}
	\end{equation*}
	which is exactly the second inequality of our statement.

	Otherwise, define $i^\star$ to be an index such that
	\begin{equation}\label{eq:istardef}
		\frac{L}{K_{i^\star - 1}^\alpha} \geq \sqrt{\frac{p K_{i^\star - 1}}{T}} 
		 \quad \text{and} \quad 
		 \frac{L}{K_{i^\star}^\alpha} \leq \sqrt{\frac{p K_{i^\star}}{T}}
	\end{equation}
	By the preceding discussion, $i^\star$ is well defined and $1 < i^\star \leq p$. Then by definition of $i^\star$ (the first equation in \eqref{eq:istardef})
	\begin{equation*}
		2^{\alpha  + 1/2}\frac{L}{K_{i^\star}^\alpha} \geq \sqrt{\frac{pK_{i^\star}}{T}} \, .
	\end{equation*}
	Hence, by squaring and regrouping the terms
	\begin{equation*}
		K_{i^\star}^{2\alpha + 1} \leq 2^{2 \alpha + 1} L^2 \frac{T}{p}
	\end{equation*}
	thus
	\begin{equation*}
		K_{i^\star} \leq 2 L^{2 / (\alpha + 1)} \left( \frac{T}{p}\right)^{1 /(2\alpha + 1)}
	\end{equation*}
	and 
	\begin{equation*}
		\sqrt{\frac{p K_{i^\star}}{T}} \leq \sqrt 2 L^{1/(2\alpha + 1)} \left(\frac{p}{T}\right)^{\alpha / (2\alpha + 1)}
	\end{equation*}
	and finally, recalling the second equation in \eqref{eq:istardef}
	\begin{equation*}
	\frac{L}{K_{i^\star}^\alpha} + 36 \sqrt{\frac{pK_{i^\star}}{T}}  
	\leq 37 \sqrt{\frac{p K_{i^\star}}{T}} 
	\leq 37 \sqrt 2 L^{1 / (2\alpha + 1)} \bigg( \frac{p}{T}\bigg)^{\alpha / (2\alpha + 1)}\, .
	\end{equation*}
\end{proof}

\subsection{Proof of our version of the lower bound of adaptation}\label{app:lowbounds}
Here we provide the full proof of our version of the lower bound of adaptation stated in Section \ref{subsec:lowerbounds}.

 Our statement differs from that of \citet{locatelli2018adaptivity} on some aspects. First, and most importantly, we include the dependence on the Lipschitz constants, and we do not consider margin regularity. We also remove a superfluous requirement on $B$, that $B \leq c \, T^{(\alpha + 1) / (2\alpha + 1)}$, which was just an artifact of the original proof. Furthermore we believe that the additional condition that $L \leq \mathcal O (T^{\alpha / 2})$ in our version was implicitely used in this original proof. Finally, the value of the constant differs,  partly because of the analysis, and partly because we consider $(1/4)$-subgaussian noise instead of $1$-subgaussian noise.

We managed to obtain these improvements thanks to a different proof technique. In the original proof, the authors compare the empirical likelihoods of different outcomes and use the Bretagnolle-Huber inequality. We choose to build the lower bound in a slightly different way (see \citet{garivier2018Explore}): we handle the changes of measure implicitly thanks to Pinsker's inequality (Lemma \ref{lem:dataproc}). Following \citet{lattimore2019book}, we also chose to be very precise in the definition of the bandit model, in order to make rigorous a few arguments that are often used implicitly in the literature on continuous bandits.

 The main argument of the proof, that is, the sets of functions considered, are already present in \citet{locatelli2018adaptivity}.

Before we start with the proof, let us state a technical tool. Denote by $\KL$ the Kullback-Leibler divergence. The next lemma is a generalized version of Pinsker's inequality, tailored to our needs.
\begin{lemma}\label{lem:dataproc}
	Let $\P$ and $\mathbb Q$ be two probability measures. For any random variable $Z \in [0, 1]$, 
	\begin{equation*}
	\abs{\E_{\mathbb P}[Z] - \E_{\mathbb Q} [Z] }  \leq \sqrt{\frac{\KL(\P, \mathbb Q)}{2}}
	\end{equation*}
\end{lemma}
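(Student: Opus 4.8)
The plan is to derive this from Pinsker's inequality in two moves. First I would reduce the left-hand side to the total variation distance between $\P$ and $\Q$. Since $Z$ takes values in $[0,1]$, writing the Hahn--Jordan decomposition of the signed measure $\P-\Q$ gives $\E_\P[Z]-\E_\Q[Z]=\int Z\,\mathrm d(\P-\Q)\leq \int Z\,\mathrm d(\P-\Q)^+\leq (\P-\Q)^+(\Omega)=\lVert\P-\Q\rVert_{\mathrm{TV}}$, where $\lVert\P-\Q\rVert_{\mathrm{TV}}=\sup_A\bigl(\P(A)-\Q(A)\bigr)$. Applying the same bound to $1-Z$, which also lies in $[0,1]$, yields the reverse inequality, hence $\abs{\E_\P[Z]-\E_\Q[Z]}\leq\lVert\P-\Q\rVert_{\mathrm{TV}}$. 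If $\P\not\ll\Q$ then $\KL(\P,\Q)=+\infty$ and the claim is vacuous, so from now on assume $\P\ll\Q$.

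It then remains to prove $\lVert\P-\Q\rVert_{\mathrm{TV}}\leq\sqrt{\KL(\P,\Q)/2}$. I would first reduce to the Bernoulli case: let $A=\{\mathrm d\P/\mathrm d\Q\geq 1\}$ and set $p=\P(A)$, $q=\Q(A)$, so that $\lVert\P-\Q\rVert_{\mathrm{TV}}=p-q$. By the data-processing inequality for the Kullback--Leibler divergence applied to the map $\omega\mapsto\mathbf 1_A(\omega)$, we get $\KL(\P,\Q)\geq d(p,q)$, where $d(p,q)=p\log(p/q)+(1-p)\log\bigl((1-p)/(1-q)\bigr)$ denotes the binary relative entropy. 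So it suffices to show the scalar inequality $d(p,q)\geq 2(p-q)^2$ on $[0,1]^2$.

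For the scalar inequality I would fix $q$ and set $g(p)=d(p,q)-2(p-q)^2$; then $g(q)=0$, $g'(q)=0$, and $g''(p)=\tfrac{1}{p(1-p)}-4\geq 0$ since $p(1-p)\leq 1/4$ on $[0,1]$, which forces $g\geq 0$. Chaining the three steps gives $\abs{\E_\P[Z]-\E_\Q[Z]}\leq\lVert\P-\Q\rVert_{\mathrm{TV}}\leq\sqrt{d(p,q)/2}\leq\sqrt{\KL(\P,\Q)/2}$. There is no real obstacle here — the lemma is a classical reformulation of Pinsker's inequality, included only for self-containedness; the only point requiring a little care is the measure-theoretic bookkeeping (the Hahn--Jordan decomposition and the data-processing reduction to a binary random variable), which is standard.
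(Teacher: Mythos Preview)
Your proof is correct but follows a different route from the paper's. The paper's argument is a three-line affair: it uses the tail-integral representation $\E_{\P}[Z]-\E_{\Q}[Z]=\int_0^1\bigl(\P[Z\geq z]-\Q[Z\geq z]\bigr)\dint z$, applies the classical Pinsker inequality for events to bound $\abs{\P[Z\geq z]-\Q[Z\geq z]}$ by $\sqrt{\KL(\P,\Q)/2}$ pointwise in $z$, and integrates. You instead first bound $\abs{\E_{\P}[Z]-\E_{\Q}[Z]}$ by the total variation distance via the Hahn--Jordan decomposition, and then prove the full total-variation form of Pinsker from scratch (data-processing reduction to a Bernoulli pair, followed by the scalar convexity computation $d(p,q)\geq 2(p-q)^2$). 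Your route is more self-contained---it does not invoke Pinsker as a black box---but correspondingly longer; the paper's is slicker once the event-wise inequality is taken for granted. Both approaches land on exactly the same bound, and both are standard.
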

\begin{proof}
	For $z \in [0, 1]$, by the classical version of Pinsker's inequality applied to the event $\{ Z \geq z\}$:
	\begin{equation*}
		\abs{\P[Z \geq z] - \mathbb{Q}[Z \geq z]} \leq \sqrt{\frac{\KL(\P, \mathbb Q)}{2}} \, . 
	\end{equation*}
	Therefore, by Fubini's theorem and the triangle inequality, and by integrating the preceding inequality:
	\begin{equation*}
		\abs{\E_{\mathbb P}[Z] - \E_{\mathbb Q} [Z] } 
		\! = \!  \abs{\int_{0}^{1} \!\!   \big(\P[Z \geq z] - \mathbb{Q}[Z \geq z]\big) \! \dint z} 
		\leq \! \int_{0}^{1} \!\! \abs{\P[Z \geq z] - \mathbb{Q}[Z \geq z]} \! \! \dint z \leq \sqrt{\frac{\KL(\P, \mathbb Q)}{2}}
	\end{equation*}
\end{proof}

\begin{proof}[Proof of the lower bound]
	For the sake of completeness, we recall in detail the construction of \citet{locatelli2018adaptivity}, with some minor simplifications that fit our setting. Fix regularity parameters $\ell, L, \alpha$ and $\gamma$ satisfying $\ell \leq L$ and $\gamma \geq \alpha$, so that $\mathcal 
	H(\ell, \gamma) \subset \mathcal H (L, \alpha)$ (remember the functions are defined on $\mathcal X = [0, 1]$).
	
	Fix $M \in [1/2, 1]$. Let $K \in \mathbb N\setminus\{0\}$ and $\Delta \in \R_{+}$ be some parameters of the construction whose values will be determined by the analysis. We define furthermore a partition of $[0,1]$ into $K+1$ sets, $H_0 = [1/2, 1]$ and $H_i  = [(i-1)/(2K), i / (2K)]$ for $1 \leq i \leq K$, along with their middle points $x_i \in H_i$. Finally, define the set of hypotheses $\phi_i$ for $i =0, \dots, K$ as follows
	\begin{equation}
	\phi_i(x) =
	\left\{
		\begin{split}
		&\max \big(M- \Delta, \, M- \Delta/2 - \ell \abs{x - x_0}^\gamma \big) &&\text{if} \; x \in  H_0 \, ,\\
		& \max \big(M- \Delta, \, M - L \abs{x- x_i}^\alpha \big) && \text{if} \; x \in H_i  \; \text{and} \; s \neq 0\, ,\\
		& M- \Delta \quad &&\text{otherwise.}
		\end{split}\right.
	\end{equation}

	\begin{figure}[h]
		\center
		{\includegraphics[width=10cm]{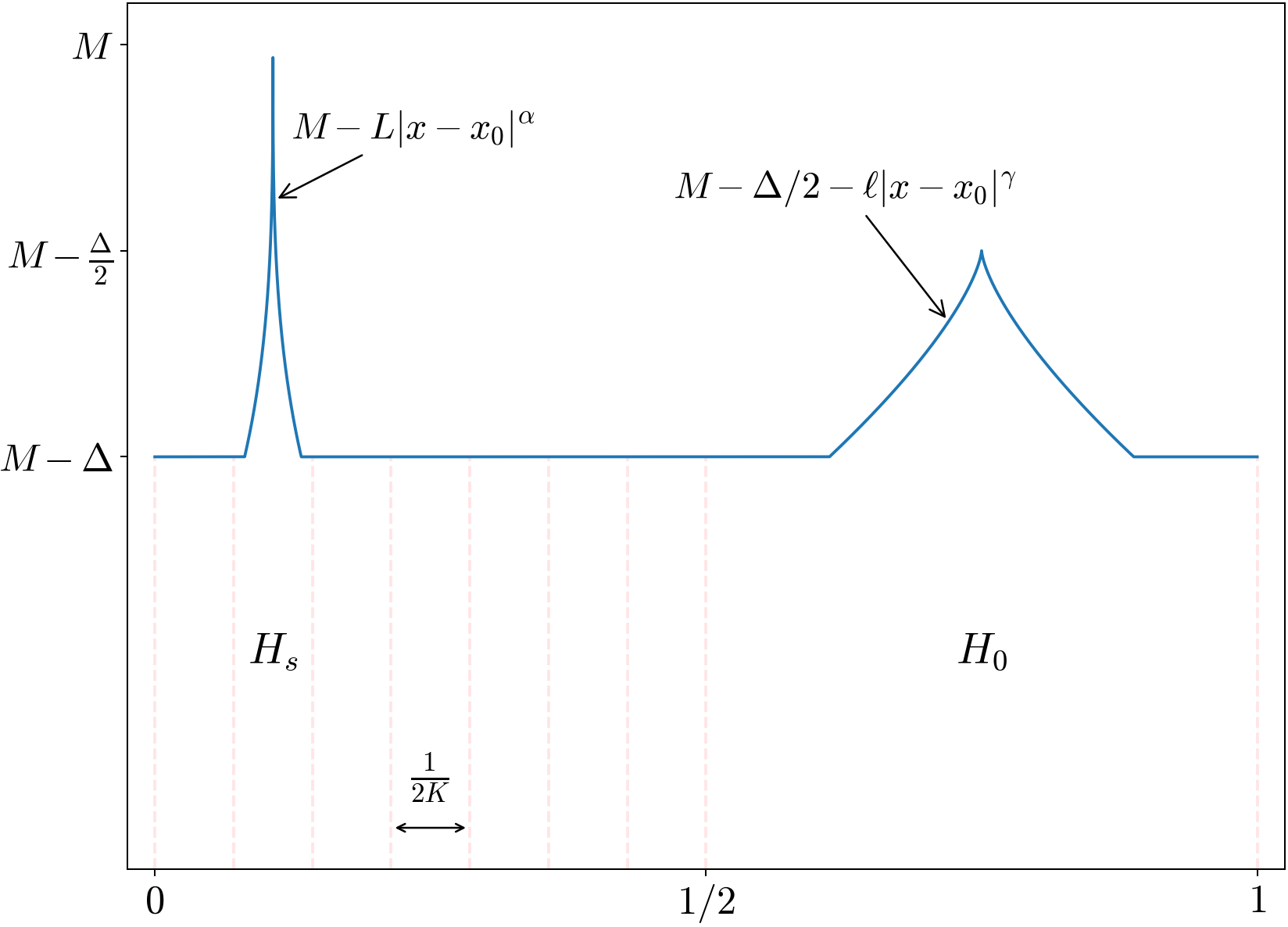}}
		\caption{Mean-payoff functions for the lower bound
		}\label{fig:hypplots}
	\end{figure}
	Figure~\ref{fig:hypplots} illustrates how the $\phi_i$'s are defined : for $1 \leq i \leq K $, the function $\phi_i$ displays a peak of size $\Delta$ and of low regularity $(L, \alpha)$, localized in $H_i$, and another peak of size $\Delta / 2$, of higher regularity $(\ell, \gamma)$ in $H_0$. The function $\phi_0$ only has the peak of size $\Delta / 2$ and regularity $(\ell, \gamma)$. We need to add requirements on the values of the parameters, to make sure the indeed functions belong to the appropriate regularity classes. These requirements are written in the following lemma, which we prove later.
	 \begin{lemma}\label{lem:globreg}
		If $(\Delta / L)^{1/\alpha} \leq 1 / (4K)$  then $\phi_0 \in \mathcal H(\ell, \gamma)$, and if $ (\Delta / (2\ell) )^{1/\gamma} \leq 1 / 4$  then $\phi_i \in \mathcal H(L, \alpha)$ for $i \geq 1$.
	\end{lemma}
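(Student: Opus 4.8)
The plan is to check directly that each candidate function satisfies the Hölder condition~\eqref{eq:holdercondition} around an appropriate maximizer, by splitting the domain into the ``peak'' cell where the function is non-constant and the complement where it equals the floor value $M-\Delta$. First I would treat $\phi_0$. Its maximum is $M - \Delta/2$, attained at $x_0$ (the midpoint of $H_0 = [1/2,1]$), since the only place $\phi_0$ exceeds $M-\Delta$ is inside $H_0$, where $\phi_0(x) = \max(M-\Delta,\, M - \Delta/2 - \ell|x-x_0|^\gamma)$. For $x$ such that the max is achieved by the second term, we have $M(\phi_0) - \phi_0(x) = \ell|x-x_0|^\gamma$, which is exactly the Hölder bound with constants $(\ell,\gamma)$; when the max is achieved by $M-\Delta$, we have $M(\phi_0) - \phi_0(x) = \Delta/2$, and this is $\leq \ell|x-x_0|^\gamma$ precisely because $\Delta/2 \leq \ell|x-x_0|^\gamma$ on that region (the floor ``kicks in'' only once $\ell|x-x_0|^\gamma \geq \Delta/2$). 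For $x \notin H_0$, $M(\phi_0)-\phi_0(x) = \Delta/2$, and I need $\Delta/2 \leq \ell|x-x_0|^\gamma$; since $x_0$ is the midpoint of $[1/2,1]$ and $x \leq 1/2$, we have $|x-x_0| \geq 1/4$, so it suffices that $\Delta/2 \leq \ell (1/4)^\gamma$, i.e. $(\Delta/(2\ell))^{1/\gamma} \leq 1/4$, which is the stated hypothesis. So $\phi_0 \in \mathcal H(\ell,\gamma)$.

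Next I would treat $\phi_i$ for $i \geq 1$, taking $x^\star = x_i$ as the claimed maximizer. One first checks $\phi_i(x_i) = M$ is indeed the global max: on $H_i$ the function is $\max(M-\Delta, M-L|x-x_i|^\alpha) \leq M$ with equality at $x_i$; on $H_0$ it is at most $M-\Delta/2 < M$; elsewhere it is $M-\Delta < M$. Then for $x \in H_i$: if the max is the second term, $M - \phi_i(x) = L|x-x_i|^\alpha$, matching the bound; if it is $M-\Delta$, then $M-\phi_i(x) = \Delta \leq L|x-x_i|^\alpha$ on that sub-region by the same ``floor threshold'' reasoning. For $x \in H_0$: $M - \phi_i(x) \leq \Delta$, and I need $\Delta \leq L|x-x_i|^\alpha$; the relevant worst case is when $x\in H_0$ is as close as possible to $x_i$, but since $H_i \subset [0,1/2]$ and $x_i$ is the midpoint of an interval of length $1/(2K)$, actually $\operatorname{dist}(x_i, H_0) \geq 1/2 - i/(2K) \geq 0$, so the binding constraint instead comes from the far end of $H_i$ itself or from the other cells $H_j$. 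For $x \in H_j$ with $j \neq 0, i$: $M - \phi_i(x) = \Delta \leq L|x-x_i|^\alpha$ needs $|x - x_i|$ bounded below; the minimal such distance over all $x$ not in the interior of $H_i$ is $\geq 1/(4K)$ (half the cell width, from $x_i$ to the boundary of $H_i$), so it suffices that $\Delta \leq L(1/(4K))^\alpha$, i.e. $(\Delta/L)^{1/\alpha} \leq 1/(4K)$, the stated hypothesis. I should double-check that this same bound also covers the $x \in H_0$ case, which it does since $\operatorname{dist}(x_i, H_0) \geq 1/(4K)$ as well. Hence $\phi_i \in \mathcal H(L,\alpha)$.

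The only mildly delicate point—and the one I would be most careful about—is bookkeeping the geometry of the partition so that every point outside the active peak is at distance at least $1/(4K)$ (resp.\ $1/4$) from the relevant maximizer, and confirming that the ``floor'' value is chosen so that $M-\Delta$ (resp.\ $M-\Delta/2$) never creates a violation at the transition point where the peak meets the plateau; this is automatic since the transition happens exactly when $L|x-x_i|^\alpha = \Delta$ (resp.\ $\ell|x-x_0|^\gamma = \Delta/2$), so continuity and monotonicity of $r\mapsto r^\alpha$ hand us the inequality on both sides. Everything else is a routine verification of~\eqref{eq:holdercondition}. I would also note in passing that the hypothesis $(\Delta/L)^{1/\alpha}\leq 1/(4K)$ simultaneously guarantees the peaks in distinct cells $H_i, H_j$ do not overlap, which is implicitly what makes the construction a valid family of alternatives.
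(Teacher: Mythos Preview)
Your argument is correct and is essentially the same as the paper's: verify condition~\eqref{eq:holdercondition} around the maximizer by splitting into the peak cell (where the Hölder bound is built into the definition) and its complement (where the floor value $M-\Delta$, resp.\ $M-\Delta/2$, is controlled because every such point lies at distance at least $1/(4K)$, resp.\ $1/4$, from the maximizer). Note that you have, like the paper's own proof, implicitly corrected the apparent swap of hypotheses in the lemma statement as printed: it is $(\Delta/L)^{1/\alpha}\leq 1/(4K)$ that yields $\phi_i\in\mathcal H(L,\alpha)$ for $i\geq 1$, and $(\Delta/(2\ell))^{1/\gamma}\leq 1/4$ that yields $\phi_0\in\mathcal H(\ell,\gamma)$.
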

	Fix a given algorithm. The idea of the proof of the lower bound is to use the fact that if the player has low regret, that is, less than $B$, when the mean-payoff function is $\phi_0 \in \mathcal H (L, \alpha)$, then she has to play in $H_0$ often. This in turn constrains the amount of exploration she can afford, and limits her ability to find the maximum when the mean-payoff functions is $\phi_i$ for $i > 0$.
	
	\paragraph{Canonical bandit model} In this paragraph, we build the necessary setting for a rigorous development. The continous action space gives rise to measurability issues, and one should be particularly careful when handling changes of measure as we do here. Following \citet[Chap. 4.7, 14 (Ex.11) and 15 (Ex.8) ]{lattimore2019book}, we build the canonical bandit model in order to apply the chain rule for Kullback-Leibler divergences rigorously. To our knowledge, this is seldom done carefully, the two notable exceptions being the above reference and \citet{garivier2018Explore}. We also use the notion of probability kernels in this paragraph; see \citet[Chap.~1 and 5]{kallenberg2006foundations} for a definition and properties.
	
	Define a sequence of measurable spaces $\Omega_{t} = \prod_{s = 1}^{t} \mathcal X \times \R$, together with their Borel $\sigma$-algebra (with the usual topology on $\mathcal X = [0, 1]$ and on $\R$). We call $h_t = (x_1, y_1, \dots, x_t, y_t) \in \Omega_t$ a history up to time $t$. By an abuse of notation, we consider that $\Omega_t \subset \Omega_{t'}$ when $t\leq t'$.
	
	An algorithm is a sequence $(K_t)_{1 \leq t \leq T}$ of (regular) probability kernels, with $K_t$ from $\Omega_{t-1}$ to $\mathcal X$, modelling the choice of the arm at time $t$. By an abuse of notation, the first kernel $K_1$ is an arbitrary measure on $\mathcal X$, the law of the first arm picked. Define for each $i$ another probability kernel modelling the reward obtained: $L_{i , t}$ from $\Omega_t \times \mathcal X$ to $\R$. We write it explicitly as :
	\begin{equation*}
		L_{i, t}\big( (x_1, y_1, \dots, x_t) , B \big) = \sqrt{\frac{2}{\pi }} \int_{B} e^{-2\big(x - \phi_i(x_t)\big)^2} \dint x
	\end{equation*}

	These kernels define probability laws $\P_{i,t} = L_{i, t} (K_{t} \P_{i, t-1}) $ over $\Omega_t$. Doing so, we ensured that under $\P_{i, t}$ the coordinate random variables $X_t : \Omega_t \to \mathcal X$ and $Y_t : \Omega_t \to \R$), defined as $X_t(x_1, \dots, x_t, y_t) = x_t$ and $Y_t(x_1, \dots, x_t, y_t) = y_t$ are such that given $X_t$, the reward $Y_t$ is distributed according to $\mathcal N\big(\phi_i(X_t), 1/4\big)$. Denote by $\E_i$ the expectation taken according to $\P_{i, t}$. We also index recall the pseudo-regret: $\overline R_{T, i} = T M(\phi_i)  - \E_i \! \left[\sum_{t = 1}^{T}\phi_i(X_t) \right]$.
	
	A rewriting of the chain rule for Kullback-Leibler divergence with our notation would be (see \citet[Exercise~11 Chap. 14]{lattimore2019book} for a proof)
	\begin{proposition*}[Chain rule]
		Let $\Omega$ and $\Omega'$ be measurable subsets of $\R^d$ equipped with their natural $\sigma$-algebra. Let $\P$ and $\Q$ be probability distributions defined over $\Omega$, and $K$ and $L$ be regular probability kernels from $\Omega$ to $\Omega' $ then
		\begin{equation*}
			\KL\big( K \P, L \Q \big) = \KL(\P, \Q) + \int_{\Omega} \KL \! \big(K(\omega, \cdot\,), L(\omega, \cdot\,)\big) \dint \P(\omega)
		\end{equation*}
	\end{proposition*}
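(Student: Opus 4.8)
The plan is to reduce the identity to a factorization of a single Radon--Nikodym derivative and then integrate, recognizing each piece as one of the two terms on the right. First I would dispose of the degenerate cases: if $\P \not\ll \Q$, or if $K(\omega,\cdot) \not\ll L(\omega,\cdot)$ on a set of positive $\P$-measure, then $K\P \not\ll L\Q$ and both sides equal $+\infty$ (the right-hand side because the corresponding summand is infinite). Hence I may assume $K\P \ll L\Q$, which forces $\P \ll \Q$ and $K(\omega,\cdot) \ll L(\omega,\cdot)$ for $\P$-almost every $\omega$. Throughout I would use the definition $\KL(\mu,\nu) = \int \log\big(\dint\mu / \dint\nu\big)\,\dint\mu$ valid when $\mu \ll \nu$.

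The heart of the argument is to show that the density factorizes. Writing $g(\omega) = \dint\P/\dint\Q\,(\omega)$ and letting $h(\omega,\omega')$ be a jointly measurable version of the density of $K(\omega,\cdot)$ with respect to $L(\omega,\cdot)$, the claim is that
\[
\frac{\dint (K\P)}{\dint (L\Q)}(\omega, \omega') = g(\omega)\, h(\omega,\omega') \quad (L\Q)\text{-a.e.}
\]
To verify this I would recall that, by construction, $K\P$ is the measure on $\Omega \times \Omega'$ with $(K\P)(\dint\omega\,\dint\omega') = K(\omega,\dint\omega')\,\P(\dint\omega)$, and likewise for $L\Q$. Then for any bounded measurable $\varphi$, substituting $\P(\dint\omega) = g(\omega)\,\Q(\dint\omega)$ and $K(\omega,\dint\omega') = h(\omega,\omega')\,L(\omega,\dint\omega')$ gives
\[
\int \varphi \,\dint(K\P) = \int\!\!\int \varphi(\omega,\omega')\, g(\omega)\, h(\omega,\omega')\, L(\omega,\dint\omega')\,\Q(\dint\omega) = \int \varphi\, g\,h \,\dint(L\Q),
\]
which identifies $g\,h$ as the density of $K\P$ against $L\Q$.

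With the factorization in hand I would take logarithms and integrate against $K\P$, splitting the resulting sum:
\[
\KL(K\P, L\Q) = \int \log g(\omega)\, \dint(K\P) + \int \log h(\omega,\omega')\, \dint(K\P).
\]
Since $K$ is a probability kernel, the first marginal of $K\P$ is exactly $\P$, so the first term equals $\int \log g\, \dint\P = \KL(\P,\Q)$. For the second term, disintegrating $K\P$ over its first coordinate (Fubini for kernels) yields
\[
\int \left( \int \log h(\omega,\omega')\, K(\omega,\dint\omega') \right) \P(\dint\omega) = \int \KL\big(K(\omega,\cdot),\, L(\omega,\cdot)\big)\, \P(\dint\omega),
\]
which is precisely the stated integral term, completing the identity.

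I expect the main obstacle to be the rigorous justification of the factorization together with the joint measurability of $h(\omega,\omega')$: one must select a version of the conditional densities that is measurable in both variables, so that the map $\omega \mapsto \KL(K(\omega,\cdot), L(\omega,\cdot))$ is itself measurable and the kernel Fubini theorem applies. This is exactly where the hypotheses that the kernels are \emph{regular} and that $\Omega, \Omega'$ are (standard Borel) measurable subsets of $\R^d$ are used. The remaining manipulations --- splitting the integral and recognizing each marginal --- are routine once integrability is granted, with the convention that the identity reads $+\infty = +\infty$ whenever absolute continuity fails.
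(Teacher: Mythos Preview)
Your argument is the standard and correct route to the chain rule: factorize the joint Radon--Nikodym derivative as $g(\omega)\,h(\omega,\omega')$, integrate $\log$ against $K\P$, and split into the two claimed terms via the marginal and the disintegration. The handling of the degenerate (non--absolutely-continuous) cases and your flagging of the joint-measurability issue for $h$ are both appropriate; the latter is exactly where the standard-Borel/regular-kernel hypotheses earn their keep.

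There is nothing to compare against in the paper itself: the proposition is stated without proof and the reader is referred to \citet[Chap.~14, Ex.~11]{lattimore2019book}. Your write-up is essentially what that exercise asks one to carry out, so it is consistent with the intended argument.
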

	The key assumptions are that $\Omega$ and $\Omega'$ are subspaces of $\R^d$, and that $K$ and $L$ satisfy measurability conditions, as they are regular kernels; these assumptions justify the heavy setting we introduced.
	
	Under this setting, we may call to the chain rule twice to see that for any $t$:
	\begin{equation*}
		\begin{split}
			\lefteqn{\KL\big( \P^t_0, \P^t_i \big)
			= \KL\big( L_{0, t} (K_{t} \P^{t-1}_0), L_{i, t} (K_{t} \P^{t-1}_i) \big)}  \\
			&= \KL\big(K_{t} \P^{t-1}_0, K_{t} \P^{t-1}_i \big) + \int_{\Omega_{t-1} \times \mathcal X} \KL\big(  L_{0, t}(h_{t-1}, x_t, \cdot\, ), L_{i, t}(h_{t-1}, x_t, \cdot \,)\big)  \dint K_t\P^{t-1}_0 (h_{t-1}, x_t) \\
			&=\KL\big( \P^{t-1}_0, \P^{t-1}_i \big)+ \int_{\Omega_{t-1} \times \mathcal X} \KL\big(  L_{0, t}(h_{t-1}, x_t, \cdot\, ), L_{i, t}(h_{t-1}, x_t, \cdot \,)\big)  \dint K_t\P^{t-1}_0 (h_{t-1}, x_t)  \\
			&  = \KL\big( \P^{t-1}_0, \P^{t-1}_i \big) + \int_{ \mathcal X} \KL \!\big(\mathcal N(\phi_0(x_t), 1/ 4), \mathcal N(\phi_i(x_t), 1/ 4) \big) \dint \P^{t-1}_0(x_t) \\
			& = \KL\big( \P^{t-1}_0, \P^{t-1}_i \big) + \E_0 \! \left[ \KL \!\big(\mathcal N(\phi_0(X_t), 1/ 4), \mathcal N(\phi_i(X_t), 1/ 4) \big) \right]
		\end{split}
	\end{equation*}
	where the penultimate equality comes from the fact that the density of the kernel $L_{i, t-1}$ depends only on the last coordinate $x_t$, and is exactly that of a gaussian variable.
	
	We obtain the $\KL$ decomposition by iterating $T$ times,
	\begin{equation*}
		\KL\big( \P^T_0, \P^T_i \big) =  \E_0 \! \left[ \sum_{t =1}^{T} \KL \!\big(\mathcal N(\phi_0(X_t), 1/ 4), \mathcal N(\phi_i(X_t), 1/ 4) \big) \right]  
	\end{equation*}
	
	\paragraph{Continuation of the proof} Let us also define $N_{H_i}(T) = \sum_{t = 1}^{T} \mathds 1_{\{X_t \in H_i\}}$ the number of times the algorithm selects an arm in $H_i$. The hypotheses $\phi_i$ were defined for the three following inequalities to hold. For all $i\geq 1$:
	\begin{equation}\label{eq:rts}
	\overline R_{T, i}
	\geq \frac{\Delta}{2} \left( T - \E_i\big[N_{H_i}(T)\big]\right)
	= \frac{T\Delta}{2} \left( 1 - \frac{\E_i\big[N_{H_i}(T)\big]}{T} \right) \, ,
	\end{equation}
	\begin{equation}\label{eq:reghyp0}
	\overline R_{T, 0}
	\geq \frac{\Delta}{2} \sum_{i= 1}^{K} \E_0\big[N_{H_i}(T)\big] \, ,
	\end{equation}
	and
	\begin{equation}\label{eq:klvalue}
	\begin{split}
	\KL(\P^T_0, \P^T_i) 
	&=  \E_0 \! \left[ \sum_{t =1}^{T} \KL \!\big(\mathcal N(\phi_0(X_t), 1/ 4), \mathcal N(\phi_i(X_t), 1/ 4) \big) \right]  \\
	&= \E_0\! \left[ \sum_{t = 1}^T 2\big(\phi_0(X_t) -  \phi_i(X_t) \big)^2  \right] 
	\leq 2 \,  \E_0\big[N_{H_i}(T)\big] \Delta^2\, .
	\end{split}
	\end{equation}
	The first two inequalities come from the fact that, under $\P_i$, the player incurs an instantaneous regret of less than $\Delta/ 2$ whenever she picks an arm outside the optimal cell $H_i$. For the third inequality, first apply the chain rule to compute the Kullback-Leibler divergence, then the inequality is a consequence of the fact that $\phi_i$ and $\phi_0$ differ only in $H_i$, and their difference is less than $\Delta$.

	We may now proceed with the calculations. By Lemma~\ref{lem:dataproc} applied to the random variable $N_{H_i}(T) / T$:
	\begin{equation}\label{eq:dataprocnhs}
	\frac{\E_i\big[N_{H_i}(T)\big]}{T}  
	\leq \frac{\E_0\big[N_{H_i}(T)\big]}{T}  + \sqrt{\frac{\KL(\P^T_0, \P^T_i) }{2}} \, .
	\end{equation}
	We will now show that 
	\begin{equation}\label{eq:lowerboundpreop}
		\frac{1}{K} \sum_{i= 1}^{K} \overline R_{T, i}  \geq \frac{T \Delta}{2} \left( 1 - \frac{1}{K} - \sqrt{\frac{\Delta \, \overline R_{T, 0}}{K}   }\right) \,.
	\end{equation}
	Indeed by (in order) averaging \eqref{eq:rts} over $i = 1, \dots, K$, using \eqref{eq:dataprocnhs}, the concavity of $\sqrt{\cdot}$ and \eqref{eq:klvalue}
	\begin{equation*}
	\begin{split}
	\frac{1}{K} \sum_{i= 1}^{K} \overline R_{T, i}
	&\geq \frac{T \Delta}{2}  \left( 1 - \frac{1}{K} \sum_{i=1}^{K}\frac{\E_i\big[N_{H_i}(T)\big]}{T} \right) \\
	&\geq \frac{T \Delta}{2}  \left( 1 - \frac{1}{K} \sum_{i=1}^{K}\frac{\E_0\big[N_{H_i}(T)\big]}{T} - \frac{1}{K} \sum_{i=1}^{K} \sqrt{\frac{\KL(\P^T_0, \P^T_i) }{2}} \right)  \\
	& \geq \frac{T \Delta}{2} \left( 1 - \frac{1}{K} - \sqrt{\frac{1}{2K} \sum_{i = 1}^{K} \KL(\P^T_0, \P^T_i) } \right)   \\
	& \geq \frac{T \Delta}{2} \left( 1 - \frac{1}{K} - \sqrt{\frac{\Delta^2}{K}  \sum_{i=1}^{K} \E_0\big[  N_{H_i}(T)\big] }\right) \, .
	\end{split}
	\end{equation*}
	This yields the claimed inequality \eqref{eq:lowerboundpreop} thanks to \eqref{eq:reghyp0}.
	
	Let us assume for now that $K \geq 2$ and $\phi_0 \in \mathcal H(\ell, \gamma)$. Then by the assumption on the algorithm, $\overline R_{T, 0} \leq B$, and therefore
	\begin{equation}\label{eq:lastrt}
	\frac{1}{K} \sum_{i= 1}^{K} \overline R_{T, i}  \geq \frac{T \Delta}{2} \left( \frac{1}{2} - \sqrt{\frac{\Delta B}{K}   }\right) \,.
	\end{equation}
	
	To optimize this bound, we take $\Delta$ as large as possible, while still ensuring that $\sqrt{\Delta B/ K} $ is small enough, e.g., less than $1/4$. Furthermore, we impose that the $\phi_i$'s belong to $\mathcal H (L, \alpha)$, i.e., by Lemma~\ref{lem:globreg}, that $(\Delta / L)^{1/\alpha} \leq 1 / (4K)$.  This leads to the choice
	\begin{equation*}
		\Delta = c\, L^{1/(\alpha +1)} B^{-\alpha / (\alpha + 1)}
		 \quad \text{and} 
		 \quad K 
		 = \floor{ \frac{1}{4} \left( \frac{\Delta}{L}\right)^{- 1/ \alpha}}
		 =  \floor{\frac{c^{-1/\alpha}}{4}(LB)^{1 / (\alpha + 1)}} \, ,
	\end{equation*}
	with $c = 1/128$. 

	\paragraph{Conclusion,  assuming that $K \geq 2$ and $\phi_0\in \mathcal H(\ell, \gamma)$}
	With this choice of parameters, we have by definition of $\Delta$,
	\begin{equation*}
	\Delta B   = c\,  (LB)^{1 / (\alpha + 1)} \, ,
	\end{equation*}
	and by definition of $K$, since $K \geq (c^{-1 / \alpha} / 8) (LB)^{1 / (\alpha + 1)}$,  
	\begin{equation*}
	\frac{\Delta B}{K} \leq 8 c^{1 + 1 / \alpha } 
	\end{equation*}
	hence, using $c^{1 / (2\alpha)} \leq 1$
	\begin{equation*}
	\sqrt{\frac{\Delta B}{K}} \leq 2\sqrt 2 c^{1/2 + 1/(2\alpha)} \leq 2\sqrt 2 \cdot 2^{- 7 / 2}  =  \frac{1}{4} \, .
	\end{equation*}
	With this in hand, we may now go back to inequality \eqref{eq:lastrt} to see that
	\begin{equation*}
	\frac{1}{K} \sum_{i= 1}^{K} \overline R_{T, i}
	\geq  \frac{T \Delta}{2} \left( \frac{1}{2}- \frac{1}{4} \right)
	\geq \frac{T \Delta}{8} = \frac{c}{8} \; T L^{1/(\alpha + 1 )}B^{-\alpha/(\alpha + 1)} \, .
	\end{equation*}
	By the defintion of $K$, it is always true that $(\Delta / L)^{1/\alpha} \leq 1 / (4K)$, and therefore, by Lemma \ref{lem:globreg}, all the $\phi_i$'s automatically belong to $\mathcal H(L, \alpha)$. Therefore, for all $i$, we have $\sup_{f \in \mathcal H (L, \alpha)} \overline R_T \geq \overline{R}_{T, i}$. Hence, recalling that $c = 1 / 128$,
	\begin{equation*}
	\sup_{f \in \mathcal H(L, \alpha)} \Rt 
	\geq
	\frac{1}{K} \sum_{i= 1}^{K} \overline R_{T, i}
	\geq 
	2^{-10} \,  T L^{1 / (\alpha + 1)} B^{- \alpha /(\alpha + 1)} \, .
	\end{equation*}

	\paragraph{Regularity conditions on the mean-payoff functions $\phi_i$}
	We now check that $K \geq 2$, and that $\phi_0 \in \mathcal H(\ell, \gamma)$. Let us first focus on $\phi_0$. By Lemma \ref{lem:globreg}, it is enough to impose that
	$(\Delta / (2\ell) )^{1/\gamma} \leq 1 / 4$, i.e., that 
	\begin{equation*}
		c \, L^{1 / (\alpha + 1)} B^{- \alpha  / (\alpha + 1)}  / (2 \ell) \leq  (1/4)^\gamma
	\end{equation*}
	that is,
	\begin{equation*}
		L^{1 / (\alpha + 1)} B^{- \alpha  / (\alpha + 1)}
		\leq  2 \ell (1/4)^\gamma  / c =  \ell \, 2^{1 -2\gamma} c^{-1}\, ,
	\end{equation*}
	i.e., when
	\begin{equation*}
		L B^{- \alpha} \leq  \ell^{1 + \alpha }\, 2^{ (1  -2 \gamma) (1 + \alpha)}  c^{- (1 + \alpha )} \, 
	\end{equation*}
	hence, replacing $c$ by its value $c = 2^{-7}$, the next condition is sufficient to ensure the regularity of the hypothesis:
	\begin{equation*}
		L 
		\leq \ell^{1 + \alpha} \,  B^\alpha \,c^{-(1 + \alpha)} \, 2^{(1+\alpha)(1 - 2 \gamma)} 
		= \ell^{1 + \alpha} \,  B^\alpha \, 2^{(1+\alpha)(8 - 2 \gamma)}  \, ,
	\end{equation*}
	which is one of the two conditions in the statement of the theorem.
	For the bound to be valid, we must also make sure that $K \geq 2$:
	\begin{equation*}
		\floor{\left( \frac{c^{-1 / \alpha}}{4} (LB)^{1/(\alpha + 1)}\right)} \geq 2 \, .
	\end{equation*}
	This condition is weaker than
	\begin{equation*}
		\frac{c^{-1 / \alpha}}{4} (LB)^{1/(\alpha + 1)} \geq 3
	\end{equation*}
	which is equivalent to 
	\begin{equation*}
		L \geq c^{(\alpha + 1)/\alpha} \, 12^{\alpha + 1} B^{-1}  = 2^{-7} \cdot  12 \cdot 2^{-6 / \alpha} 12^\alpha B^{-1}\, .
	\end{equation*}
	To ensure this, we require the stronger (but more readable) condition that $L\geq 2^{-3}12^\alpha B^{-1}$. 
\end{proof}

\begin{proof}[Proof of Lemma~\ref{lem:globreg}]
	A good look at Figure~\ref{fig:deltaplot} should convince the reader of the statement. We wish to make sure that the functions $\phi_i$'s satisfy \eqref{eq:holdercondition}, a Hölder condition around their maximum (and only around this maximum). Given the definition of the functions $\phi_i$, we simply have to check that there is no discontinuity at the boundary of the cell $H_i$. We write out the details for $i > 0$ to remove any doubt; the same analysis can be carried to check that $\phi_0 \in \mathcal H(\ell, \gamma)$. 
	
	\begin{figure}[h]
		\centering
		\begin{subfigure}{.5\textwidth}
			\centering
			\includegraphics[width=\linewidth]{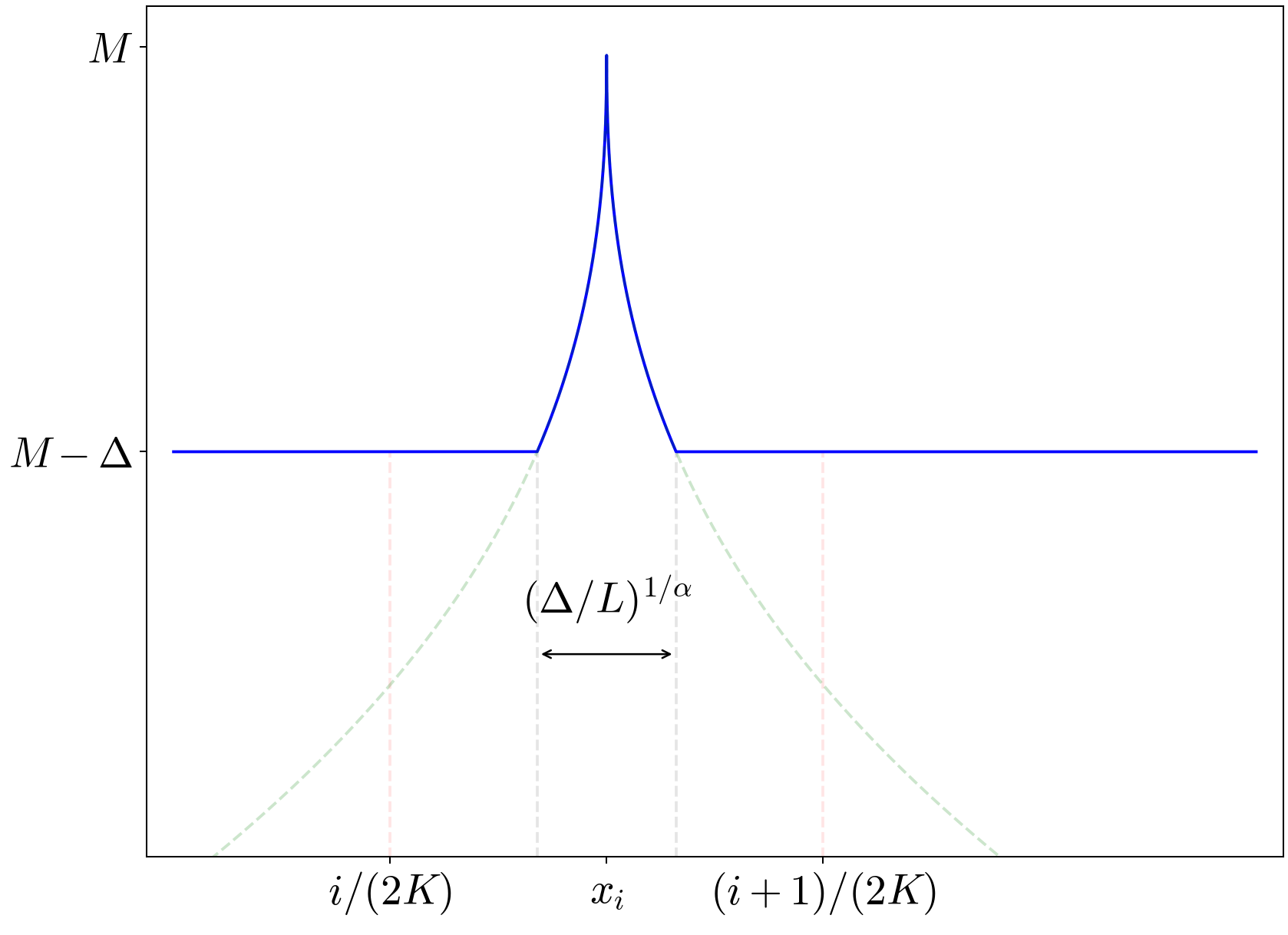}
			\caption{$(\Delta / L)^{1/\alpha} \leq 1 / (4K)$ hence $\phi_i \in \mathcal H(L, \alpha)$}
			\label{fig:sub1}
		\end{subfigure}%
		\begin{subfigure}{.5\textwidth}
			\centering
			\includegraphics[width=\linewidth]{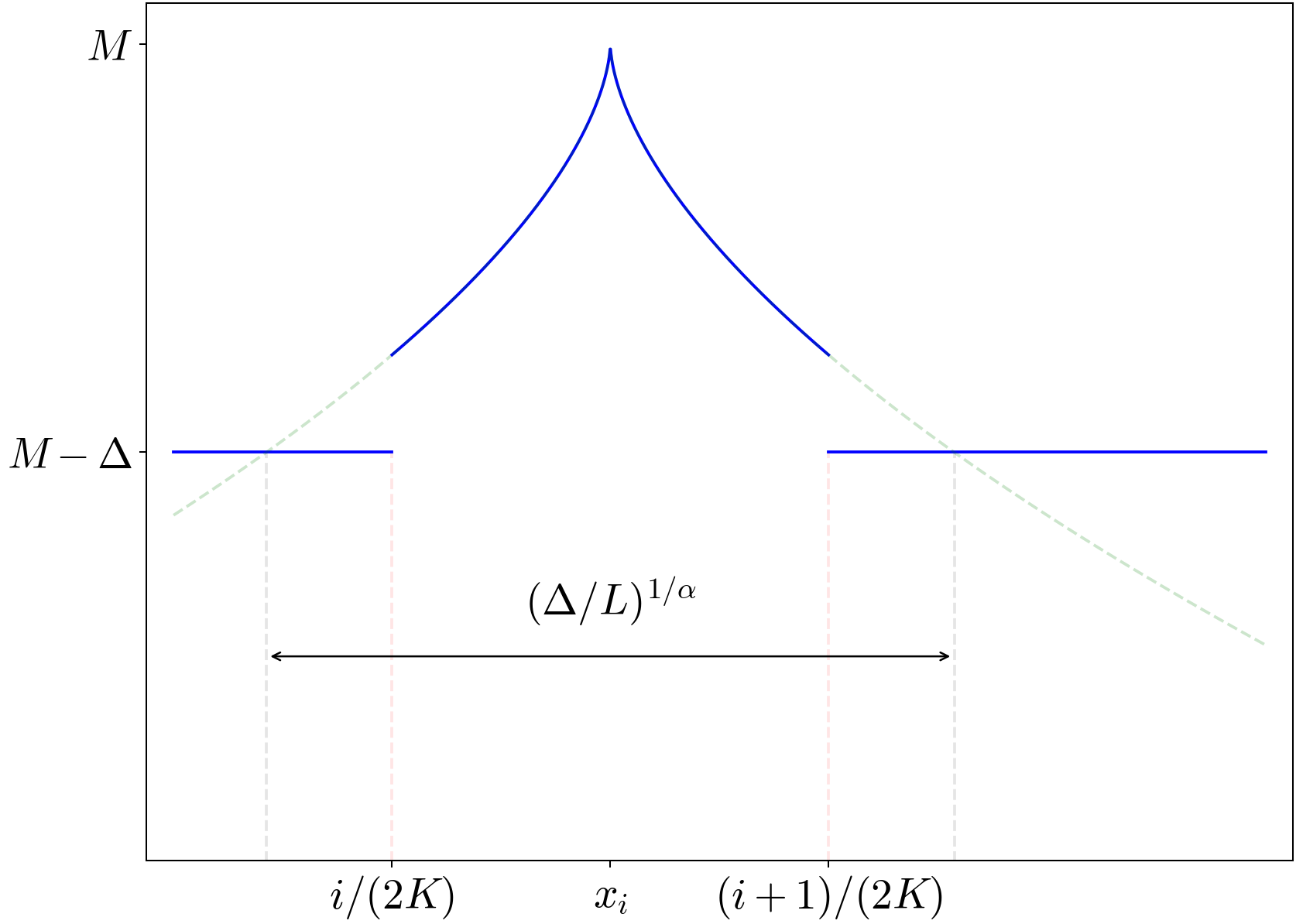}
			\caption{ $(\Delta / L)^{1/\alpha} > 1 / (4K)$ hence $\phi_i \notin \mathcal H(L, \alpha)$}
			\label{fig:sub2}
		\end{subfigure}
		\caption{$\phi_i$ is in $\mathcal H(L, \alpha) $ if it is everywhere above the green dotted curve $x \mapsto M - L \abs{x - x_i}^{\alpha}$, that is, if the cell $H_i$ has enough room to contain the whole peak of size $\Delta$}
		\label{fig:deltaplot}
	\end{figure}

	For $i > 0$, the function $\phi_i$ reaches its maximum at $x_i = (i - 1/2)/ 2K$, and the value of the maximum is $M$. Then for $x \in H_i$, by definition of $\phi_i$:
	\begin{equation*}
		\phi_i(x) = \max \big(M - \Delta, M - L \abs{x_i - x}^\alpha\big) \geq  M - L \abs{x - x_i}^\alpha
	\end{equation*}
	thus
	\begin{equation*}
		\phi_i(x_i) - \phi_i(x) = M - \phi_i(x) \leq L \abs{x_i - x}^\alpha \, ,
	\end{equation*}
	Now consider $x \notin H_i$. Assume, as in the statement of the lemma, that $1 / (4K) \geq (\Delta / L)^{1 / \alpha}$. If $x$ is outside of $H_i$, then since $H_i$ is of half-width $1/4K$, 
	\begin{equation}\label{eq:last}
		\abs{x_i-x} \geq \frac{1}{4K} \geq \left(\frac{\Delta}{L} \right)^{1/\alpha}
	\end{equation}
	and, by definition of $\phi_i$, for all $x$ (even for $x \in H_0$), 
	$
		\phi_i(x) \geq M - \Delta\, .
	$
	Therefore, by \eqref{eq:last}, 
	\begin{equation*}
		\phi_i(x_i) - \phi_i(x) \leq \Delta \leq L \abs{x_i - x}^{\alpha}\, .
	\end{equation*}
	For all values of $x$, the Hölder condition is satisfied and $\phi_i \in \mathcal H (L, \alpha)$.
	
	For $\phi_0$, the same calculations show that there is no jump at the boundary of $[1/2, 1]$, of half-width $1/4$, when the peak is of height $\Delta / 2$ and regularity $(\ell,\gamma)$ if $\big((\Delta / 2) / \ell)\big)^{1/ \gamma} \leq 1/ 4$.
\end{proof}

\small


\end{document}